\documentclass[11pt]{article}

\usepackage[bookmarks,colorlinks]{hyperref}
\usepackage{xcolor}
\definecolor{darkblue}{rgb}{0,0.22,0.66}
\hypersetup{citecolor=darkblue,linkcolor=darkblue,urlcolor=darkblue}

\usepackage[a4paper,margin=1in]{geometry}

\usepackage{amsthm}
\usepackage{amsmath,scalerel}
\usepackage{microtype}
\usepackage{needspace}

\usepackage[authoryear,sort]{natbib}
\bibpunct{(}{)}{;}{a}{,}{,} 

\usepackage{amsfonts,amssymb}
\usepackage[scr=rsfs]{mathalpha} 
\usepackage{mathtools}
\usepackage{bm}

\usepackage{xurl,xcolor}

\usepackage{enumitem}

\setlist[enumerate]{
  topsep=3pt,
  itemsep=3pt,
  parsep=0pt,
  partopsep=0pt,
  leftmargin=*,   %
  align=right,
  labelsep=0.600em,
  itemindent=0.00em , %
  labelindent=0.6em,
}

\setlist[itemize]{
  topsep=3pt,
  itemsep=3pt,
  parsep=0pt,
  partopsep=0pt,
  leftmargin=*,   %
  align=right,
  labelsep=0.600em,
  itemindent=0.00em , %
  labelindent=0.6em,
}

\usepackage{tabularx,multirow,array,booktabs}
\usepackage{colortbl}
\usepackage{graphicx}
\graphicspath{{figures/}}%
\usepackage{float}
\usepackage{caption}
\newtheoremstyle{paperplain}%
  {8pt plus 2pt minus 2pt}
  {8pt plus 2pt minus 2pt}
  {\itshape}
  {}
  {\bfseries}
  {.}
  {.5em}
  {}
\newtheoremstyle{paperdefinition}%
  {8pt plus 2pt minus 2pt}
  {8pt plus 2pt minus 2pt}
  {\normalfont}
  {}
  {\bfseries}
  {.}
  {.5em}
  {}
\newtheoremstyle{paperremark}%
  {6pt plus 2pt minus 1pt}
  {6pt plus 2pt minus 1pt}
  {\normalfont}
  {}
  {\itshape}
  {.}
  {.5em}
  {}

\theoremstyle{paperplain}
\newtheorem{theorem}{Theorem}[section]%

\newtheorem*{example*}{Example}
\newtheorem{corollary}[theorem]{Corollary} %
\newtheorem{lemma}[theorem]{Lemma}
\newtheorem*{lemma*}{Lemma}

\newtheorem{proposition}[theorem]{Proposition}
\theoremstyle{paperdefinition}
\newtheorem{definition}[theorem]{Definition}
\theoremstyle{paperremark}

\newtheorem{remark}[theorem]{Remark}
\newtheorem*{remark*}{Remark}
\usepackage{etoolbox,xparse}
\newcommand{\myMFabc}[4]{\expandafter#1\csname#3#4\endcsname{{#2{#4}}}}
\newcommand{\myMFcmd}[4]{\expandafter#1\csname#3#4\endcsname{{#2{\csname#4\endcsname}}}}

\newcommand{\MFabc}[3][\newcommand]{
    \def\doOld##1##2{\forcsvlist{\myMFabc{#1}{##1}{##2}}{#3}}
    \providecommand{\do}{do}
    \RenewDocumentCommand \do { >{\SplitList{,}} m } { \doOld##1 }
    \docsvlist{#2}
}
\newcommand{\MFcmd}[3][\newcommand]{
    \def\doOld##1##2{\forcsvlist{\myMFcmd{#1}{##1}{##2}}{#3}}
    \providecommand{\do}{do}
    \RenewDocumentCommand \do { >{\SplitList{,}} m } { \doOld##1 }
    \docsvlist{#2}
}

\usepackage{dsfont}
\newcommand{\bmzero}{{\bm{0}}}

\let\one\bbone
\MFabc{ {\mathfrak,frak}, }{T,u,U,o,O,E,m}
\MFabc{ {\mathbb, }, }{A,N,Z,R,C,Q}

\newcommand{\hatbm}[1]{\widehat{\bm{#1}}}
\newcommand{\tildebm}[1]{\widetilde{\bm{#1}}}
\newcommand{\bmcal}[1]{\bm{\mathcal{#1}}}
\newcommand{\caltilde}[1]{\mathcal{\widetilde{#1}}}
\newcommand{\calhat}[1]{\mathcal{\widehat{#1}}} 
\newcommand{\scrtilde}[1]{\widetilde{\mathscr{#1}\mspace{1mu}\mspace{-1mu}}}
\newcommand{\scrhat}[1]{\mathscr{\widehat{#1}\mspace{1mu}\mspace{-1mu}}}
\newcommand{\bmcalhat}[1]{\bm{\mathcal{\widehat{#1}}}}
\newcommand{\bmcaltilde}[1]{\bm{\mathcal{\widetilde{#1}}}}
\MFabc{ {\mathcal,cal}, {\mathscr,scr}, {\mathbb,bb}, {\bmcal,bmcal}, {\bmcal,calbm}, {\caltilde,caltilde}, {\caltilde,tildecal}, {\calhat,calhat}, {\calhat,hatcal}, {\scrtilde,scrtilde}, {\scrtilde,tildescr}, {\scrhat,scrhat}, {\scrhat,hatscr}, {\bmcalhat,bmcalhat}, {\bmcalhat,bmhatcal}, 
{\bmcalhat,calbmhat}, {\bmcalhat,calhatbm}, {\bmcalhat,hatbmcal}, {\bmcalhat,hatcalbm}, {\bmcaltilde,bmcaltilde}, {\bmcaltilde,bmtildecal}, {\bmcaltilde,calbmtilde}, {\bmcaltilde,caltildebm}, {\bmcaltilde,tildebmcal}, {\bmcaltilde,tildecalbm}}{A,B,C,D,E,F,G,H,I,J,K,L,M,N,O,P,Q,R,S,T,U,V,W,X,Y,Z}

\MFabc{{\bm,bm}, {\mathsf,sf}, {\widehat,hat}, {\widetilde,tilde}, {\hatbm,hatbm}, {\hatbm,bmhat}, {\tildebm,tildebm}, {\tildebm,bmtilde}}{a,b,c,d,e,f,g,h,i,j,k,l,m,n,o,p,q,r,s,t,u,v,w,x,y,z,A,B,C,D,E,F,G,H,I,J,K,L,M,N,O,P,Q,R,S,T,U,V,W,X,Y,Z}

\MFcmd{{\bm,bm}, {\widehat,hat}, {\widetilde,tilde}, {\tildebm, tildebm}, {\tildebm, bmtilde}, {\hatbm, hatbm}, {\hatbm, bmhat}}{alpha,beta,gamma,delta,epsilon,zeta,eta,theta,iota,kappa,lambda,mu,nu,xi,omicron,pi,rho,sigma,tau,upsilon,phi,chi,psi,omega,Alpha,Beta,Gamma,Delta,Epsilon,Zeta,Eta,Theta,Iota,Kappa,Lambda,Mu,Nu,Xi,Omicron,Pi,Rho,Sigma,Tau,Upsilon,Phi,Chi,Psi,Omega,varrho,varphi,vartheta,varepsilon,varsigma,ell}

\newcommand{\actdef}[1]{\expandafter\def\csname#1\endcsname{{\ensuremath{\mathtt{#1}}}}}
\forcsvlist{\actdef}{ReLU, LReLU, LeakyReLU, ELU, GELU, SiLU, Softplus, dGELU, dSiLU, dSoftplus, Tanh, Sigmoid, Arctan, Softsign, SRS, dSRS, Swish, dSwish, Mish, dMish, SELU, CELU, dSELU, Sin,SinLU, SinTU, PSinTU, sine, cosine, Sine, Cosine, EUAF}

\newlength{\myLength}

\def\ts{\textsf{\textup{T}}}
\let\ts\top

\newcommand{\proofstep}[2]{%
  \par\medskip
  \Needspace{4\baselineskip}%
  \noindent\textbf{Step #1: #2}\par\smallskip
}

\AtBeginEnvironment{theorem}{\Needspace{8\baselineskip}}
\AtBeginEnvironment{warrentheorem}{\Needspace{8\baselineskip}}
\AtBeginEnvironment{corollary}{\Needspace{7\baselineskip}}
\AtBeginEnvironment{lemma}{\Needspace{7\baselineskip}}
\AtBeginEnvironment{lemma*}{\Needspace{7\baselineskip}}
\AtBeginEnvironment{proposition}{\Needspace{8\baselineskip}}
\AtBeginEnvironment{definition}{\Needspace{7\baselineskip}}
\AtBeginEnvironment{example}{\Needspace{6\baselineskip}}
\AtBeginEnvironment{example*}{\Needspace{6\baselineskip}}
\AtBeginEnvironment{remark}{\Needspace{5\baselineskip}}
\AtBeginEnvironment{remark*}{\Needspace{5\baselineskip}}
\AtBeginEnvironment{proof}{\Needspace{4\baselineskip}}

\newenvironment{keywords}{\par \noindent\textbf{Keywords}:}{\par}

\usepackage[left,mathlines]{lineno}
\usepackage{refcount}

\definecolor{mylinenumbercolor}{HTML}{BEBEBE}

\makeatletter
\newcommand*\patchAmsMathEnvironmentForLineno[1]{%
	\expandafter\let\csname old#1\expandafter\endcsname\csname #1\endcsname
	\expandafter\let\csname oldend#1\expandafter\endcsname\csname end#1\endcsname
	\renewenvironment{#1}%
	{\linenomath\csname old#1\endcsname}%
	{\csname oldend#1\endcsname\endlinenomath}}%
\newcommand*\patchBothAmsMathEnvironmentsForLineno[1]{%
	\patchAmsMathEnvironmentForLineno{#1}%
	\patchAmsMathEnvironmentForLineno{#1*}}%
\patchBothAmsMathEnvironmentsForLineno{equation}%
\patchBothAmsMathEnvironmentsForLineno{align}%
\patchBothAmsMathEnvironmentsForLineno{flalign}%
\patchBothAmsMathEnvironmentsForLineno{alignat}%
\patchBothAmsMathEnvironmentsForLineno{gather}%
\patchBothAmsMathEnvironmentsForLineno{multline}%
\makeatother

\let\epsilon\varepsilon

\definecolor{mygray}{RGB}{230,230,230}
\definecolor{myorange}{HTML}{ff7f0e}

\let\cite\citep
\usepackage{doi}

\makeatletter
\long\def\@makefntext#1{\@setpar{\@@par\@tempdima \hsize 
		\advance\@tempdima-15pt\parshape \@ne 15pt \@tempdima}\par
	\parindent 2em\noindent \hbox to \z@{\hss{\textsuperscript{\@thefnmark}} \hfil}#1}
\newlength\aftertitskip     \newlength\beforetitskip
\newlength\interauthorskip  \newlength\aftermaketitskip
\def\maketitle{\par
	\begingroup
	\def\thefootnote{\color{black}\fnsymbol{footnote}}
	\def\@makefnmark{\hbox to 0pt{$^{\@thefnmark}$\hss}}
	\@maketitle \@thanks
	\endgroup
	\setcounter{footnote}{0}
	\let\maketitle\relax \let\@maketitle\relax
	\gdef\@thanks{}\gdef\@author{}\gdef\@title{}\let\thanks\relax}

\def\@startauthor{\noindent \normalsize\bf}
\def\@endauthor{}
\def\@starteditor{\noindent \small {\bf Editor:~}}
\def\@endeditor{\normalsize}
\def\@maketitle{\vbox{\hsize\textwidth
		\linewidth\hsize \vskip \beforetitskip
		{\begin{center} \Large\bf \@title \par \end{center}} \vskip \aftertitskip
		{\def\and{\unskip\enspace{\rm and}\enspace}%
			\def\addr{\small\it}%
            \def\email{\hfill\small\ttfamily}%
			\def\name{\normalsize\bf}%
			\def\AND{\@endauthor\rm\hss \vskip \interauthorskip \@startauthor}
			\@startauthor \@author \@endauthor}
		\vskip \aftermaketitskip
}}
\makeatother
\def\aff{{\mathsf{Aff}}}

\colorlet{blue}{black}

\DeclareMathOperator{\rank}{rank}
\DeclareMathOperator{\VCdim}{VCdim}
\DeclareMathOperator{\Pdim}{Pdim}
\DeclareMathOperator{\Arch}{Arch}
\DeclareMathOperator{\med}{med}
\DeclareMathOperator{\width}{width}
\DeclareMathOperator{\depth}{depth}

\hypersetup{
  pdftitle={Sharp Approximation Rates for Neural Networks with Affine Latent Parameterizations},
  pdfauthor={Shijun Zhang},
}

\title{Sharp Approximation Rates for Neural Networks
with Affine Latent Parameterizations}

\author{\name Shijun Zhang
  \email shijun.zhang@polyu.edu.hk\\
  \addr Department of Applied Mathematics\\
  \addr Hong Kong Polytechnic University
}

\begin{document}
\maketitle

\begin{abstract}
	Many parameter-efficient methods generate the parameters of a large neural
	network from a low-dimensional latent representation. Given an architecture
	$\Phi$ with $P_\Phi$ parameter slots, we write
	$\bm{\theta}_f=\mathcal{G}(\bm{\xi}_f)$, where
	$\mathcal{G}\colon\mathbb{R}^M\to\mathbb{R}^{P_\Phi}$ is a parameter generator
	and $\bm{\xi}_f\in\mathbb{R}^M$ is a latent representation of the target
	function $f$. The architecture $\Phi$ and the generator $\mathcal{G}$ are
	shared across the entire target class, while each target $f$ is represented by its
	own latent vector $\bm{\xi}_f$, with
	$\Phi_{\mathcal{G}(\bm{\xi}_f)}$ approximating $f$. This framework encompasses
	hypernetworks, low-dimensional parameterizations, parameter-efficient
	adaptation, and model compression. Understanding the tradeoff between the
	latent dimension $M$ and the network budget $P$ is therefore fundamental to
	characterizing the expressive efficiency of these methods. We study this
	tradeoff for affine generators and fully connected ReLU architectures. More
	precisely, optimizing jointly over architectures $\Phi$ satisfying
	$P_\Phi\leq P$ and affine generators $\mathcal{G}:\mathbb{R}^M\to \mathbb{R}^{P_\Phi}$, we prove that the optimal
	worst-case uniform approximation error over the unit ball of
	$\alpha$-H\"older functions on $[0,1]^d$, where $0<\alpha\leq1$, has the sharp
	order
	$
	\bigl(P\min\{M,P\}\bigr)^{-\alpha/d}.
	$
	In particular, our result shows that even a fixed-dimensional latent
	space suffices to achieve vanishing approximation error as the network budget
	increases.
\end{abstract}

\begin{keywords}
	affine latent parameterization,
	parameter-efficient model,
	ReLU network approximation,
	sharp minimax rate,
	pseudo-dimension
\end{keywords}

\section{Introduction}\label{sec:introduction}

Modern neural networks are often heavily overparameterized: the number of
weights used at inference can far exceed the number of degrees of freedom
needed to select a useful model for a particular task.  If the full parameter
vector can be generated from a much smaller latent vector, one can reduce
the storage required for each task and the dimension of the optimization
problem without shrinking the deployed architecture.  This separation is
especially natural when a single generator and network architecture are reused
across many targets, tasks, or environments.

A substantial body of empirical work indicates that this mechanism can be
effective.  Particularly direct examples include training in a fixed parameter
subspace and reconstructing a full parameter vector from a seeded linear
expansion \cite{li2018intrinsic,nooralinejad2023pranc}.  Related methods based
on weight tying, structured transforms, hypernetworks, and low-rank updates are
reviewed in Section~\ref{sec:context}.  Collectively, these studies suggest that
the relevant complexity is not described by a single parameter count: one must
distinguish the amount of target-dependent information from the size and
complexity of the fixed mechanism that decodes that information.

To formalize this separation, consider an architecture $\Phi$ with $P_\Phi$
weight and bias slots and a parameter generator
$\calG:\R^M\to\R^{P_\Phi}$.  The architecture and generator are fixed for an
entire target class.  For each target $f$, only a latent vector
$\bmxi_f\in\R^M$ is selected, and the resulting function is produced by
\begin{equation*}
 \bmxi_f\in\R^M
 \xmapsto{\quad  \calG\quad }
 \bmtheta_f=\calG(\bmxi_f)\in\R^{P_\Phi}
 \xmapsto{\quad  \Phi\quad }
 \bigl[\bmx\mapsto\Phi_{\bmtheta_f}(\bmx)\bigr].
\end{equation*}
Here $\Phi_{\bmtheta_f}$ denotes the function realized by $\Phi$ with complete
parameter vector $\bmtheta_f$, and $\bmx$ denotes its input.
We call $\R^M$ the \emph{latent parameter space} and $\bmxi_f$ the
target-dependent latent vector.  Because $\bmxi_f$ is the only quantity chosen
separately for each target, it is also the only target-dependent trainable
vector in the model.  Its $M$ coordinates carry information about the target,
whereas the $P_\Phi$ slots determine the size of the deployed decoder.
Counting only $M$ ignores the decoding mechanism, while counting only $P_\Phi$
treats all deployed parameters as if they varied independently with the
target.  A meaningful approximation theory must therefore account for both
resources and keep their quantifiers separate.

This accounting of two resources is meaningful only if the generator is
restricted.  If $\calG$ is arbitrary and its complexity is not charged, then
the latent dimension $M$ alone imposes essentially no expressive constraint.
Indeed, because $\R$ and $\R^{P_\Phi}$ have the same cardinality, an
unrestricted generator with $M=1$ can be chosen to map onto the full parameter
space.  Its generated family then coincides with the family obtained by
varying all $P_\Phi$ network parameters independently.
Continuity alone does not remove this degeneracy.  A continuous surjection
from $\R$ onto $\R^{P_\Phi}$ can be formed, for example, by joining closed
Peano curves whose images cover successively larger parameter cubes.  When
$P_\Phi>1$, no such space-filling map can be locally Lipschitz, because the
image of a locally Lipschitz map from $\R$ has Hausdorff dimension at most one.
For a finite target collection, even smoothness is insufficient: coordinatewise
Lagrange interpolation produces a polynomial curve through any prescribed
finite set of realizing parameter vectors.  Thus neither continuity for a
generator shared across a class nor smoothness for a finite task collection
makes the latent dimension alone a meaningful complexity measure.  A theory
of nonlinear generators must also control, for example, the description or
parameter complexity of $\calG$, its computational size, regularity,
stability, and the precision of the latent code.

We therefore study the simplest structured specialization of this framework:
an affine generator,
\begin{equation}\label{eq:intro-affine-map}
 \calG=\calA,
 \qquad
 \calA(\bmxi)=\bmA\bmxi+\bma.
\end{equation}
Here $\bmA\in\R^{P_\Phi\times M}$ and $\bma\in\R^{P_\Phi}$ are fixed across
the target class.
After the homogeneous lifting
$\widehat{\bmxi}:=(\bmxi,1)\in\R^{M+1}$, the map is linear in
$\widehat{\bmxi}$, since
$\calA(\bmxi)=[\bmA,\bma]\widehat{\bmxi}$.  This model includes training in a
fixed subspace, fixed weight sharing, and affine expansions based on seeds or
fixed transforms.  Although hypernetworks, Mapping Networks, and many low-rank
adaptation schemes are nonlinear in their latent variables, they motivate the
same question concerning both budgets.  The affine case is already nontrivial
and imposes a transparent geometric restriction: all generated parameter vectors
lie in a single affine subspace of dimension at most
$\min\{M,P_\Phi\}$.  Any expressivity beyond this affine dimension must
therefore be supplied by the fixed ReLU decoder.  Throughout the paper, an
\emph{affine latent parameterization} means precisely the shared map in
\eqref{eq:intro-affine-map}, which takes a latent vector to the
complete parameter vector of the realized network.  Figure~\ref{fig:parameter-flow}
illustrates this setup.

\begin{figure}[ht]
\centering
\includegraphics[width=0.850\textwidth]{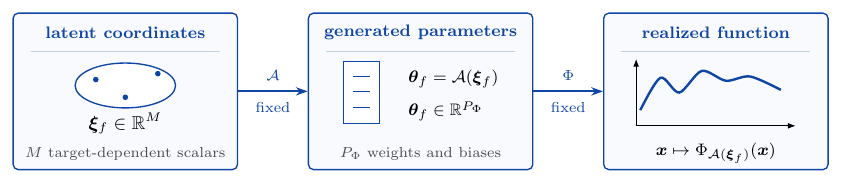}
\caption{An affine latent parameterization.  Only the $M$-dimensional latent
vector $\bmxi_f$ changes with the target.  The shared affine map $\calA$
generates all $P_\Phi$ network parameters, and the shared architecture $\Phi$
realizes the approximation.}
\label{fig:parameter-flow}
\end{figure}

The closest predecessors in approximation theory treat the two endpoints of
our resource model separately.  Approximation results for very deep networks
control the complete network size but allow every parameter to depend on the
target
\cite{yarotsky18a,shijun:optimal:rate:in:width:and:depth}.  At the opposite
endpoint, intrinsic parameter approximation controls the number of scalars
that depend on the target while allowing the fixed decoder to grow without an
independent parameter budget; repeated composition of a shared ReLU block
provides a complementary fixed-dimensional parameter-sharing construction
\cite{shijun:intrinsic:parameters,shijun:RCNet}.
Neither endpoint simultaneously restricts the latent dimension and the number
of generated parameter slots.  Our goal is to connect these resource models
and determine the sharp joint dependence on the two budgets.  This leads to
the central question:
\begin{quote}
For a single affine latent parameterization shared by a whole function class,
how does the optimal worst-case approximation error depend jointly on the
latent dimension $M$ and the budget $P$ for all generated weight and bias slots?
\end{quote}

Our main result gives a sharp answer in both resources, uniformly over the
target class.  We fix the pair $(\Phi,\calA)$ before the target is selected,
count every dense weight and bias slot of the generated network, and allow only
the latent vector $\bmxi_f$ to depend on $f$.  For the unit ball of $\alpha$-H\"older
functions on $[0,1]^d$, where $0<\alpha\le1$, once $P$ exceeds the explicit
dimension-dependent threshold in Theorem~\ref{thm:upper} and
$\min\{M,P\}\ge4$, the optimal worst-case error is, up to constants depending
only on $d$ and $\alpha$,
\[
 [P\min\{M,P\}]^{-\alpha/d} .
\]
Thus, when $M\le P$, the two resources multiply and the rate is
$(PM)^{-\alpha/d}$.  When $M\ge P$, the affine image is limited by the ambient
parameter dimension, and the rate saturates at $P^{-2\alpha/d}$.  The upper
bound is achieved by fully connected ReLU networks whose width depends only on
$d$ and whose depth grows at most linearly with $P$.  The matching lower bound
holds for every admissible fully connected ReLU architecture with at most $P$
parameter slots, without any further restriction on width or depth.  Thus the
result characterizes the best use of one affine latent space shared across the
target class, rather than the performance of a particular construction.
Figure~\ref{fig:PM-regimes} summarizes the two regimes.

\begin{figure}[ht]
\centering
\includegraphics[width=0.60\textwidth]{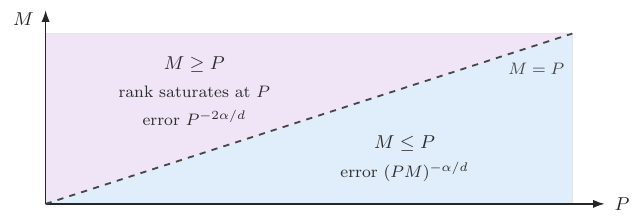}
\caption{The sharp joint law in $P$ and $M$ in the nondegenerate regime of
Theorem~\ref{thm:upper}.  For $M\le P$, the effective scale is
$PM$; in particular, each fixed $M\ge4$ yields the rate $P^{-\alpha/d}$.  For
$M\ge P$, the affine image is limited by the ambient parameter dimension, and
the rate saturates at $P^{-2\alpha/d}$.}
\label{fig:PM-regimes}
\end{figure}

The regime of fixed latent dimension is especially noteworthy.  If
$M=M_0\ge4$ is independent of $P$, then, whenever the hypotheses of
Theorem~\ref{thm:upper} hold and $P\ge M_0$,
\[
 [P\min\{M_0,P\}]^{-\alpha/d}
 =M_0^{-\alpha/d}P^{-\alpha/d}.
\]
Consequently, a constant number of target-dependent coordinates is sufficient
for the worst-case error to converge algebraically to zero as the shared
decoder grows.  This decay is driven entirely by growth of the decoder budget,
which is shared across targets, rather than by growth of the trainable
dimension.
Although this rate is slower than the saturated rate $P^{-2\alpha/d}$ available
when $M\ge P$, it shows that a genuinely low-dimensional affine latent space
can support substantial and provably optimal approximation power.

The main contributions can be summarized as follows.
\begin{enumerate}[label={\textup{(\roman*)}}]
  \item We introduce a minimax formulation uniform over the target class in
  which the architecture and affine latent map are fixed before the target is
  selected, and the latent dimension $M$ and the parameter budget $P$ for the
  generated slots are accounted for separately.
  This formulation isolates target-dependent information from shared decoding
  complexity.

  \item We construct fully connected ReLU networks of fixed width depending
  only on the dimension.  Under the stated nondegenerate conditions, these
  networks attain the rate
  $[P\min\{M,P\}]^{-\alpha/d}$.  The construction tracks the depth, every dense
  parameter slot, and all dimension-dependent constants.  More generally, it
  yields a modulus-of-continuity estimate for arbitrary continuous targets and
  remains effective even when $M$ is fixed.

  \item We prove a matching lower bound over all fully connected architectures
  with input dimension $d$ and at most $P$ dense parameter slots, without
  any additional width or depth restriction.  The main ingredient is a
  pseudo-dimension estimate for ReLU networks with affine parameter tying,
  derived from affine rank reduction and polynomial sign pattern counting; a
  H\"older bump packing argument then converts this capacity bound into the
  matching approximation lower bound.
\end{enumerate}

The upper and lower bounds rely on complementary ingredients.  The upper bound
combines a serialized affine spline loader, fixed binary prefix extraction, a
spatial address network, and median boundary repair.  For the converse,
substituting
$\bmtheta=\calA(\bmxi)$ reduces the effective number of variables to
the rank of $\bmA$, which is at most $\min\{M,P_\Phi\}$.  A layerwise
semialgebraic argument then
bounds the pseudo-dimension, and localized H\"older bumps convert this capacity
bound into the matching approximation lower bound.  The constructive uniform
upper bound also implies the same estimate in $L^p([0,1]^d)$ for every
$1\le p<\infty$.  The matching lower bound proved here is for the uniform
norm.
The stated rate relies on exact real arithmetic: a latent coordinate may
contain a long binary stream, and the selection $f\mapsto\bmxi_f$ is
discontinuous.  The theorem therefore concerns representation power rather
than numerical stability or optimization;
Section~\ref{sec:exact-real-scope} discusses this scope in detail.

The remainder of the paper is organized as follows.
Section~\ref{sec:context} reviews empirical evidence for affine and nonlinear
latent parameterizations, relates the problem to existing approximation and
capacity theory, and explains the interpretation and limitations of the
joint law in $P$ and $M$.  Section~\ref{sec:model} introduces the network model,
the parameter budget classes, and the main upper and lower bounds.
Section~\ref{sec:upper-proof} gives the constructive proof of the upper bound
for continuous functions, from the fixed-width modules through the final
choice of the integer budgets.  Section~\ref{sec:lower-proof} proves the
matching H\"older lower bound through pseudo-dimension and bump packing
arguments.  Finally, Section~\ref{sec:conclusion} summarizes the conclusions
and discusses directions for further work.

\section{Related work, interpretation, and scope}\label{sec:context}

This section reviews the most closely related empirical and theoretical work,
then interprets the joint law in $P$ and $M$ and clarifies its scope.
Sections~\ref{sec:affine-literature} and
\ref{sec:nonlinear-generators} distinguish affine parameterizations from
their nonlinear relatives.  Section~\ref{sec:approximation-coding-capacity}
places the result within approximation and capacity theory, while
Sections~\ref{sec:storage-decoding} through \ref{sec:exact-real-scope} discuss
resource accounting, optimization geometry, and the assumptions on exact real
arithmetic.  None of this background is used as an assumption in the proofs.

\subsection{Affine latent parameterizations: parameter prediction,
weight tying, and subspace training}
\label{sec:affine-literature}

The examples in this subsection fall into three broad categories: parameter
prediction and weight tying, training in a fixed linear subspace, and training
in a subspace learned jointly for a single task.  For our theorem, the relevant
distinction is whether the same affine map is fixed before the target function
is selected.

\subsubsection*{Parameter prediction and weight tying}
An early parameter-prediction study learned a small subset of the weights and
reconstructed the remainder from linear predictors; in its best reported case,
more than $95\%$ of the weights were predicted without loss of accuracy
\cite{denil2013predicting}.  This finding supports substantial redundancy in
trained weight tensors, but the predictors and selected coordinates are
task-dependent, and the work gives no uniform approximation guarantee over a
function class.

HashedNets impose a more rigid form of sharing: a signed hash maps many virtual
connections to a smaller collection of trainable scalars
\cite{chen2015hashed}.  Once the virtual weights are enumerated, this
construction is exactly a sparse affine map, each of whose rows contains one
signed nonzero entry.  Existing theory addresses approximation by random
linear sketches on well-conditioned low-dimensional input manifolds and local
recovery for a one-layer hashed model \cite{lin2019hashingtheory}; it does not
give the H\"older-class joint minimax rate considered here.
Frequency-sensitive hashing first transforms convolutional filters and then
shares their spectral coefficients \cite{chen2016freshnets}.  Thus it is close
to the present affine viewpoint, but its objective is empirical compression of
specific convolutional models rather than a worst-case rate for a prescribed
function class.

Beyond direct prediction and tying, structured expansions can reduce both
storage and multiplication cost.  Fastfood supplies a fixed structured random
transform built from diagonal matrices, permutations, and Hadamard transforms
\cite{le2013fastfood}.  Adaptive Deep Fried layers instead learn products of
several diagonal factors and are generally nonlinear jointly in those factors
\cite{yang2015deepfried}.  Discrete cosine transform (DCT) layers with sparse
corrections add a fixed dense transform to a very sparse trainable component
and retain useful accuracy even at extreme trainable sparsities
\cite{price2021subspaceoffset}.  FourierFT reconstructs a dense update from
selected Fourier coefficients \cite{gao2024fourierft}.  Once the transform and
coefficient locations are fixed, the latter two constructions are affine
parameter maps.  Their transforms, sparsity patterns, and statistical tasks
are prescribed differently, however, and none proves a joint minimax law in
the number of adjustable coefficients and the number of realized network
slots.

\subsubsection*{Fixed and learned subspaces}
Training in fixed random subspaces optimizes
$\bmtheta=\bmW_{\mathrm{sub}}\bmxi+\bmb_{\mathrm{sub}}$ and defines the
empirical intrinsic dimension as the smallest code dimension that attains a
chosen fraction of the performance of the full model \cite{li2018intrinsic}.
Experiments show that networks
with hundreds of thousands of weights can sometimes be optimized through only
hundreds or thousands of coordinates.  Subsequent experiments demonstrate
that a single reused random basis can impair optimization, while redrawing the
basis at every step and using separate projections for different modules can
improve both accuracy and speed \cite{gressmann2020random}.  Because the basis
changes during training, this method does not use a single fixed map $\calA$
and therefore lies outside our minimax model.  Structured Fastfood projections
for fine-tuning language models also reveal small intrinsic dimensions
specific to each task and suggest that larger pretrained models can require
fewer effective coordinates \cite{aghajanyan2021intrinsic}.

PRANC uses the exact linear expansion
$\bmtheta=\sum_{m=1}^M\xi_m\bmtheta_m$, with basis vectors regenerated from
pseudorandom seeds \cite{nooralinejad2023pranc}.  It is therefore one of the
closest empirical instances of the present algebraic model.  Balanced
multicomponent and multilayer networks reduce the number of trainable
parameters through a structured component decomposition
\cite{ZZZZ-24-MMNN}.  Their Fourier variant represents each component as a
trainable linear combination of fixed random sine bases within a low-rank
architecture \cite{ZZZZ-25-FMMNN}.  These models further illustrate that the
degrees of freedom depending on the target can be much fewer than the
parameters in the realized network, but they do not establish a minimax law
over an entire function class in the two budgets $(M,P)$.

Other coordinate-restricted approaches include FreezeNet and training
restricted to BatchNorm parameters
\cite{wimmer2020freezenet,frankle2021batchnorm}; these methods correspond to
coordinate projections rather than designed dense affine maps.
Sparsity inducing training with $\ell_1$ regularization for convolutional
networks addresses another form of parameter reduction
\cite{xu2020}.  In that setting, however, the active sparse structure is
learned for the task rather than generated by one affine map fixed across a
target class.

A complementary line of expressivity theory shows that training only
normalization scales and biases
in a sufficiently large random host can reconstruct smaller target networks,
with extensions to convolutional and residual architectures
\cite{giannou2023normalization,burkholz2024batchnorm}.  Those theorems exploit
the special normalization architecture and address target-network
reconstruction rather than uniform approximation of a H\"older ball by one
prescribed affine map.

Learned solution subspaces provide a further comparison.  Lines, curves, and
simplices of highly accurate solutions show that useful low-dimensional
regions can be optimized jointly with a target network
\cite{wortsman2021subspaces}.  In that setting, however, the endpoints defining
the subspace are themselves trainable objects specific to the task, whereas
our affine image is fixed before any target in the H\"older ball is selected.
Taken together, these works address task performance, optimization, or
reconstruction of prescribed target networks; they do not characterize the
worst-case approximation error jointly in $(M,P)$.

\subsection{Nonlinear generators and parameter-efficient adaptation}
\label{sec:nonlinear-generators}

We now turn from affine maps to generators that are typically nonlinear.  The
general mechanism remains
\[
 \calG:\R^M\to\R^{P_\Phi},
 \qquad
 \bmtheta=\calG(\bmxi),
\]
but $\calG$ need not be affine.  Hypernetworks, Mapping Networks, and most
low-rank adaptation methods therefore motivate the question of how
approximation depends on both budgets, although they do not belong to the
affine class studied here.  As explained in the Introduction, a sharp minimax
theory for such models would require a separate complexity or regularity budget
for $\calG$; otherwise $M$ alone does not control the generated family.

Hypernetworks generate the weights of a primary network from conditioning
information or a compact code and have been applied to recurrent and image models
\cite{ha2017hypernetworks}.  Theoretical work compares their parameter
complexity with that of direct embedding models and identifies settings in
which modular weight generation is advantageous
\cite{galanti2020modularity}.  Infinite-width analysis shows that widening only
the generator does not automatically produce benign kernel dynamics, whereas
a joint infinite-width limit yields an explicit hyperkernel
\cite{littwin2020infinitehypernetworks}.  Hypernetworks conditioned on graphs can
predict millions of weights for architectures unseen during training in a
single forward pass \cite{knyazev2021parameterprediction}.  These works study
learned nonlinear generators, conditional families, or optimization dynamics.
The generator may be frozen after training, but its map $\calG$ remains
nonlinear and is learned for a particular task.

Mapping Networks provide a particularly direct recent example of a nonlinear
parameter generator \cite{sen2026mapping}.  A compact trainable latent vector modulates
nontrainable, orthogonally initialized base matrices inside the mapping
machinery.  Composing these modulated layers produces a nonlinear map from the
latent vector to all parameters of a target model.  The reported experiments
cover image classification, detection of manipulated faces, semantic
segmentation, recurrent forecasting, and adaptation of a
pretrained model.  For one reported classifier, full training uses $537{,}994$
parameters, while latent variants use only a few thousand coordinates and
obtain competitive or improved accuracy.  In the reported Cityscapes
experiment, a target model with $1{,}734{,}803$ parameters is driven by roughly
eight thousand latent coordinates: pixel accuracy increases from $93.21\%$ to
$97.92\%$, while mean intersection-over-union changes from $0.4957$ to
$0.4623$ (and to $0.4823$ for the layerwise variant)
\cite{sen2026mapping}.  These results directly motivate the question involving
both budgets, but their generator is nonlinear.  The existence theorem based
on manifolds, the local solvability result, and the experiments do not
establish a worst-case approximation rate over a function class in $(M,P)$.
A closely related nonlinear compression model uses a frozen random generator
to constrain the complete parameter vector to a prescribed low-dimensional
nonlinear manifold \cite{thrash2025mcnc}.  Its experiments support the broader
$\calG(\bmxi)$ framework, but its generator is nonlinear and random, and its
analysis does not establish the class-wide affine minimax law considered here.

Low-rank adaptation provides another important class of nonlinear
parameterizations.
LoRA represents an update as a product of two thin matrices
\cite{hu2022lora}.  VeRA, NOLA, and RandLoRA further reuse fixed random
matrices or random bases and train scaling or combination coefficients
\cite{kopiczko2024vera,koohpayegani2024nola,albert2025randlora}.  Ordinary
two-factor LoRA is nonlinear jointly in its two trainable factors.  Even when
each NOLA factor is a linear combination of fixed bases, multiplying the two
factors again makes the complete update nonlinear in the combined code.  A
variant in which one factor is fixed, or a direct linear expansion of the
complete update in a fixed basis, would instead define an affine map from
$\R^M$ to $\R^{P_\Phi}$.  Kernel analyses explain why low-dimensional update
subspaces can preserve fine-tuning dynamics in suitable regimes
\cite{malladi2023kernel}.  Separately, theoretical work characterizes
reconstruction of target networks by low-rank updates and studies the landscape
in neural tangent and local Polyak--\L{}ojasiewicz regimes
\cite{zeng2024expressivelora,jang2024lorantk,liu2025loralandscape}.
These results depend on pretrained weights, low-rank factorization, or local
optimization assumptions; they do not yield the global function class rate
proved below.

Gradient low-rank projection provides a useful contrast: it compresses
gradients and optimizer state while retaining full-rank trainable weights
\cite{zhao2024galore}.  It is therefore an optimization-memory method rather
than a low-dimensional parameter map
$\calG:\R^M\to\R^{P_\Phi}$.

\subsection{Approximation theory, coding, and capacity}
\label{sec:approximation-coding-capacity}

Three theoretical viewpoints organize the comparison: exact codes with an
unbudgeted decoder, networks whose complete parameter vector may depend on the
target, and capacity bounds based on polynomial sign patterns.  Our theorem
draws on all three while budgeting both the code and the decoder.

Classical universal approximation theorems establish density on compact sets
\cite{Cybenko1989ApproximationBS,HORNIK1989359,HORNIK1991251}, while
quantitative approximation theory relates the error to width, depth,
smoothness, and the total number of network parameters
\cite{yarotsky2017,yarotsky18a}.  Related developments include sparse-network
and Sobolev-norm estimates
\cite{PETERSEN2018296,B_lcskei_2019,2019arXiv190207896G,10.3389/fams.2018.00014}, approximation rates
for broad classes of activation functions \cite{SIEGEL2020313}, and
high-order rates for shallow networks \cite{SIEGEL20221}.  Recent work gives
nearly optimal rates for shallow $\operatorname{ReLU}^k$ networks on Sobolev
spaces through Radon-transform techniques
\cite{doi:10.1137/24M1686693}.  Approximation spaces generated by deep networks
are characterized in \cite{2019arXiv190501208G}, and further quantitative
constructions appear in
\cite{shijun:NonlineArpprox,shijun:Characterized:by:Numer:Neurons,
shijun:smooth:functions,shijun:optimal:rate:in:width:and:depth}.

A closely related approximation result separates parameters that depend on the
target from parameters fixed for an entire function class
\cite[Theorem~2.2 and the discussion immediately following it]{shijun:intrinsic:parameters}.
For a target on $[0,1]^d$ with Lipschitz constant $\lambda$, it
constructs a ReLU network with $n+2$ intrinsic parameters and uniform error
at most $5\lambda\sqrt{d}2^{-n}$, and it extends the construction to general
continuous functions.  The decoder size is allowed to grow as required and is
not independently budgeted.  The present problem asks how that
exponential dependence on the code length changes once the decoder is also
subject to the budget $P$.  Repeated composition of a single ReLU block of
fixed size gives a complementary form of parameter sharing
\cite{shijun:RCNet}.  After the composition is unrolled, reuse of the same
block parameters becomes an affine tying map from a fixed number of independent
coefficients to a number of network slots proportional to the number of
repetitions.  The resulting $O(r^{-1/d})$ error for Lipschitz targets is
consistent with the fixed $M$ regime here, but that work does not determine the
joint dependence on arbitrary $(M,P)$ or prove a matching lower bound.

At the opposite endpoint, approximation theory for very deep networks with
joint width and depth budgets controls the
entire network size while allowing every parameter to depend on the
target \cite{yarotsky18a,yarotsky:2019:06,
shijun:optimal:rate:in:width:and:depth,
shijun:Characterized:by:Numer:Neurons,shijun:smooth:functions}.
The joint law in this paper interpolates between these two resource models.
Qualitative bounded-width universality also shows that width can be bounded
solely in terms of the input dimension when depth is allowed to grow
\cite{NIPS2017_7203}.  Such a theorem neither requires the parameter vector to
lie in one fixed affine image nor quantifies the joint code and slot budgets;
those are the constraints responsible for the rate studied here.
Other constructions of fixed width use depth as a progressive refinement
mechanism.  Multigrade ReLU networks successively reduce the residual
\cite{shijun:mgdl:relu:decay}, while shared architectures with intermediate
readouts achieve geometrically finer layerwise approximation scales
\cite{shijun:Geometric:Layer-wise:rates:sinReLU}.  These results clarify the
approximation role of depth, but they do not impose the affine latent budget
studied here.
Another line interprets deep architectures through dynamical systems and
derives approximation results from flows, interpolation, and controllability
\cite{WeinanE2017dynamicalsystems,LiLinShen2023DynamicalSystems,
doi:10.1137/23M1599744,2026arXiv260315363C}.  This viewpoint has also been
developed for invariant target classes
\cite{2022arXiv220808707L}.

A complementary computability result constructs a single fixed narrow
recurrent ReLU network that receives an encoding of a computable target
function as part of its input and approximates it after a controlled number of
recurrent iterations
\cite{bournez2025universal}.  That result emphasizes a universal decoder and
relates its iteration count to computational complexity.  Here, by contrast,
the code does not enter as an ordinary network input: it affinely generates the weights and
biases of a finite feedforward architecture, and both the code dimension $M$
and the resulting dense parameter count $P$ are optimized explicitly.

Random feature approximation provides another relevant affine special case:
the features are fixed and only the output coefficients vary.
Quantitative results are available for random ReLU features, random neural
networks and reservoirs, and networks with sampled weights
\cite{hsu2021randomrelu,gonon2023random,bolager2023sampling}.  The essential
differences are that those results typically average over a random draw, use
$L^2$ target classes or classes of Barron type, and do not exploit a decoder
of length proportional to $P$ to obtain a joint $PM$ law.  Lottery ticket
results instead encode the target through a discrete pruning mask
\cite{malach2020lottery,pensia2020subsetsum};
their information model is combinatorial rather than based on an affine code
in exact real arithmetic.
More broadly, Barron spaces and function spaces induced by flows, together
with population risk estimates for networks with two layers, provide a related
function space perspective
\cite{2019arXiv190608039E,e2020representation,Weinan2019}.

The theory of nonlinear widths distinguishes arbitrary, continuous, and stable
maps from targets to parameters
\cite{devore_1998,Devore89optimalnonlinear,Ingrid,cohen2020optimal}.
Very deep ReLU networks can exceed stable nonlinear width rates by using
discontinuous encodings of high precision
\cite{yarotsky18a,yarotsky:2019:06,shijun:optimal:rate:in:width:and:depth}.
For shallow neural-network variation spaces, sharp approximation rates,
metric entropy bounds, and $n$-width estimates provide a complementary
complexity-theoretic perspective \cite{SiegelXu2024SharpBounds}.
The upper construction below lies in this exact real regime.  For the lower
bound, classical counting of polynomial sign patterns for real parameters,
together with later nearly tight bounds for piecewise linear networks,
provides the relevant capacity tools
\cite{warren1968signpatterns,goldberg1995bounding,Bartlett98almostlinear,
JMLR:v20:17-612,anthony_bartlett_1999}.
We adapt these arguments to affine parameter tying, so the effective variable
count is $\rank(\bmA)$ rather than the ambient number $P_\Phi$ of parameter
slots.

\subsection{Resource accounting and endpoint regimes}
\label{sec:storage-decoding}

Having situated the theorem relative to the most closely related models and
theoretical results, we now interpret its two resource budgets.  In the
nondegenerate regime covered by the main theorem, with
$4\le M\le P$ and $P$ sufficiently large, the sharp H\"older rate is
$(PM)^{-\alpha/d}$.  The construction
explains this product directly: a constant fraction of the $M$ latent
coordinates store independent streams of quantized function increments, while
a fixed decoder of depth proportional to $P$ reads a number of digits
proportional to $P$ from each stream.  Consequently, it reconstructs a number
of grid values proportional to $PM$.

The result also connects two established endpoint theories.  When $M$ is
comparable to $P$, the rate is $P^{-2\alpha/d}$, matching the optimal rate for
very deep networks of fixed width established in
\cite[Theorems~1(a) and~2(b)]{yarotsky18a} and
\cite[Theorem~1.1]{shijun:optimal:rate:in:width:and:depth}.  When $M\ge4$ is
fixed and $P$ is sufficiently large, the rate becomes $P^{-\alpha/d}$ and
quantifies how a latent vector of fixed dimension over the real numbers can be
decoded by a shared decoder of increasing depth, in the spirit of
\cite{shijun:intrinsic:parameters}.  Once $M\ge P$, the affine rank is bounded
by the ambient parameter dimension, which explains the saturation encoded by
$\min\{M,P\}$.

The coefficients of the affine map are shared across the target class rather
than selected separately for each target.  In the absence of additional
structure, $\bmA$ and $\bma$ contain
$P_\Phi M+P_\Phi$ fixed scalar entries.  These entries are not charged to the
target-dependent trainable parameter count $M$, so the theorem does not
automatically imply compression of total storage.  Charging them, imposing
sparsity or fast transforms, or regenerating them from short random seeds
would define different resource models, as in several empirical constructions
discussed above.  A recent preprint studies a seeded deployment model with finite bit precision in
which the stored artifact consists of a short integer seed and a quantized
latent vector; the seed regenerates the fixed basis and initialization center
\cite{dhayalkar2026kilobyte}.  Its resource accounting and
experiments complement the minimax theorem over exact real parameters here,
but it does not provide a matching approximation law over a function class.
Within the deployed network, however, every weight and bias slot, including
slots fixed at zero, is counted in $P_\Phi$.

\subsection{Optimization geometry and conditioning}
\label{sec:optimization-geometry}

Beyond resource accounting, the affine map also induces a particular
optimization geometry.  The approximation problem itself does not prescribe
an algorithm for finding $\bmxi_f$, but the affine parameterization makes this
geometry explicit.  Let
\[
 \calA(\bmxi)=\bmA\bmxi+\bma
 \quad\text{and}\quad
 \widetilde\calL(\bmxi):=\calL\bigl(\calA(\bmxi)\bigr),
\]
where $\calL(\bmtheta)$ is a twice continuously differentiable loss in the
full parameter space and $\ts$ denotes transpose.  The chain rule gives
\begin{equation*}
 \nabla_{\bmxi}\widetilde\calL(\bmxi)
 =\bmA^\ts
   \nabla_{\bmtheta}\calL\bigl(\calA(\bmxi)\bigr),
 \qquad
 \nabla_{\bmxi}^2\widetilde\calL(\bmxi)
 =\bmA^\ts
   \nabla_{\bmtheta}^2\calL\bigl(\calA(\bmxi)\bigr)
   \bmA.
\end{equation*}
The pullback Hessian is singular whenever
$r_{\calA}:=\rank(\bmA)<M$; hence its ordinary condition number is not
meaningful until redundant latent directions are removed.  If
$r_{\calA}=0$, the affine family contains one parameter vector and there is no
latent condition number.  Otherwise, choose an orthonormal basis for
$\ker(\bmA)^\perp$ and place its vectors in the columns of
$\bmV_{\calA}\in\R^{M\times r_{\calA}}$.  Define the nonredundant coordinates
$\bmxi_{\calA}^{\mathrm{eff}}:=\bmV_{\calA}^\ts\bmxi$ and the effective
matrix
$\bmA_{\calA}^{\mathrm{eff}}:=\bmA\bmV_{\calA}$.  Because
$\bmxi-\bmV_{\calA}\bmV_{\calA}^\ts\bmxi\in\ker(\bmA)$, one has
$\bmA\bmxi=\bmA_{\calA}^{\mathrm{eff}}
\bmxi_{\calA}^{\mathrm{eff}}$.  Thus
$\bmtheta=\bmA_{\calA}^{\mathrm{eff}}
\bmxi_{\calA}^{\mathrm{eff}}+\bma$, and the Euclidean metric on
$\bmxi_{\calA}^{\mathrm{eff}}$ is inherited from the nonredundant directions of
the original latent vector.  For a local quadratic model with Hessian $\bmH$,
define
\[
 \bmH_{\calA}^{\mathrm{eff}}
 :=(\bmA_{\calA}^{\mathrm{eff}})^\ts
 \bmH\bmA_{\calA}^{\mathrm{eff}}.
\]
When this matrix is positive definite, the local condition number in these
effective coordinates is
\[
 \operatorname{cond}_2(\bmH_{\calA}^{\mathrm{eff}})
 :=\frac{\lambda_{\max}(\bmH_{\calA}^{\mathrm{eff}})}
         {\lambda_{\min}(\bmH_{\calA}^{\mathrm{eff}})}.
\]
Because a nonorthogonal
change of coordinates within the same affine image can alter this condition
number, the coordinate system must be specified.  Thus dimensional reduction
alone neither guarantees nor precludes better conditioning.  This conclusion
is consistent with empirical comparisons of random bases, general analyses of
reparameterization, and studies of low-rank optimization landscapes
\cite{gressmann2020random,kristiadi2023geometry,liu2025loralandscape}.
The reparameterization result in \cite{kristiadi2023geometry} concerns
invertible coordinate changes; by contrast, a rank-deficient $\calA$ also
restricts the model to a proper affine subspace.
Constructing an affine map that is simultaneously optimal for approximation
and well conditioned for a prescribed loss is a separate problem.

\subsection{Exact real arithmetic, discontinuity, and scope}
\label{sec:exact-real-scope}

We next turn to a separate limitation of the upper bound: its use of exact real
arithmetic.  The construction stores finite but increasingly long binary
streams in exact real coordinates.  Their useful bit length grows with the
depth and hence with $P$.  For a stream of $D$ decoded bits, the fixed
prefix gate used later contains a coefficient of size $2^{D+1}$, so the
dynamic range of some decoder weights shared across targets also grows
exponentially with the number of decoded bits.
The selection map $f\mapsto\bmxi_f$ uses quantization and is discontinuous.
These features are standard in the exact real regime with superconvergent
approximation rates
\cite{yarotsky18a,shijun:intrinsic:parameters,
shijun:optimal:rate:in:width:and:depth,shijun:RCNet,
shijun:three:layers,shijun:Characterized:by:Numer:Neurons,
ZZZZ-25-FMMNN,shijun:floor:relu,WANG2025107258}.
If every latent coordinate were restricted to $b$ bits, only $2^{Mb}$ codes
would be available, and a standard H\"older packing or entropy argument
would impose an additional information obstruction of order
$(Mb)^{-\alpha/d}$.  A recent bit-complexity framework likewise emphasizes
that parameter count and finite-precision information complexity are distinct
resource measures \cite{2026arXiv260801357M}.  The main theorem therefore
concerns expressivity over real parameters, not bit complexity, stability
under perturbations or noise, or the behavior of gradient descent.  Weight
constraints define another complementary resource model: upper and lower
approximation bounds are known for norm-constrained ReLU networks on smooth
function classes \cite{JIAO2023249}.  Here we count all weight and bias slots
but impose no uniform bound on their magnitudes.  These comparisons are not
used in any proof below.

A second scope restriction is architectural: we consider only fully connected
ReLU networks.  Both the architectural class and the activation are
substantive restrictions.  Approximation and
universality for convolutional architectures have their own corresponding
theories \cite{Bao2019ApproximationAO,2022arXiv221114047L,ZHOU2019}.  The role of the
activation is also reflected in general activation-dependent approximation
rates \cite{SIEGEL2020313}, activation-dependent spectral bias
\cite{hong2022activation,cao2019towards}, and constructions combining ReLU, sine, and
exponential activations on H\"older classes
\cite{doi:10.1137/21M144431X}.
Richer nested ReLU
architectures can obey different laws relating the parameter count to the
error \cite{shijun:net:arc:beyond:width:depth}.  With specially designed
continuous activations, even networks of fixed size can be universal
\cite{shijun:arbitrary:error:with:fixed:size}, while transfer results extend
many ReLU approximation constructions to broad activation families
\cite{shijun:2023:beyond:ReLU:to:diverse:actfun}.  None of these results gives
the joint affine minimax law studied here.

Finally, the constructive theorem uses width depending only on the dimension,
while its depth grows with $P$ and supplies sequential decoding time.  The lower
bound allows every fully connected architecture with at most $P$ dense
parameter slots under the convention of Section~\ref{sec:notation}.  No theorem
is claimed for upper bounds at fixed depth, convolutional architectures, or
nonlinear generators.  The upper construction applies to every continuous
function on the cube, while matching optimality is proved for the normalized
H\"older class in the uniform norm.  The affine map is also chosen
constructively; the result does not assert that a random affine subspace
attains the same worst-case rate.

In summary, the literature reviewed above motivates the general parameter map
$\calG$, while the remainder of the paper isolates a precise setting: one
affine latent parameterization shared across the target class, latent vectors
over exact real numbers, and two independently charged budgets $(M,P)$.  The
minimax class imposes no explicit width bound, whereas the optimal upper
construction has width depending only on $d$.

\section{Problem formulation and main results}\label{sec:model}

With the setting and scope now fixed, we formulate the precise minimax
problem.  Section~\ref{sec:notation} introduces the notation, architecture
classes, affine latent families, and target class; Section~\ref{sec:main}
states the matching upper and lower bounds.

\subsection{Notation and model}\label{sec:notation}

Let $\R$ and $\N^+$ denote the sets of real numbers and positive integers,
respectively.  For any set $X$, the notation $\R^X$ denotes the set of
functions from $X$ to $\R$.  If $m\in\N^+$ and $X\subseteq\R^m$ has the
subspace topology inherited from $\R^m$, then $C(X)$ denotes the space of
real-valued continuous functions on $X$.

Bold lowercase and uppercase letters denote column vectors and matrices,
respectively, and $\ts$ denotes transpose.  Thus $(a_1,\ldots,a_q)$ is a
column vector, whereas $[a_1,\ldots,a_q]$ is a row vector.  The symbol
$\bmzero$ denotes a zero vector or matrix, with its dimension inferred from
context.  For $q\in\N^+$, $\bmI_q$ is the $q\times q$ identity matrix, and
$\bm{e}_j$ is the $j$th coordinate vector whenever its ambient dimension is
clear.  We use $\ker(\bmA)$, $\operatorname{Range}(\bmA)$, and $\rank(\bmA)$
for the kernel, range, and rank of a matrix.  The expression
$\operatorname{diag}(\bmA_1,\ldots,\bmA_J)$ denotes the corresponding block
diagonal matrix, and $\operatorname{vec}$ denotes vectorization in the fixed
order specified below.

Throughout the paper, $\varrho(t):=\max\{t,0\}$ denotes the ReLU activation
and is applied componentwise to vectors.  For $t\in\R$, set
\[
 t_+:=\max\{t,0\}=\varrho(t),
 \qquad
 t_-:=\max\{-t,0\}=\varrho(-t),
 \qquad
 t=t_+-t_-.
\]
For $\bmz=(z_1,\ldots,z_m)\in\R^m$, the symbols $\lVert\bmz\rVert_2$ and
$\lVert\bmz\rVert_\infty:=\max_{1\le j\le m}\lvert z_j\rvert$ denote the
Euclidean and maximum norms, respectively.  For $1\le p<\infty$ and
$f:[0,1]^d\to\R$, write
\[
 \lVert f\rVert_{L^p([0,1]^d)}
 :=\biggl(\int_{[0,1]^d}\lvert f(\bmx)\rvert^p\mathrm{d}\bmx\biggr)^{1/p},
 \qquad
 \lVert f\rVert_{L^\infty([0,1]^d)}
 :=\sup_{\bmx\in[0,1]^d}\lvert f(\bmx)\rvert.
\]
The notation $\lvert\calS\rvert$ denotes the cardinality of a finite set
$\calS$.  We use $\one$ generically for indicators: $\one_{\{E\}}$ is the
indicator of a statement $E$, whereas $\one_{\calS}$ is the indicator
function of a set $\calS$.  The notation $\med(a,b,c)$ denotes the median of
$a,b,c\in\R$.  We write $\lfloor t\rfloor$ for the floor of $t$;
$\log$ and $\ln$ denote natural logarithms, $\log_2$ denotes the logarithm
to base two, and $e$ is Euler's number.  For positive quantities $a$ and $b$,
we write $a\asymp_\Lambda b$ if there are constants
$c_\Lambda,C_\Lambda>0$, depending only on the listed parameters $\Lambda$,
such that $c_\Lambda b\le a\le C_\Lambda b$.

For a binary function class $\calC$, $\VCdim(\calC)$ denotes its VC
dimension.  For a real-valued class $\calF$, $\Pdim(\calF)$ denotes its
pseudo-dimension, while $\VCdim(\calF)$ denotes the VC dimension of the class
obtained by thresholding $\calF$ at zero.  Formal definitions are given in
Section~\ref{sec:capacity-preliminaries}.

For $f\in C([0,1]^d)$, its modulus of continuity is
\begin{equation}\label{eq:modulus}
\omega_f(r)
 :=\sup\Bigl\{
 \lvert f(\bmx)-f(\bmy)\rvert:
 \bmx,\bmy\in[0,1]^d,\ \lVert\bmx-\bmy\rVert_2\le r
 \Bigr\},\qquad r\ge0.
\end{equation}
In particular, $\omega_f$ is nondecreasing, $\omega_f(0)=0$, and
$\omega_f(r)\to0$ as $r\downarrow0$.  It also satisfies
$\omega_f(Kr)\le K\omega_f(r)$ for every $r\ge0$ and $K\in\N^+$: subdivide
the line segment joining any two admissible points into $K$ equal pieces and
use the triangle inequality.

\subsubsection*{Network architectures}
A fully connected ReLU architecture is denoted by $\Phi$.  Its input
dimension $d_\Phi$ is determined by the target function, and its output
dimension is one throughout this paper; both dimensions are therefore
suppressed in the notation for architecture classes.  Thus, when these
classes are used for targets on $[0,1]^d$, the input dimension is understood
to be $d$.  Let
$L_\Phi\in\N^+$ be the number of hidden layers and let
$n_1,\ldots,n_{L_\Phi}\in\N^+$ be their widths.  Set
$n_0=d_\Phi$ and $n_{L_\Phi+1}=1$, and define
\[
 \depth(\Phi):=L_\Phi,
 \qquad
 \width(\Phi):=\max_{1\le\ell\le L_\Phi}n_\ell.
\]
Thus every architecture admitted by our convention has at least one hidden
ReLU unit.  An affine network with no hidden layer could be added as a
degenerate case.  Doing so would not weaken the lower bound, but excluding it
keeps the layer notation uniform throughout the constructive proof.  Indeed,
after rank reduction, its subgraph predicate is one polynomial inequality of
degree one in the effective coordinates, so Proposition~\ref{prop:GJ} gives
the same capacity estimate under the parameter budget.
For $N,L\in\N^+$, let
\[
 \Arch(N,L)
 :=
 \left\{\Phi:\width(\Phi)\le N,\ \depth(\Phi)\le L\right\}.
\]

For $\ell=1,\ldots,L_\Phi+1$, let
\[
 \calA_\ell(\bmz):=\bmW_\ell\bmz+\bmb_\ell
\]
denote the affine map associated with layer $\ell$.  Vectorizing and
concatenating all weight matrices and bias vectors in their natural order
yields
\[
 \bmtheta
 :=
 \operatorname{vec}\bigl(
 \bmW_1,\bmb_1,\ldots,
 \bmW_{L_\Phi+1},\bmb_{L_\Phi+1}
 \bigr)
 \in\R^{P_\Phi}.
\]
The function realized by $\Phi$ with parameters $\bmtheta$ is
\[
 \Phi_{\bmtheta}
 :=
 \calA_{L_\Phi+1}\circ\varrho\circ\calA_{L_\Phi}
 \circ\cdots\circ\varrho\circ\calA_1
 \in C(\R^{d_\Phi}).
\]
The hidden-unit index set and the number of hidden units are
\begin{equation}\label{eq:hidden-unit-count}
 \calU_\Phi
 :=
 \left\{(\ell,j):
 1\le\ell\le L_\Phi,\ 1\le j\le n_\ell\right\},
 \qquad
 U_\Phi:=\lvert\calU_\Phi\rvert
 =\sum_{\ell=1}^{L_\Phi}n_\ell.
\end{equation}
The total number of scalar weight and bias entries is
\begin{equation}\label{eq:parameter-count}
 P_\Phi
 :=
 \sum_{\ell=1}^{L_\Phi+1}n_\ell(n_{\ell-1}+1).
\end{equation}
After fixing an ordering of these entries, a parameter assignment is
identified with $\bmtheta\in\R^{P_\Phi}$.  All dense weight and bias entries
are counted, including those assigned a fixed value, possibly zero.  For
$P\in\N^+$, define
\begin{equation*}
 \Arch_{\mathrm{par}}(P)
 :=
 \left\{\Phi:P_\Phi\le P\right\}.
\end{equation*}
No separate width or depth restriction is imposed in
$\Arch_{\mathrm{par}}(P)$.
Figure~\ref{fig:fixed-width} summarizes these architectural conventions.

\begin{figure}[ht]
 \centering
 \includegraphics[width=0.8\textwidth]{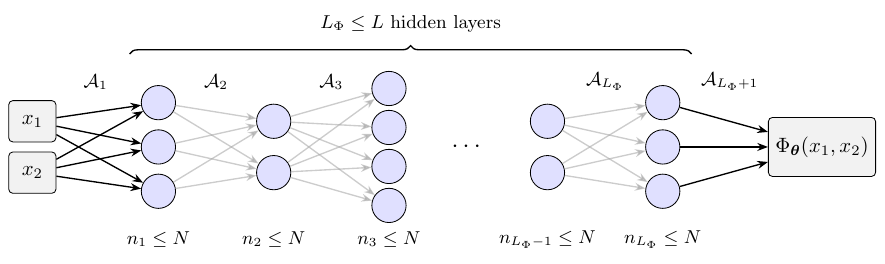}
 \caption{An architecture $\Phi\in\Arch(N,L)$.  The widths of hidden layers may be
 smaller than $N$, and all dense weight and bias entries are counted in
 $P_\Phi$.}
 \label{fig:fixed-width}
\end{figure}

\subsubsection*{Affine latent parameterizations}
For $M,Q\in\N^+$, let
\[
 \aff(M,Q)
 :=
 \left\{
 \calA:\R^M\to\R^Q:
 \calA(\bmxi)=\bmA\bmxi+\bma,\
 \bmA\in\R^{Q\times M},\
 \bma\in\R^Q
 \right\}.
\]
The convention $M\in\N^+$ excludes the degenerate zero-dimensional latent
space, whose affine image would consist of a single parameter vector.
For an architecture $\Phi$ and a map
$\calA\in\aff(M,P_\Phi)$, the network parameters are generated by
\[
 \bmtheta=\calA(\bmxi)=\bmA\bmxi+\bma,
 \qquad \bmxi\in\R^M.
\]
Here $\R^M$ is the latent parameter space, while $\bmA$ and $\bma$ are fixed
over the target class.  The corresponding realized family is
\begin{equation}\label{eq:realized-family}
 \calF_{\Phi,\calA}
 :=
 \left\{\Phi_{\calA(\bmxi)}:\bmxi\in\R^M\right\}
 =
 \left\{\Phi_{\bmtheta}:
       \bmtheta\in\calA(\R^M)\right\}
 \subseteq C(\R^{d_\Phi}).
\end{equation}
Its effective affine dimension is
\begin{equation}\label{eq:effective-rank}
 r_{\calA}:=\rank(\bmA)
 \le\min\{M,P_\Phi\}.
\end{equation}
The pair $(\Phi,\calA)$ is chosen independently of the target; only the
latent vector $\bmxi$ may depend on the target function.

\subsubsection*{Target class and minimax error}
For $0<\alpha\le1$, define the unit H\"older ball
\[
 \calH_d^\alpha
 :=
 \Bigl\{
 f\in C([0,1]^d):
 \lVert f\rVert_{L^\infty([0,1]^d)}\le1,\
 \lvert f(\bmx)-f(\bmy)\rvert
 \le\lVert\bmx-\bmy\rVert_2^\alpha
 \ \text{for all }\bmx,\bmy\in[0,1]^d
 \Bigr\}.
\]
We use the following dimension-dependent constants throughout:
\begin{align}
 N_d
 &:=3^d\bigl(\max\{20d,96\}+4\bigr),
 \label{eq:Nd}\\
 B_d
 &:=(48+2d)N_d(N_d+1)+N_d(d+2)+1,
 \label{eq:Bd}\\
 P_d^\star
 &:=2^{2d+2}B_d,
 \label{eq:Pdstar}\\
 C_d
 &:=4\cdot 8^{1/d}\sqrt{d}B_d^{2/d}.
 \label{eq:Cd}
\end{align}
All four constants depend only on $d$.  For $P\in\N^+$, we also use the depth
budget
\begin{equation*}
 L_{d,P}:=18\lfloor P/B_d\rfloor+30+2d.
\end{equation*}

With this notation in place, for an admissible pair $(\Phi,\calA)$ with
$d_\Phi=d$, define
\begin{equation}\label{eq:pair-risk}
 \calR_{\alpha,d}(\Phi,\calA)
 :=
 \sup_{f\in\calH_d^\alpha}
 \inf_{\bmxi\in\R^M}
 \left\lVert f-\Phi_{\calA(\bmxi)}
 \right\rVert_{L^\infty([0,1]^d)}.
\end{equation}
The affine latent minimax error under a parameter budget $P$ is
\begin{equation}\label{eq:minimax}
 \calE_{\alpha,d}(M,P)
 :=
 \inf_{\substack{\Phi\in\Arch_{\mathrm{par}}(P)\\ d_\Phi=d}}
 \inf_{\calA\in\aff(M,P_\Phi)}
 \calR_{\alpha,d}(\Phi,\calA).
\end{equation}
Thus the architecture and affine map are chosen before the target, whereas
the latent vector is chosen after the target is given.  In \eqref{eq:minimax},
the infimum is restricted to architectures with input dimension $d$ and
scalar output, as it is whenever an architecture class is used for targets
on $[0,1]^d$.  This convention keeps the input dimension determined by the
target domain out of the architecture notation while making every difference
$f-\Phi_{\bmtheta}$ well defined.  As usual,
$\inf\varnothing:=+\infty$.

\subsection{Main results}\label{sec:main}

With the minimax problem now defined, we state the three main results.
Theorem~\ref{thm:upper} gives a modulus-of-continuity bound for every continuous target,
Corollary~\ref{cor:holder-upper} specializes it to the H\"older class, and
Theorem~\ref{thm:lower} proves the matching lower bound.  Consequently,
\[
 \calE_{\alpha,d}(M,P)
 \asymp_{\alpha,d}
 \begin{cases}
  (PM)^{-\alpha/d},&M\le P,\\
  P^{-2\alpha/d},&M\ge P,
 \end{cases}
\]
whenever $P\ge P_d^\star$ and $\min\{M,P\}\ge4$.  The regime $M\le P$ is
particularly relevant to parameter-efficient approximation: $M$ may remain
fixed while the error still decreases algebraically as $P$ grows.

Recall the named constants $N_d,B_d,P_d^\star,C_d$ in
\eqref{eq:Nd}, \eqref{eq:Bd}, \eqref{eq:Pdstar}, and
\eqref{eq:Cd}, together with the depth budget $L_{d,P}$ defined in
Section~\ref{sec:notation}.  They are conservative by design so that every
dense parameter slot and every estimate involving a floor can be checked
without asymptotic notation.  We do not optimize their numerical values.

\begin{theorem}[Upper bound for continuous functions]
\label{thm:upper}
Let $d,M,P\in\N^+$ satisfy $\min\{M,P\}\ge4$ and
$P\ge P_d^\star$.  There exist an architecture
$\Phi\in\Arch_{\mathrm{par}}(P)\cap\Arch(N_d,L_{d,P})$ with input dimension
$d$ and an affine map $\calA\in\aff(M,P_\Phi)$, both depending only on
$(d,M,P)$ and not on the target, with the following property.  For every
$f\in C([0,1]^d)$, there is a target-dependent latent vector
$\bmxi_f\in\R^M$ such that, with
$\bmtheta_f:=\calA(\bmxi_f)$,
\begin{equation}
\label{eq:main-upper}
 \left\lVert f-\Phi_{\bmtheta_f}\right\rVert_{L^\infty([0,1]^d)}
 \le
 (d+2)
 \omega_f\bigl(
 C_d[P\min\{M,P\}]^{-1/d}
 \bigr).
\end{equation}
Equivalently, every entry of every layer matrix $\bmW_\ell$ and bias vector
$\bmb_\ell$ is an affine function of $\bmxi_f$.  Every dense weight and bias
entry is counted in $P_\Phi$, including entries assigned the value zero.
\end{theorem}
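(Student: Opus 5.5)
We plan a bit-extraction construction in the spirit of \cite{shijun:intrinsic:parameters,yarotsky18a}, adapted so that every generated parameter is affine in $\bmxi_f$. Set $m:=\min\{M,P\}$ and choose integers $J\asymp P/B_d$ (the number of serial decoding stages, matching the depth budget $L_{d,P}=18\lfloor P/B_d\rfloor+30+2d$) and $K:=\lfloor (Jm')^{1/d}\rfloor$ with $m'\asymp m$ the number of latent streams, say $m'=m-3$ or $\lfloor m/2\rfloor$, keeping a few coordinates for offsets. We partition $[0,1]^d$ into $K^d$ cubes of side $1/K$. We then order the cube vertices along a snake (boustrophedon) path, so that consecutive grid points are adjacent, and quantize the increments of $f$ along this path to step size $\omega_f(1/K)$. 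Each increment then lies in $\{-1,0,1\}$ times the step, up to a cumulative error of order $\omega_f(1/K)$. These ternary increments are split into $m'$ blocks of length about $J$, and each block is encoded as the binary expansion of one latent coordinate $\xi_i$. All fixed data (the grid geometry, the path, and the decoding weights) enter only through $\bmA,\bma$. The only target-dependent inputs are the real numbers $\xi_i$, which enter as biases or as first-layer weights, and hence affinely.

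The network is assembled from fixed-width modules, in the following order.
\begin{enumerate}[label={\textup{(\roman*)}}]
  \item A spatial address network maps $\bmx$ to an index $k(\bmx)$ of its cube, together with the position of the index within a stream and the number of the stream. This uses step functions built from ReLU, which are exact away from thin boundary strips.
  \item A serialized affine spline loader carries the $m'$ latent coordinates through the depth using identity channels of width $O(1)$. It does so by loading one coordinate at a time into the running register, through biases equal to $\xi_i$ (affine in $\bmxi$) gated by the stream index. This keeps the width independent of $M$: the loader uses about $m'$ extra slots per pass, which costs parameters proportional to $m\le P$.
  \item A fixed binary prefix extraction reads off, over $O(J)$ layers, the partial sum of decoded increments up to the position $k(\bmx)$. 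Each bit-step uses the standard $t\mapsto 2t-\lfloor 2t\rfloor$ gadget with a near-step ReLU slope $2^{D+1}$.
  \item Summing the prefix with a stored anchor value gives $f$ at the grid point within $O(\omega_f(1/K))$.
\end{enumerate}
The multiplication step, bringing the count to $PM$ rather than $P+M$, comes from letting each of the $J$ depth stages process all $m'$ streams through the serialized loader. Tracking this carefully, so that $P_\Phi\le P$ and the width is at most $N_d$, is where the constants $B_d,N_d$ are fixed.

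To handle the boundary strips where the address step functions are inexact, we apply the median repair trick. We build $3^d$ (or $d+2$) shifted copies of the approximant with grids offset by fractions of $1/K$, so that every $\bmx$ is good for a majority in each coordinate. We then combine the copies by iterated $\med(\cdot,\cdot,\cdot)$, which is a fixed ReLU network. This yields the factor $(d+2)$ in \eqref{eq:main-upper} after using $\omega_f(Kr)\le K\omega_f(r)$. Finally $1/K\le C_d(Pm)^{-1/d}$ follows from $K^d\ge Jm'/2^d$ and $J\ge P/(2B_d)$, valid for $P\ge P_d^\star$ and $m\ge4$. This is where the factor $8^{1/d}B_d^{2/d}\sqrt d$ comes from, with $\sqrt d$ coming from converting the cube diameter to the $\ell_2$ norm.

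The main obstacle is step (iii) combined with (ii): extracting $\Theta(J)$ bits from each of $\Theta(m)$ real coordinates using only $O(P)$ slots and fixed width. The difficulty is that a naive approach either needs width $\propto m$ or depth $\propto Jm$. We would first try routing: the address network selects which single stream $s(\bmx)$ is relevant at $\bmx$, and only that stream's $J$ bits are decoded. A depth-$O(1)$ selector computes $\sum_i \one_{\{s(\bmx)=i\}}\xi_i$. This requires the latent values to appear as biases multiplied by fixed gates; since products are not affine, we realize the gating as $\varrho(\xi_i - C\,\mathrm{dist})$-type expressions, with $\xi_i$ affine in the bias and a large fixed constant $C$, serialized over $i$ in $O(m)$ layers of constant width. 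The count of $O(m+J)\cdot B_d\le P$ slots must then be verified.
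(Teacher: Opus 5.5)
Your pipeline (spatial address, selection of a single block by a serialized loader, a fixed binary prefix decoder, anchor plus scaled prefix, median repair) is the same as the paper's, but as written several steps do not close.

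\emph{The budget.} You attribute the $PM$ gain to ``letting each of the $J$ depth stages process all $m'$ streams.'' That is wrong: at bounded width it costs $\Theta(Jm')$ layers, hence at least of order $Pm/B_d$ dense slots, which exceeds $P$ once $m\gg B_d$. The product must come only from the size of the table. Split the address as $j=sJ+k$ with a fixed quotient/remainder staircase of depth $O(m')$. Load the data of block $s$ alone, once, in depth $O(m')$. Then run one fixed decoder for $J$ stages. This is what your last paragraph describes, but only its serialized $O(m')$-layer form respects the width bound $N_d$; a depth-$O(1)$ selector needs width growing with $m'$. Even then, $O(m'+J)B_d\le P$ is false when $M\ge P$, since then $m=P\gg J\asymp P/B_d$. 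You must cap the number of blocks by $J$, as the paper does with $D=\lfloor P/B_d\rfloor$ and $S=\min\{\lfloor(\widetilde M-1)/3\rfloor,D\}$, which still gives $SD\ge P\widetilde M/(4B_d^2)$. The address map has the same problem. For $d=1$ your $K$ equals $Jm'$, and a single serialized staircase with $K$ steps has depth proportional to $K$. The paper instead takes $K=q^2$ and composes a coarse and a fine staircase, each of depth $O(q)$ with $q\le D$ (Lemma~\ref{lem:address-1d}). The snake order also requires parity-dependent reflections of the cell indices. These are not affine in $\bmbeta$ and need extra modules you have not budgeted. The paper keeps the address $2K\operatorname{row}(\bmbeta)+\beta_d$ affine in $\bmbeta$ and fills the $K$ reserved addresses after each row with a linear bridge whose steps are at most $\omega_f(\sqrt d/K)$ (Lemma~\ref{lem:bridge}). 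Finally, $N_d$, $L_{d,P}$ and $C_d$ are prescribed in the statement, so your ledger must fit them rather than define them.

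\emph{What is loaded.} Each block's prefix count is measured from that block's own starting value. You therefore need one anchor per block, which is of order $m'$ coordinates, not ``a few offsets.'' For general continuous $f$ these anchors are signed and unbounded. The gate $\varrho(\xi_i-C\,\mathrm{dist})$ with fixed $C$ selects correctly only when $0\le\xi_i\le C$, so it works for the binary fractions but not for the anchors. Likewise, the target-dependent step $\eta$ must multiply the decoded count, and restricting latent values to biases and first-layer weights gives you no mechanism for this. The root cause is the premise that ``products are not affine.'' Affineness constrains only the value stored in each slot, so a slot holding $\xi_i$ (or $\eta$) may multiply any activation in the forward pass. What is forbidden is merging two code-dependent affine layers. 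The paper uses exactly this freedom:
\begin{itemize}
\item The spline loader (Lemma~\ref{lem:spline-loader}) writes the interpolant through $(m,y_m)$ as a serialized hinge sum whose coefficients are linear in $\bmy$ (first and second differences). It therefore returns arbitrary real block data exactly at integer $m$.
\item The output layer has weights $\pm\eta$, and the code is $(a_0,\ldots,a_{S-1},\zeta^+_0,\ldots,\zeta^+_{S-1},\zeta^-_0,\ldots,\zeta^-_{S-1},\eta)\in\R^{3S+1}$.
\end{itemize}

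Three smaller points:
\begin{itemize}
\item A ternary increment does not fit in one binary digit, which is why the paper uses the two streams $\zeta^\pm$.
\item You must quantize values, $z=\lfloor y/\eta\rfloor$, not increments, so that errors do not accumulate.
\item The repair branches must reuse the same code. The paper achieves this by shifting the input by $\pm\delta\bm{e}_j$ through translated first-layer biases with $\delta=1/(3K)$ fixed, and taking $d$ nested three-way medians. This gives the count $1+1+d$ behind the factor $d+2$; your ``$d+2$ copies'' variant is not justified.
\end{itemize}
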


Theorem~\ref{thm:upper} is proved in Section~\ref{sec:upper-proof}.  That
section develops the fixed width modules, assembles the construction under
integer resources, and then selects these resources in terms of $(M,P)$.
Applying the theorem to H\"older functions gives the following consequence.

\begin{corollary}[Upper bound for H\"older functions]
\label{cor:holder-upper}
Let $d,M,P\in\N^+$ satisfy the assumptions of
Theorem~\ref{thm:upper}, and let $(\Phi,\calA)$ be the single pair supplied
by that theorem.  Let $0<\alpha\le1$ and $\lambda\ge0$.  If
$f\in C([0,1]^d)$ satisfies
$\lvert f(\bmx)-f(\bmy)\rvert
\le\lambda\lVert\bmx-\bmy\rVert_2^\alpha$
for all $\bmx,\bmy\in[0,1]^d$, then
\begin{equation*}
 \inf_{\bmtheta\in\calA(\R^M)}
 \left\lVert f-\Phi_{\bmtheta}\right\rVert_{L^\infty([0,1]^d)}
 \le
 \overline{C}_{\alpha,d}\lambda
 [P\min\{M,P\}]^{-\alpha/d},
\end{equation*}
where $\overline{C}_{\alpha,d}:=(d+2)C_d^\alpha$.  In particular,
\begin{equation}
\label{eq:minimax-upper}
 \calE_{\alpha,d}(M,P)
 \le
 \overline{C}_{\alpha,d}
 [P\min\{M,P\}]^{-\alpha/d}.
\end{equation}
The same approximants also satisfy this bound in $L^p([0,1]^d)$ for every
$1\le p<\infty$.
\end{corollary}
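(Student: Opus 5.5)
The corollary follows directly from Theorem~\ref{thm:upper}, applied to the pair $(\Phi,\calA)$ it supplies. The only work is to bound the modulus of continuity of a H\"older function and then pass from the uniform norm to $L^p$.

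Fix $f$ with $\lvert f(\bmx)-f(\bmy)\rvert\le\lambda\lVert\bmx-\bmy\rVert_2^\alpha$. From the definition \eqref{eq:modulus}, every pair $\bmx,\bmy$ with $\lVert\bmx-\bmy\rVert_2\le r$ satisfies $\lvert f(\bmx)-f(\bmy)\rvert\le\lambda r^\alpha$. Hence $\omega_f(r)\le\lambda r^\alpha$ for all $r\ge0$. Set $r:=C_d[P\min\{M,P\}]^{-1/d}$. Theorem~\ref{thm:upper} then gives a latent vector $\bmxi_f$ with
\[
 \lVert f-\Phi_{\calA(\bmxi_f)}\rVert_{L^\infty([0,1]^d)}
 \le(d+2)\lambda C_d^\alpha[P\min\{M,P\}]^{-\alpha/d}.
\]
Since $\calA(\bmxi_f)\in\calA(\R^M)$, the infimum over $\calA(\R^M)$ is at most this quantity, which is the first claim with $\overline{C}_{\alpha,d}=(d+2)C_d^\alpha$.

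For \eqref{eq:minimax-upper}, take $f\in\calH_d^\alpha$, so $\lambda=1$. Every such $f$ is continuous, so the bound holds uniformly over $\calH_d^\alpha$ for the single target-independent pair $(\Phi,\calA)$. This gives $\calR_{\alpha,d}(\Phi,\calA)\le\overline{C}_{\alpha,d}[P\min\{M,P\}]^{-\alpha/d}$. The pair is admissible in \eqref{eq:minimax}: $\Phi\in\Arch_{\mathrm{par}}(P)$ has input dimension $d$, and $\calA\in\aff(M,P_\Phi)$. Hence $\calE_{\alpha,d}(M,P)$ is bounded by the same quantity.

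For the $L^p$ statement, $[0,1]^d$ has Lebesgue measure one, so $\lVert g\rVert_{L^p}\le\lVert g\rVert_{L^\infty}$ for every continuous $g$. Applying this to $g=f-\Phi_{\calA(\bmxi_f)}$ with the same approximants gives the $L^p$ bound.

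None of these steps is a real obstacle; the only point to check is that the modulus bound $\omega_f(r)\le\lambda r^\alpha$ is used exactly at the argument appearing in \eqref{eq:main-upper}.
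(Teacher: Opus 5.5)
Your proposal is correct and follows the paper's proof essentially step for step. You bound $\omega_f(r)\le\lambda r^\alpha$, substitute into \eqref{eq:main-upper}, and use that $\calA(\bmxi_f)\in\calA(\R^M)$. You then specialize to $\lambda=1$ over $\calH_d^\alpha$ for the minimax bound, and pass to $L^p$ via $\lVert g\rVert_{L^p}\le\lVert g\rVert_{L^\infty}$ on the unit-measure cube.
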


\begin{proof}
The H\"older condition gives $\omega_f(r)\le\lambda r^\alpha$.
Substitution into \eqref{eq:main-upper} yields
\[
 \left\lVert f-\Phi_{\bmtheta_f}\right\rVert_{L^\infty([0,1]^d)}
 \le
 \overline{C}_{\alpha,d}\lambda
 [P\min\{M,P\}]^{-\alpha/d}.
\]
Since $\bmtheta_f=\calA(\bmxi_f)$, it is admissible for the infimum over
$\calA(\R^M)$.  For $f\in\calH_d^\alpha$, the estimate applies with
$\lambda=1$.  Taking the supremum over $\calH_d^\alpha$ and then the two
outer infima in the definition of $\calE_{\alpha,d}(M,P)$ proves
\eqref{eq:minimax-upper}.  Finally, since $[0,1]^d$ has measure
one, $\lVert g\rVert_{L^p([0,1]^d)}
\le\lVert g\rVert_{L^\infty([0,1]^d)}$ for every bounded measurable $g$.
\end{proof}

The upper bound is formulated for every continuous target because the
construction naturally adapts to the modulus of continuity of the individual
function.  To determine whether its dependence on $(M,P)$ can be improved,
we consider the normalized H\"older class $\calH_d^\alpha$.  The following
lower bound applies uniformly over all admissible architectures and affine
parameterizations.

\begin{theorem}[Lower bound for H\"older functions]
\label{thm:lower}
Let $d\in\N^+$ and $0<\alpha\le1$.  There exists a constant
$\underline{c}_{\alpha,d}>0$ such that, for every $M,P\in\N^+$,
\begin{equation}
\label{eq:main-lower}
 \calE_{\alpha,d}(M,P)
 \ge
 \underline{c}_{\alpha,d}
 [P\min\{M,P\}]^{-\alpha/d}.
\end{equation}
The constant is independent of $M$, $P$, the network depth, the architecture,
and the affine map.
\end{theorem}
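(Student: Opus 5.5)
The plan is the classical two-step route: a capacity bound for the realized family $\calF_{\Phi,\calA}$, followed by a H\"older bump packing argument that turns any approximation guarantee into a large shattered set.

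\emph{Capacity bound.} Fix $\Phi\in\Arch_{\mathrm{par}}(P)$ with $d_\Phi=d$ and $\calA(\bmxi)=\bmA\bmxi+\bma$ with $r:=r_{\calA}=\rank(\bmA)\le\min\{M,P\}$. As in Section~\ref{sec:optimization-geometry}, I reparameterize by $\bmtheta=\bmA^{\mathrm{eff}}_{\calA}\bmeta+\bma$ with $\bmeta\in\R^r$. Then $\calF_{\Phi,\calA}$ is the family $\{\bmx\mapsto\Phi_{\bmA^{\mathrm{eff}}_{\calA}\bmeta+\bma}(\bmx)\}$ indexed by only $r$ real variables.

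I will bound $\Pdim$ of this family, thresholded with an extra offset $t$, following the layerwise argument of Bartlett et al. Fix sample points $\bmx_1,\ldots,\bmx_m$ and thresholds. Refine the $\bmeta$-space layer by layer: on each cell the activation pattern of layers $1,\ldots,\ell-1$ is constant, so each preactivation of layer $\ell$ at each $\bmx_i$ is a polynomial in $\bmeta$ of degree at most $\ell$. Warren's bound (or Proposition~\ref{prop:GJ}, which is already cited) counts the sign patterns of these $m n_\ell$ polynomials in $r$ variables.

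Summing over layers gives a sign-pattern count of roughly $\prod_\ell\bigl(C\,m\,n_\ell\,\ell/r\bigr)^{r}$. Setting this against $2^m$ yields
\[
 \Pdim(\calF_{\Phi,\calA})\le C\,r\,L_\Phi\log(e\,U_\Phi)\le C'\,\min\{M,P\}\,P\log P .
\]
The log factor loses against the claimed rate, so I need the sharper Bartlett--Harvey--Liaw--Mehrabian-type bound $O\bigl(r\sum_\ell\log(\cdot)\bigr)$ with the log removed. The refinement is to replace $r$ by the number of effective variables in layers $\le\ell$. I expect this to be the main obstacle. With rank-reduced tied variables, each layer's polynomial depends on all $r$ variables, so the usual argument gives $O(rL\log(PL))$ rather than the log-free $O(rL)$. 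The target I will aim for is $\Pdim\le C\,r\,P$ with no log. The point is that $L_\Phi\le P$, and that in the layerwise sum $\sum_\ell r\log(m n_\ell \ell/r)$ one has $\sum_\ell n_\ell\le P$. Concavity then controls $\sum_\ell\log n_\ell$, and the resulting self-bounding inequality $2^m\le\prod(\cdots)$ solves to $m\lesssim r\,L$ provided one balances via $\log(m/r)$. If a clean log-free bound proves out of reach, I would accept $\Pdim\le C\,rP\log P$ and note that this only costs a log factor, but the statement as written needs the sharp version.

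\emph{Packing.} Let $K=\lceil(c\,P\min\{M,P\})^{1/d}\rceil$ with $c$ large relative to the constant in the capacity bound, so that $K^d>\Pdim(\calF_{\Phi,\calA})$. Place disjoint bumps $\phi_j(\bmx)=\tfrac12 K^{-\alpha}\psi(K(\bmx-\bmz_j))$ at the centers $\bmz_j$ of a $K^{-1}$-grid, where $\psi$ is a fixed tent of height $1$ and $\alpha$-H\"older constant at most $1$ (Lipschitz suffices since $\alpha\le1$ on a bounded support).

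Every sign combination $f_{\bm\sigma}=\sum_j\sigma_j\phi_j$ lies in $\calH_d^\alpha$. Suppose $\calR_{\alpha,d}(\Phi,\calA)<\tfrac14K^{-\alpha}$. Then for each $\bm\sigma$ some member $g$ of the family satisfies $\operatorname{sign}(g(\bmz_j))=\sigma_j$ for all $j$. Hence $\{\bmz_j\}$ is shattered at threshold $0$, giving $K^d\le\VCdim\le\Pdim$, a contradiction. Taking infima over $(\Phi,\calA)$ gives $\calE_{\alpha,d}(M,P)\ge\tfrac14K^{-\alpha}\ge\underline{c}_{\alpha,d}[P\min\{M,P\}]^{-\alpha/d}$. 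Small $M,P$ are absorbed by the trivial bound $\calE\ge$ const, obtained from a single bump; when $r=0$ the family is a singleton, which handles that degenerate case.
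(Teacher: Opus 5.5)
\textbf{The capacity step has a gap.} Your rank reduction and your bump-packing step are sound and essentially match the paper. The gap is the log-free pseudo-dimension bound you flag yourself, and the proposed fix does not close it.

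In the layerwise refinement, the counting inequality has the form
$m\le r\sum_{\ell=1}^{L_\Phi+1}\log_2(4e\,\ell\,m\,n_\ell/r)$, plus lower-order terms. The factor $\log(m/r)$ appears once per layer. Even with every width equal to one, the substitution $m=CrL_\Phi$ makes the right-hand side at least $r(L_\Phi+1)\log_2(4eCL_\Phi)$, which exceeds $CrL_\Phi$ once $L_\Phi$ is large. So the self-bounding inequality only closes at $m\asymp rL_\Phi\log L_\Phi$.

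Concavity of $\sum_\ell\log n_\ell$ does not help, because the loss comes from the $L_\Phi$ repeated copies of $\log(m\ell/r)$, not from the widths. Width-one networks have $L_\Phi$ of order $P$ (there $P_\Phi=d+1+2L_\Phi$). Your route therefore gives only $\Pdim(\calF_{\Phi,\calA})\lesssim rP\log P$. That yields $\calE_{\alpha,d}(M,P)\gtrsim(P\min\{M,P\}\log P)^{-\alpha/d}$, which is strictly weaker than the theorem.

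\textbf{The missing idea.} Do not refine layer by layer. Instead, encode all activation patterns at once, Goldberg--Jerrum style, and count in terms of hidden units $U_\Phi$ rather than layers.

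\begin{itemize}
\item For each of the $2^{U_\Phi}$ global patterns $\bmsigma$, run the formal forward pass with each ReLU replaced by either the identity or zero. This is a polynomial in the $r_{\calA}$ effective variables of degree at most $L_\Phi+1\le U_\Phi+1$.
\item The subgraph predicate $\Phi_{\bmtheta(\bmmu)}(\bmx)>t$ is then one fixed Boolean formula: a disjunction over $\bmsigma$ of the pattern-consistency literals on the preactivations and the output comparison. Assign zero preactivations to the active branch so the formula is exact on activation boundaries.
\item This formula has $s=(U_\Phi+1)2^{U_\Phi}$ atom occurrences. A single application of Warren's bound to all $ms$ polynomials gives $2^m\le(4e(L_\Phi+1)ms/r)^r$, in which $\log(m/r)$ occurs only once.
\item Solving gives $m\le2r\log_2\bigl(4e(U_\Phi+1)^2 2^{U_\Phi}\bigr)\le8r_{\calA}(U_\Phi+1)$. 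The exponential number of patterns enters only through its logarithm, which equals $U_\Phi$.
\item Finally, $U_\Phi+1\le P_\Phi$, because every hidden unit and the output each have their own bias slot. Hence $\Pdim(\calF_{\Phi,\calA})\le8P\min\{M,P\}$, which is exactly the log-free capacity bound your packing argument needs.
\end{itemize}
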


Theorem~\ref{thm:lower} is proved in Section~\ref{sec:lower-proof}.
The proof first controls the pseudo-dimension of a network family generated
by an affine map in terms of its affine rank and the number of ReLU units.
It then combines this capacity estimate with a packing of localized H\"older
functions.

Combining \eqref{eq:minimax-upper}
and \eqref{eq:main-lower} gives, whenever $P\ge P_d^\star$ and
$\min\{M,P\}\ge4$,
\[
 \calE_{\alpha,d}(M,P)
 \asymp_{\alpha,d}
 [P\min\{M,P\}]^{-\alpha/d}.
\]
Thus the upper and lower bounds match up to constants depending only on
$(\alpha,d)$.

\begin{remark}[A constant number of latent coordinates]
\label{rem:constant-M}
Fix an integer $M_0\ge4$ independently of $P$.  For
$P\ge\max\{M_0,P_d^\star\}$, the sharp estimate reduces to
\[
 \calE_{\alpha,d}(M_0,P)
 \asymp_{\alpha,d}
 (M_0P)^{-\alpha/d}
 =
 M_0^{-\alpha/d}P^{-\alpha/d}.
\]
Thus the number of target-dependent latent coordinates need not increase
with the network budget for the worst-case error to converge to zero.
A fixed collection of exact real coordinates, together with an increasingly
large decoder shared across the target class, already achieves the algebraic
rate $P^{-\alpha/d}$.
By contrast, when $M\ge P$, the rate saturates at $P^{-2\alpha/d}$.
The regime with fixed $M$ is therefore genuinely different from the regime in
which all $P$ parameter slots can vary independently with the target.  When
only the dependence on $P$ is displayed, the implied constants may depend
on the fixed value $M_0$.
\end{remark}

\section{Constructive proof of the upper bound}
\label{sec:upper-proof}

This section gives a constructive proof of Theorem~\ref{thm:upper}.  The proof
is assembled from elementary fixed-width modules.  Details are included
because two bookkeeping questions are central: which coefficients may depend
on the latent vector, and how signed quantities are transported through a
standard fully connected ReLU network without skip connections.  Every sparse
module below is embedded into a fully connected layer by assigning zero to
unused entries.  These entries are still included in the dense parameter count
$P_\Phi$.

References below to a ``vector-valued module'' are shorthand for several
channels retained within a single network with scalar output; they do not
introduce a separate architecture class.  Intermediate affine readouts are
either omitted, leaving the relevant coordinates in the hidden state, or
merged with an adjacent affine layer only when one side is fixed, as in
Lemma~\ref{lem:safe-composition}.  Every width and depth ledger refers to this
single final architecture, and every zero used to embed a sparse state
transition is counted as a dense parameter slot.

The seven subsections follow the data flow of the construction.
Section~\ref{sec:serialized-hinges} develops serialized hinge sums and safe
composition; Sections~\ref{sec:spline-loading} and
\ref{sec:binary-prefix-decoder} construct the affine loader and fixed decoder;
and Section~\ref{sec:point-fitting} combines them into a discrete fitting
module.  Section~\ref{sec:spatial-repair} then supplies spatial addressing,
bridge sequences, and boundary repair.  Section~\ref{sec:integer-construction}
assembles all modules under integer resource budgets, and
Section~\ref{sec:budget-selection} converts those resources into the prescribed
budgets $(M,P)$.

\subsection{Serialized hinge sums and safe composition}
\label{sec:serialized-hinges}

A continuous piecewise linear scalar function can be written as a sum of
hinges.  The next lemma serializes those hinges so that width is independent of
their number.

\begin{lemma}[Serialized hinges]\label{lem:hinge-serialization}
Let $J\in\N^+$, let
$\kappa_1<\cdots<\kappa_J$, and let
$a,b,c_1,\ldots,c_J\in\R$.  The function
\begin{equation*}
 \varphi(x):=a+bx+\sum_{j=1}^{J}c_j\varrho(x-\kappa_j)
\end{equation*}
is realized by an architecture $\Phi\in\Arch(5,2J+2)$.  More generally, if
$a,b,c_1,\ldots,c_J$ are affine functions of a common vector
$\bmxi\in\R^{M_{\mathrm{code}}}$ for some $M_{\mathrm{code}}\in\N^+$, then
$\Phi$ can be chosen independently of
$\bmxi$, and all slot assignments jointly define one affine map
$\calA\in\aff(M_{\mathrm{code}},P_\Phi)$.  A target-dependent slot uses only
the coordinates that occur in one of the displayed coefficients.
\end{lemma}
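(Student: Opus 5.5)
We realize $\varphi$ by a fully connected network whose hidden state has width at most $5$. The state carries the input $x$ (split into positive and negative parts) and a running partial sum (also split). Each hinge $c_j\varrho(x-\kappa_j)$ is added in two layers. The first layer computes the new hinge value while copying the carried channels. The second layer folds the weighted hinge into the accumulator. Since the knots $\kappa_j$ are fixed and only the coefficients $a,b,c_j$ may vary with $\bmxi$, every target-dependent slot must appear linearly in the weights. In particular, no product of two $\bmxi$-dependent quantities may ever appear; this constraint drives the design.

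\textbf{Layer plan.} Layer $1$ maps $x$ to $(x_+,x_-,\,\varrho(a+bx),\,\varrho(-a-bx))$, i.e.\ the state $(x_+,x_-,s_+,s_-)$ with $s_0:=a+bx$. Here $a,b$ enter as weight and bias entries, so they are affine in $\bmxi$. For each $j$, two layers follow.
\begin{itemize}
\item The first produces
$(x_+,x_-,s_+,s_-,\varrho(x_+-x_--\kappa_j))$, using width $5$. All of its weights are fixed: identity copies, together with the fixed knot bias $-\kappa_j$.
\item The second produces $(x_+,x_-,\varrho(s+c_jh),\varrho(-s-c_jh))$, where $s=s_+-s_-$ and $h$ is the hinge channel. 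Only the entries multiplying $h$ in the two new accumulator rows equal $\pm c_j$; these are affine in $\bmxi$.
\end{itemize}
Copies of nonnegative channels pass through ReLU unchanged. Signed quantities are always transported as $(t_+,t_-)$ pairs, using $t=t_+-t_-$ and $\varrho(t)=t_+$. The final affine output layer returns $s_+-s_-$, which requires one more readout. The depth is therefore $1+2J$ hidden layers, within $2J+2$; we may add an identity layer $(\cdot)_+$ copies if exact depth matching is wanted, though $\Arch(5,2J+2)$ only bounds depth. The width is at most $5$.

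\textbf{Affine parameter map.} Every slot is either a fixed constant ($0$, $\pm1$, or $-\kappa_j$) or $\pm$ one of $a,b,c_j$, hence affine in $\bmxi$. Collecting all slots in the vectorization order gives one matrix $\bmA$ and one vector $\bma$, so the assignment defines $\calA\in\aff(M_{\mathrm{code}},P_\Phi)$ with $\Phi$ independent of $\bmxi$. A slot depends on $\bmxi$ only through the coordinates appearing in the coefficient it carries, which is the final claim of the lemma.

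\textbf{Main obstacle.} The delicate point is ensuring that no $\bmxi$-dependent coefficient is ever multiplied by another $\bmxi$-dependent one across consecutive layers, since that would destroy affinity. The layout above avoids this because $c_j$ multiplies only the hinge channel, which is computed by fixed weights from $x$. The remaining check is a routine induction: after hinge $j$, the pair equals $\bigl((s_0+\sum_{i\le j}c_i\varrho(x-\kappa_i))_\pm\bigr)$.
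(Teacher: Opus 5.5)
Your construction is correct and is essentially the paper's: signed transport of $x$ and of the running sum, with one hinge layer and one accumulator layer per knot. The only difference is that you fold the initialization into a single layer with weights $\pm b$ and biases $\pm a$, a harmless shortcut giving depth $2J+1\le 2J+2$ where the paper uses two initialization layers. Your ``main obstacle'' is stricter than needed: code-dependent weights in different layers may multiply through activations in the forward pass, because affinity is a property of the slot values, and it can fail only when two code-dependent affine layers are merged with no activation between them.
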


\begin{proof}
We carry both the input and a running signed sum by their positive and negative
parts.  For each hinge, one layer computes the hinge and a second layer adds
its weighted contribution to the running sum.  This organization keeps the
width fixed while the depth grows linearly with the number of hinges.

The elementary identity
\[
 \varrho(v)-\varrho(-v)=v,
 \qquad v\in\R,
\]
will be used repeatedly.  Write
\[
 x_+:=\varrho(x),\qquad x_-:=\varrho(-x),
 \qquad x=x_+-x_-.
\]
The first hidden layer produces $x_+,x_-$ and
$a_+:=\varrho(a),a_-:=\varrho(-a)$.  Here $a_+$ is obtained using
zero input weight and bias $a$, while $a_-$ uses zero input weight and bias
$-a$.  The next hidden layer forms
\[
 H_0^+
 :=\varrho\bigl(a_+-a_-+b(x_+-x_-)\bigr),\qquad
 H_0^-
 :=\varrho\bigl(-a_++a_--b(x_+-x_-)\bigr).
\]
Hence $H_0^+-H_0^-=a+bx$.  Suppose that after the $(j-1)$st update the
state contains
\[
 x_+,\ x_-,\ H_{j-1}^+,\ H_{j-1}^-,
 \qquad
 H_{j-1}^+-H_{j-1}^-
 =a+bx+\sum_{i=1}^{j-1}c_i\varrho(x-\kappa_i).
\]
One additional hidden layer computes
\[
 h_j:=\varrho(x_+-x_--\kappa_j)
      =\varrho(x-\kappa_j)
\]
while copying the four nonnegative state coordinates.  The next layer computes
\begin{align*}
 H_j^+
 &:=
 \varrho\bigl(H_{j-1}^+-H_{j-1}^-+c_jh_j\bigr),\\
 H_j^-
 &:=
 \varrho\bigl(-H_{j-1}^++H_{j-1}^--c_jh_j\bigr),
\end{align*}
again copying $x_+$ and $x_-$.  Therefore
\[
 H_j^+-H_j^-
 =H_{j-1}^+-H_{j-1}^-+c_jh_j.
\]
This proves the induction.  The affine output
$H_J^+-H_J^-$ equals
$\varphi(x)$.  At most five channels are present:
$x_+,x_-,H_{j-1}^+,H_{j-1}^-,h_j$.  Every copied coordinate is
nonnegative, so an identity weight and zero bias pass it unchanged through
ReLU.  There are two initialization
layers and two layers for each of the $J$ hinges, so the hidden depth
is at most $2J+2$.

Only the biases $\pm a$ and the weights $\pm b,\pm c_j$ vary with the
coefficients.  They enter individual network slots without being multiplied
by another varying parameter slot.  A varying weight such as $c_j$ is allowed
to multiply the activation $h_j$ during the forward pass; the requirement is
only that the value assigned to each parameter slot be affine in the latent
vector.  Consequently, affine dependence of the coefficients on that vector
implies affine dependence of every parameter slot.  Listing the fixed and
varying assignments in the global slot order gives the single affine map
asserted in the statement.
\end{proof}

The preceding lemma applies to every continuous piecewise linear scalar
function with finitely many breakpoints, because its slope changes are exactly
the hinge coefficients.  We record this elementary fact to ensure that the
later staircase constructions do not rely on an unstated representation
theorem.

\begin{lemma}[Slope-jump representation]\label{lem:slope-jump}
Let $\varphi:\R\to\R$ be continuous and piecewise linear with breakpoints
$\kappa_1<\cdots<\kappa_J$.  Suppose its slope is $m_0$ to the left of
$\kappa_1$, is $m_j$ between $\kappa_j$ and $\kappa_{j+1}$ for
$j=1,\ldots,J-1$, and is $m_J$ to the right of $\kappa_J$.  Then
\begin{equation}\label{eq:slope-jump}
 \varphi(x)
 =\varphi(\kappa_1)+m_0(x-\kappa_1)
 +\sum_{j=1}^{J}(m_j-m_{j-1})\varrho(x-\kappa_j).
\end{equation}
Consequently, $\varphi$ is realized by an architecture in
$\Arch(5,2J+2)$.  If $\varphi(\kappa_1)$ and all its slopes are affine
functions of a latent vector, the realization can be chosen so that every
parameter slot is affine in that vector.
\end{lemma}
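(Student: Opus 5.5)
The plan is to verify identity \eqref{eq:slope-jump} directly by comparing both sides as continuous piecewise linear functions, and then to read off the realization claims from Lemma~\ref{lem:hinge-serialization}. Denote the right-hand side of \eqref{eq:slope-jump} by $\psi$. It is continuous and piecewise linear with breakpoints contained in $\{\kappa_1,\ldots,\kappa_J\}$. Since $\varrho(x-\kappa_j)$ vanishes for $x\le\kappa_j$ and has slope one for $x>\kappa_j$, the slope of $\psi$ on the interval between consecutive breakpoints $\kappa_k$ and $\kappa_{k+1}$ (with the conventions $\kappa_0=-\infty$, $\kappa_{J+1}=+\infty$) is the telescoping sum
\[
 m_0+\sum_{j=1}^{k}(m_j-m_{j-1})=m_k .
\]
So $\psi$ and $\varphi$ have the same slope on every open piece.

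Next I evaluate $\psi$ at one point. At $x=\kappa_1$ every hinge term vanishes, so $\psi(\kappa_1)=\varphi(\kappa_1)$. The difference $\varphi-\psi$ is therefore continuous, has zero derivative on each open interval between consecutive breakpoints, and vanishes at $\kappa_1$. Continuity across each breakpoint makes it constant on all of $\R$, hence identically zero. This proves \eqref{eq:slope-jump}.

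For the realization, I rewrite \eqref{eq:slope-jump} in the form of Lemma~\ref{lem:hinge-serialization}, with
\[
 a=\varphi(\kappa_1)-m_0\kappa_1,\qquad b=m_0,\qquad c_j=m_j-m_{j-1}.
\]
The breakpoints are strictly increasing and fixed, and $J\ge1$. Lemma~\ref{lem:hinge-serialization} then gives an architecture in $\Arch(5,2J+2)$. If $\varphi(\kappa_1)$ and all slopes $m_0,\ldots,m_J$ are affine in a latent vector, then $a$, $b$ and each $c_j$ are also affine in it, because $\kappa_1$ is a fixed constant and differences of affine functions are affine. The second part of Lemma~\ref{lem:hinge-serialization} then gives a target-independent architecture with every slot affine in the latent vector.

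There is no real obstacle here. The only points needing care are the telescoping slope computation and the requirement that the breakpoints $\kappa_j$ be fixed, not latent-dependent: if they depended on the latent vector, the bias $-\kappa_j$ would still be affine, but $a=\varphi(\kappa_1)-m_0\kappa_1$ would involve a product of latent-dependent quantities.
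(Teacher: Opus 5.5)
Your proposal is correct and follows the paper's proof essentially step for step: the telescoping slope computation for $\psi$, the match $\psi(\kappa_1)=\varphi(\kappa_1)$, propagation across the breakpoints by continuity, and then an appeal to Lemma~\ref{lem:hinge-serialization}. Your explicit coefficients $a=\varphi(\kappa_1)-m_0\kappa_1$, $b=m_0$, $c_j=m_j-m_{j-1}$, and your remark that the breakpoints must be target-independent, simply spell out what the paper summarizes as ``fixed linear combinations of $\varphi(\kappa_1),m_0,\ldots,m_J$.''
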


\begin{proof}
Call the right-hand side of \eqref{eq:slope-jump} $\psi(x)$.  When
$x<\kappa_1$, every hinge vanishes, so $\psi$ has slope $m_0$.  When
$\kappa_k<x<\kappa_{k+1}$, exactly the first $k$ hinges are active, and the slope
telescopes:
\[
 m_0+\sum_{j=1}^{k}(m_j-m_{j-1})=m_k.
\]
The same calculation gives slope $m_J$ for $x>\kappa_J$.  Moreover,
$\psi(\kappa_1)=\varphi(\kappa_1)$.  Starting with the interval containing
$\kappa_1$ and moving across the breakpoints, continuity and equality of the
slopes show successively that $\psi=\varphi$ on every linearity interval.
This proves the formula.  Lemma~\ref{lem:hinge-serialization} gives the
network realization.  Its coefficients are fixed linear combinations of
$\varphi(\kappa_1),m_0,\ldots,m_J$, which proves the final affine-dependence
statement.
\end{proof}

The preceding lemmas provide the scalar building blocks.  To assemble them
into a single network while preserving affine dependence on the code, we next
record the required transport and composition rules.

\begin{lemma}[Signed transport and one-side-fixed composition]
\label{lem:safe-composition}
The following operations preserve affine dependence of network parameters on a
common code vector.
\begin{enumerate}[label={\textup{(\roman*)}}]
 \item A signed scalar $u$ may be transported through any number of hidden
 layers as the nonnegative pair
 $(u_+,u_-)=(\varrho(u),\varrho(-u))$, with the fixed identity update
 $(u_+,u_-)\mapsto(u_+,u_-)$ and recovery $u=u_+-u_-$.
 \item Finitely many networks may be copied or run in parallel.  Their widths
 add.  A $q$-dimensional state is transported coordinatewise by $2q$
 nonnegative channels, so shorter branches can be extended to a common
 depth by fixed signed transport.  If a branch already terminates in a
 nonnegative state or a signed pair, its existing depth gap is filled by fixed
 identity layers.  If only a signed scalar affine readout
 $u=\bmw(\bmxi)^\ts\bmh+b(\bmxi)$ is available, compose that readout with the
 fixed split map $u\mapsto(u,-u)$, whose matrix is $[1,-1]^\ts$, and merge the
 two adjacent affine maps.  The resulting hidden layer produces
 $(\varrho(u),\varrho(-u))$; its two affine
 rows are sign copies of $(\bmw(\bmxi),b(\bmxi))$, so affine code dependence
 is preserved.  This split layer must be included in the interface depth
 ledger.
 \item Two adjacent affine layers with no intervening activation may be
 merged without losing affine code dependence whenever at least one of the
 two layers is fixed.
 \item Repeating an affine slot assignment in copied branches and padding
 unused slots by fixed zeros preserve affine code dependence.  Zero padding
 changes the dense parameter count but not the realized function.
\end{enumerate}
By contrast, merging two code-dependent affine layers need not preserve affine
dependence.
\end{lemma}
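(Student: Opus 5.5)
The plan is to verify each of the four items directly from the layer formula $\Phi_{\bmtheta}=\calA_{L+1}\circ\varrho\circ\cdots\circ\varrho\circ\calA_1$. The guiding criterion throughout is the one used in the proof of Lemma~\ref{lem:hinge-serialization}: a construction preserves affine code dependence exactly when every individual slot value is an affine function of $\bmxi$. Products of slot values with activations during the forward pass are irrelevant to this criterion.

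For (i), I check that $\varrho(\varrho(u))=\varrho(u)$ and $\varrho(\varrho(-u))=\varrho(-u)$. Hence the fixed identity block with zero bias maps the nonnegative pair $(u_+,u_-)$ to itself, and $u=u_+-u_-$ is a fixed linear readout; none of these slots depends on the code. If the pair is produced by a code-dependent affine row, its sign copy is also affine, since negating an affine function keeps it affine.

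For (ii), I place the branches as block-diagonal layers. Running them in parallel on a common input means stacking their first-layer rows, and all off-diagonal entries are fixed zeros. Widths therefore add, and each slot is either an original slot (affine) or a fixed zero. Depth alignment then reduces to (i) applied coordinatewise, using $2q$ channels. For the split case, the merged hidden layer has weight rows $[1,-1]^\ts\bmw(\bmxi)^\ts$ and biases $\pm b(\bmxi)$. These are fixed scalar multiples of affine functions and hence affine; this is also an instance of (iii).

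For (iii), write the composition as $\bmW_2(\bmW_1\bmz+\bmb_1)+\bmb_2$, giving merged weight $\bmW_2\bmW_1$ and merged bias $\bmW_2\bmb_1+\bmb_2$. If $\bmW_2,\bmb_2$ are fixed, then each entry is a fixed linear combination of affine functions plus a constant, and so is affine. If instead $\bmW_1,\bmb_1$ are fixed, each merged entry is linear in the entries of $\bmW_2$, plus the affine $\bmb_2$, and is again affine. For (iv), repeated slots reuse the same affine function and padded slots are constant zero, which is affine. Since zero rows and columns contribute nothing to the realized function, the function is unchanged while $P_\Phi$ grows.

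The closing remark needs a counterexample. Take scalar layers $w_1=\xi_1$ and $w_2=\xi_2$; the merged weight $\xi_1\xi_2$ is not affine in $\bmxi$. The only delicate point I expect is bookkeeping in (ii): I must confirm that when a signed readout is split, the split layer is a genuine hidden layer, so it is counted in depth, and that its merge is legitimate because the split map $[1,-1]^\ts$ is fixed.
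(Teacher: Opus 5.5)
Your proposal is correct and follows essentially the same route as the paper. Both verify slot by slot that identity transport, block-diagonal or stacked parallelization, the merged split rows $\pm(\bm{w}(\bm{\xi}),b(\bm{\xi}))$, and one-side-fixed merging (expanding $\bm{W}_2\bm{W}_1$ and $\bm{W}_2\bm{b}_1+\bm{b}_2$) produce only fixed linear combinations of affine functions, while copying and zero padding are trivially affine. The only cosmetic difference is your counterexample $\xi_1\xi_2$, which needs $M\ge2$, whereas the paper's choice $w_1=w_2=\xi$ (giving $\xi^2$) already works for $M=1$; either suffices for the ``need not'' claim.
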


\begin{proof}
The proof is a coefficientwise calculation.  The transport and routing
coefficients in parts (i) and (ii) are fixed at $0$, $1$, or $-1$.  Splitting
a code-dependent readout merely copies its coefficient row with both signs,
so affine dependence is preserved.  For (iii), we expand the merged
coefficients and identify exactly where a product of two code-dependent
entries could occur.

For (i), define
\[
 \bm{u}^\pm:=\bigl(\varrho(u),\varrho(-u)\bigr).
\]
Every coordinate of $\bm{u}^\pm$ is nonnegative, and hence each additional
hidden layer may copy it by
\[
 \bm{u}^\pm\longmapsto
 \varrho(\bmI_2\bm{u}^\pm)=\bm{u}^\pm.
\]
The fixed output row $[1,-1]$ gives
$[1,-1]\bm{u}^\pm=u$.
For a vector $\bmu=(u_1,\ldots,u_q)$, apply this construction separately to
each coordinate.  The transported state is
\[
 (u_{1,+},u_{1,-},\ldots,u_{q,+},u_{q,-})\in\R^{2q},
\]
and a fixed block row recovers $\bmu$.  This is the vector-valued padding
used in part~(ii).

For (ii), suppose $J$ modules at a common layer have matrices
$\bmW^{(1)},\ldots,\bmW^{(J)}$ and biases
$\bmb^{(1)},\ldots,\bmb^{(J)}$.  If they receive different state vectors,
their parallel layer is
\[
 \bmW^{\parallel}
 :=\operatorname{diag}\bigl(\bmW^{(1)},\ldots,\bmW^{(J)}\bigr),
 \qquad
 \bmb^{\parallel}
 :=\bigl(\bmb^{(1)},\ldots,\bmb^{(J)}\bigr).
\]
Every off-diagonal routing block is fixed at zero; the diagonal blocks may be
code dependent.  If the matrices and biases of the original
modules are affine in $\bmxi$, then so are all entries of
$(\bmW^{\parallel},\bmb^{\parallel})$.  A shorter module whose retained state
is already nonnegative is padded by the fixed signed-transport update from
(i), so unequal depths introduce no new code-dependent coefficient.  If its
terminal value is available only as a signed affine readout, write
$u=\bmw(\bmxi)^\ts\bmh+b(\bmxi)$.  Merging this readout with the fixed split
gives the two affine rows $(\bmw(\bmxi),b(\bmxi))$ and
$(-\bmw(\bmxi),-b(\bmxi))$.  After ReLU they produce
$(\varrho(u),\varrho(-u))$; subsequent layers copy this pair by fixed
identities.  The split costs one hidden layer and may duplicate
 code-dependent slots, but every duplicated slot remains affine and no product
 of varying coefficients is formed.  The final recovery row is fixed.  If all
branches receive the same input, their
first-layer matrices are stacked vertically rather than placed diagonally.
In both cases, the widths add.  The padded depth is the common target depth
when the signed pair is already present; otherwise the one split layer is
counted explicitly at the relevant interface.

For (iii), let adjacent affine maps be
$\bmu\mapsto\bmW_1(\bmxi)\bmu+\bmb_1(\bmxi)$ and
$\bmv\mapsto\bmW_2(\bmxi)\bmv+\bmb_2(\bmxi)$.  With no activation between
them, their composition is
\[
 \bmW_2(\bmxi)\bmW_1(\bmxi)\bmu
 +\bmW_2(\bmxi)\bmb_1(\bmxi)+\bmb_2(\bmxi).
\]
If $(\bmW_1,\bmb_1)=(\bmW_1^0,\bmb_1^0)$ is fixed and
\[
 \bmW_2(\bmxi)=\bmW_2^0+\sum_{m=1}^M\xi_m\bmW_2^m,
 \qquad
 \bmb_2(\bmxi)=\bmb_2^0+\sum_{m=1}^M\xi_m\bmb_2^m,
\]
then the merged matrix and bias are
\[
 \bmW_2^0\bmW_1^0+
 \sum_{m=1}^M\xi_m\bmW_2^m\bmW_1^0,
 \qquad
 \bmW_2^0\bmb_1^0+\bmb_2^0+
 \sum_{m=1}^M\xi_m(\bmW_2^m\bmb_1^0+\bmb_2^m),
\]
which are affine in $\bmxi$.  If instead
$(\bmW_2,\bmb_2)=(\bmW_2^0,\bmb_2^0)$ is fixed, the merged matrix and bias
are
\[
 \bmW_2^0\bmW_1(\bmxi),
 \qquad
 \bmW_2^0\bmb_1(\bmxi)+\bmb_2^0,
\]
which are again affine in $\bmxi$.  If
both layers vary, the product can contain $\xi_m\xi_{m'}$; for example, the
scalar composition of the weights $w_1(\xi)=w_2(\xi)=\xi$ has weight
$\xi^2$.

For (iv), copying a module repeats the same affine coordinate function
in additional entries of the complete parameter vector.  Assigning zero to a
new slot gives a constant, hence affine, coordinate function.  Neither
operation changes an already constructed branch value.  Every interface used
below has a fixed side:
fixed routing precedes an affine loader, and a fixed decoder or median module
follows an affine loader.
\end{proof}

The preceding operations will often be described at the level of retained
hidden states rather than by writing a new dense matrix at every interface.
The following closure statement makes that shorthand precise and will be used
throughout the remainder of the upper construction.

\begin{corollary}[Assembly through retained states and fixed interfaces]
\label{cor:state-interface}
Consider finitely many ReLU modules whose parameter slots are affine functions
of one common code vector.  Form a computation by repeatedly applying the
following operations: run modules in parallel; retain a terminal hidden state
instead of applying its scalar affine readout; route or split retained states
through fixed affine/ReLU layers; merge consecutive affine maps when at least
one is fixed; and align depths by fixed signed transport.  Then the resulting
computation is realized by one fully connected scalar-output ReLU architecture,
and every slot of that architecture is affine in the same code vector.

If a hidden layer of the resulting architecture is enlarged by adding channels
that are identically zero, the enlarged architecture realizes the same
function.  More precisely, if the original pair is $(\Psi,\calA)$ with
$\calA\in\aff(M,P_\Psi)$ and the enlarged architecture is
$\Psi^{\mathrm{pad}}$, inserting fixed zeros into the new slots defines
\[
 \calA^{\mathrm{pad}}
 \in\aff(M,P_{\Psi^{\mathrm{pad}}})
\]
such that
$\Psi^{\mathrm{pad}}_{\calA^{\mathrm{pad}}(\bmxi)}
=\Psi_{\calA(\bmxi)}$ for every $\bmxi\in\R^M$.
\end{corollary}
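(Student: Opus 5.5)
The plan is a structural induction on the number of assembly operations. The invariant is that after each operation the computation built so far is a finite stack of layers, each with its own parameter slots, whose entries are affine functions of the common code $\bmxi$. Each layer is one of three kinds: (a) an affine map $\bmz\mapsto\bmW(\bmxi)\bmz+\bmb(\bmxi)$ followed by ReLU; (b) an affine map with no activation; or (c) a retained nonnegative or signed-pair state. The base case is a single module, which satisfies the invariant by hypothesis.

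For the inductive step, I check each permitted operation against Lemma~\ref{lem:safe-composition}:
\begin{itemize}
 \item Parallel execution and vertical stacking of first layers are handled by part~(ii). The new blocks are block diagonal, or stacked, with fixed zero off-diagonal routing, so every entry is either an original affine entry or the constant $0$.
 \item Retaining a terminal hidden state simply omits the final readout, so no slot changes.
 \item Fixed routing and splitting use only the constants $0,\pm1$. The split of a code-dependent readout copies its row with both signs, as in part~(ii).
 \item Depth alignment uses the fixed identity transport of part~(i).
 \item Merging consecutive affine maps with one side fixed is exactly part~(iii).
\end{itemize}

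Once every operation is applied, no two activation-free affine maps remain adjacent. This is because merges are only allowed with a fixed side, and the construction never requests a merge of two code-dependent layers. I would state this explicitly as a hypothesis implicit in the list of operations.

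The result is then an alternating composition of affine maps and ReLU, ending in one scalar affine readout. This is precisely a realization $\Psi_{\bmtheta}$ in the sense of Section~\ref{sec:notation}. Every layer is rectangular after embedding each sparse block into a dense matrix with fixed zeros (part~(iv)), so the architecture is fully connected. Listing the slots in the global order of Section~\ref{sec:notation} gives $\bmtheta=\bmA\bmxi+\bma$. Each coordinate is affine, and stacking these coordinate functions produces a single $\calA\in\aff(M,P_\Psi)$.

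For the padding claim, suppose a channel is added to hidden layer $\ell$ of $\Psi$. I assign fixed zeros to its incoming weight row and bias, so that its value is $\varrho(0)=0$. I also assign fixed zeros to its outgoing column in $\bmW_{\ell+1}$; even though the channel's value is zero anyway, this makes the argument independent of later assignments. The preactivations of layer $\ell+1$ are then unchanged, and by induction over the layers all subsequent states and the output coincide. Define $\calA^{\mathrm{pad}}$ by keeping the old coordinate functions in the old slots and placing the constant $0$ in the new ones. This map is affine, and $\Psi^{\mathrm{pad}}_{\calA^{\mathrm{pad}}(\bmxi)}=\Psi_{\calA(\bmxi)}$ holds for all $\bmxi$.

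The main obstacle is bookkeeping rather than mathematics. The difficulty lies in ensuring that the sequence of operations never forces a merge between two code-dependent affine maps, which is exactly the failure mode noted at the end of Lemma~\ref{lem:safe-composition}. I would handle this by the invariant that every retained readout adjacent to a code-dependent layer is either kept as a hidden state, passing through a ReLU, or split through a fixed layer, so that each merge has a fixed side.
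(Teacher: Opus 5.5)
Your proposal is correct and follows essentially the same route as the paper. Both arguments induct over the finite sequence of assembly operations and check each one against Lemma~\ref{lem:safe-composition}, with the fixed-side merge of part~(iii) handling every substituted readout; both then pad by placing fixed zeros on the new incoming weights, biases, and outgoing weights, which the paper phrases as composing $\calA$ with a fixed affine injection $\R^{P_\Psi}\to\R^{P_{\Psi^{\mathrm{pad}}}}$. Your worry about a forced merge of two code-dependent maps is resolved as in the paper: the fixed-side condition is part of the hypothesis on the permitted operations, and if needed a code-dependent readout can be routed through the fixed split of part~(ii).
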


\begin{proof}
Parallelization stacks first-layer matrices when the branches share an input
and uses block-diagonal matrices at later layers.  Retaining a terminal state
simply omits a scalar affine row.  Every subsequent substitution of an omitted
readout is a composition of two adjacent affine maps, with at least one side
fixed by assumption, and is therefore affine in the code by
Lemma~\ref{lem:safe-composition}(iii).  Fixed routing, splitting, and signed
transport use only constant coefficients, while copied branches repeat
existing affine coordinate functions.  Induction over the finite sequence of
operations produces a single dense architecture after all unused connections
are assigned zero.  This proves the first assertion.

For padding, keep every old slot assignment in its corresponding position and
assign zero to each new incoming weight, outgoing weight, and bias associated
with an added channel.  This is a fixed affine injection from
$\R^{P_\Psi}$ into $\R^{P_{\Psi^{\mathrm{pad}}}}$.  Its composition with
$\calA$ is the displayed map $\calA^{\mathrm{pad}}$, and all added channels
remain zero after ReLU.  Hence the realized function is unchanged and the
codomain agrees with the enlarged architecture.
\end{proof}

This closure rule separates architectural bookkeeping from the numerical
purpose of each module.  From now on, saying that a module ``returns'' several
quantities means that they occur in its terminal hidden state as nonnegative
channels or signed pairs.  Intermediate scalar readouts are attached only
conceptually and are eliminated at the next fixed-side interface.  We may
therefore construct the loader, decoder, and routing modules separately and
then combine them without leaving the class of ordinary scalar-output fully
connected networks.

\subsection{Affine spline loading with serialized hinges}
\label{sec:spline-loading}

We first use the serialized hinge construction to turn $S$ latent coordinates
into interpolation data for a one-dimensional spline while keeping the width
independent of $S$.

\begin{lemma}[Affine spline loader]\label{lem:spline-loader}
Let $S\in\N^+$ and
$\bmy=(y_0,\ldots,y_{S-1})\in\R^S$.  There exist an architecture
$\Phi^{\mathrm{spl}}\in\Arch(5,2S+2)$ and an affine map
$\calA^{\mathrm{spl}}\in\aff(S,P_{\Phi^{\mathrm{spl}}})$, both independent of
$\bmy$, such that, after setting
\[
 \bmtheta:=\calA^{\mathrm{spl}}(\bmy),
\]
one has
\[
 \Phi^{\mathrm{spl}}_{\bmtheta}(m)=y_m,
 \qquad m=0,\ldots,S-1.
\]
Every target-dependent slot depends on at most three consecutive entries of
$\bmy$.
\end{lemma}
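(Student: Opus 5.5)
We realize the piecewise linear interpolant of the data $(m,y_m)_{m=0}^{S-1}$ and then apply Lemma~\ref{lem:slope-jump}, which in turn rests on Lemma~\ref{lem:hinge-serialization}. Concretely, let $\varphi_{\bmy}:\R\to\R$ be the continuous piecewise linear function determined by three requirements: $\varphi_{\bmy}(m)=y_m$ for $m=0,\ldots,S-1$; $\varphi_{\bmy}$ is linear on each $[m,m+1]$; and it is extended linearly outside $[0,S-1]$. Its breakpoints may be taken to be the fixed knots $\kappa_j$, independent of $\bmy$. The slopes are $m_j:=y_{j+1}-y_j$, and the value at the first knot is a coordinate of $\bmy$. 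Hence every coefficient in \eqref{eq:slope-jump} is a fixed linear function of $\bmy$. Each jump coefficient $m_j-m_{j-1}=y_{j+1}-2y_j+y_{j-1}$ involves only three consecutive entries, and the affine part $a+bx$ involves at most two.

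The knot count needs care so that the depth bound $2S+2$ from Lemma~\ref{lem:hinge-serialization} fits. I would use the $J=S-2$ interior knots $\kappa_j=j$ for $j=1,\ldots,S-2$, with $a=y_0$ and $b=y_1-y_0$. Then $\varphi_{\bmy}(x)=y_0+(y_1-y_0)x+\sum_{j=1}^{S-2}(y_{j+1}-2y_j+y_{j-1})\varrho(x-j)$, and Lemma~\ref{lem:hinge-serialization} places this in $\Arch(5,2(S-2)+2)\subseteq\Arch(5,2S+2)$. The degenerate cases need separate handling. For $S=1$ and $S=2$ there are no hinges, since $J=0$. Lemma~\ref{lem:hinge-serialization} requires $J\ge1$, so here I would add one dummy hinge, for instance at $\kappa_1=S$ with a fixed zero coefficient. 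This gives depth at most $4\le 2S+2$, and the fixed zero slot depends on no entries of $\bmy$.

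Next I would verify the interpolation identity $\varphi_{\bmy}(m)=y_m$. Evaluating at integer $m$ and telescoping the second differences gives the slope $y_{m}-y_{m-1}$ on $[m-1,m]$. Summing these slopes from $y_0$ yields $y_m$, which is the content of Lemma~\ref{lem:slope-jump}. The architecture produced by Lemma~\ref{lem:hinge-serialization} depends only on the knots, so it depends only on $S$. The slot assignment is one affine (in fact linear) map of $\bmy$, and this map is $\calA^{\mathrm{spl}}\in\aff(S,P_{\Phi^{\mathrm{spl}}})$.

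The main point needing attention is the locality claim, namely that every target-dependent slot depends on at most three consecutive entries. By the final sentence of Lemma~\ref{lem:hinge-serialization}, a varying slot uses only the coordinates appearing in one displayed coefficient. The candidates are $\pm a=\pm y_0$, $\pm b=\pm(y_1-y_0)$, and $\pm c_j=\pm(y_{j+1}-2y_j+y_{j-1})$, and each involves at most three consecutive entries. I would check against that lemma's proof that no slot ever combines two coefficients. This holds because $a$ enters as a bias and $b$ as a weight in separate rows and columns, and each $c_j$ enters only the update layer for hinge $j$.
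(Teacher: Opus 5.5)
Your proposal is correct and follows essentially the paper's route. Both write the piecewise linear interpolant in the hinge basis with second-difference coefficients, serialize it with Lemma~\ref{lem:hinge-serialization}, and read off locality from the individual coefficients $\pm a$, $\pm b$, $\pm c_j$.

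The differences are cosmetic. The paper extends constantly and places all $S$ hinges at $0,\ldots,S-1$, so that $b=0$ and the depth is exactly $2S+2$; it treats $S=1$ by an explicit signed pair. You instead extend linearly with $S-2$ interior hinges and use a dummy hinge for $S\le2$. For $S=1$ you must take $b=0$, since $y_1$ does not exist.
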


\begin{proof}
We express the linear interpolant through the points $(m,y_m)$ in the hinge
basis $1,x,\varrho(x-m)$.  Its coefficients are first and second differences
of the data and are therefore linear functions of $\bmy$.
Lemma~\ref{lem:hinge-serialization} then adds the hinges one at a time while
carrying $x$ and a signed running sum through a constant number of channels.

For $S=1$, let the first hidden layer use zero input weights and the two
code-dependent biases $y_0,-y_0$ to produce
$(\varrho(y_0),\varrho(-y_0))$.  Three fixed identity layers transport this
pair, and the fixed scalar readout $[1,-1]$ returns $y_0$.  If that readout is
omitted, the terminal hidden state still contains the same signed pair,
exactly as required by the later loader interface.  Fixed zero channels give
width at most $5$ and hidden depth $4=2S+2$.  The only varying slots are
linear in $\bmy$.

We now assume $S\ge2$.  The continuous piecewise linear interpolant, extended
constantly outside $[0,S-1]$, is
\begin{equation}\label{eq:hinge-loader}
 \begin{split}
 \varphi_{\bmy}(x)
 ={}&y_0+(y_1-y_0)\varrho(x)\\
 &+\sum_{m=1}^{S-2}(y_{m+1}-2y_m+y_{m-1})\varrho(x-m)\\
 &-(y_{S-1}-y_{S-2})\varrho(x-(S-1)).
 \end{split}
\end{equation}
To verify the interpolation, first note that
$\varphi_{\bmy}(0)=y_0$.  On the open interval $(j,j+1)$,
$j=0,\ldots,S-2$, the slope is
\[
 (y_1-y_0)
 +\sum_{m=1}^{j}(y_{m+1}-2y_m+y_{m-1})
 =y_{j+1}-y_j;
\]
the sum telescopes.  Hence
$\varphi_{\bmy}(j+1)-\varphi_{\bmy}(j)=y_{j+1}-y_j$, and induction gives
$\varphi_{\bmy}(j)=y_j$ for every required integer.  The final hinge at $S-1$
cancels the last slope, so the right extension is constant.  The formula is
also valid when $S=2$, in which case the middle sum is empty.

The representation \eqref{eq:hinge-loader} contains exactly $S$ hinges, at
$0,1,\ldots,S-1$.  Lemma~\ref{lem:hinge-serialization} therefore gives
width at most $5$ and hidden depth at most $2S+2$.  Its initial value and every
hinge coefficient are linear combinations of at most three consecutive
entries of $\bmy$.  The vector of all such coefficients is therefore a fixed
linear transformation of $\bmy$.  Placing those coefficients into the
corresponding network
slots defines the fixed affine map $\calA^{\mathrm{spl}}$ and proves the claimed
support bound.
\end{proof}

We will later use three copies of this loader in parallel.  Their combined
width is at most $15$, their depth is unchanged, and each target-dependent
slot still involves at most three consecutive coordinates from its data block.

\subsection{Fixed binary prefix decoder}
\label{sec:binary-prefix-decoder}

The complementary decoding module is entirely fixed.  It reads a prefix of a
finite binary string encoded in one real input.  The integer $D$ determines
the sequential reading time, while only the binary fraction supplied as input
changes with the encoded string.

For a bit vector
$\bmnu=(\nu_1,\ldots,\nu_D)\in\{0,1\}^D$, use the finite
binary-fraction notation
\[
 \operatorname{bin}0.\nu_1\cdots\nu_D
 :=\sum_{i=1}^D\nu_i2^{-i}.
\]
For fixed $D$, this encoding is injective on $\{0,1\}^D$; the familiar
ambiguity of infinite binary expansions does not arise.

\begin{lemma}[Binary prefix extraction]\label{lem:binary-prefix}
For every $D\in\N^+$, there exist a two-input architecture
$\Psi_D\in\Arch(20,4D+2)$ and a fixed parameter vector
$\bmtheta_D\in\R^{P_{\Psi_D}}$.  Denote the resulting function by
\[
 E_D:=(\Psi_D)_{\bmtheta_D}:\R^2\to\R.
\]
Then, for every $\bmnu\in\{0,1\}^D$,
\[
 E_D\bigl(\operatorname{bin}0.\nu_1\cdots\nu_D,k\bigr)
 =\sum_{i=1}^k\nu_i,
 \qquad k=0,1,\ldots,D,
\]
where the sum is understood to be zero when $k=0$.
The vector $\bmtheta_D$ depends on $D$, but none of its entries depends on
$\bmnu$.
\end{lemma}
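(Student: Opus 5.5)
The plan is to run the classical bit-extraction recursion sequentially, one bit per block of four hidden layers, while carrying three quantities in nonnegative channels: the current remainder $z_i:=\operatorname{bin}0.\nu_{i+1}\cdots\nu_D$ rescaled, the step counter (from $k$), and the accumulated partial sum. Set $z_0:=\operatorname{bin}0.\nu_1\cdots\nu_D\in[0,1-2^{-D}]$. The next bit is $\nu_{i+1}=\one_{\{z_i\ge 1/2\}}$. Because $z_i$ lies on the grid $2^{-(D-i)}\mathbb Z$, it is enough to evaluate the exact step on the gap $(1/2-2^{-(D-i)},1/2)$. There it is realized by the fixed two-hinge ramp
\[
 \nu_{i+1}=\varrho\bigl(2^{D-i}(z_i-1/2)+1\bigr)-\varrho\bigl(2^{D-i}(z_i-1/2)\bigr),
\]
and the update is $z_{i+1}=2z_i-\nu_{i+1}$. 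To keep the gate coefficients bounded in the same way at every stage, I would instead work with the unscaled integer $y_i:=2^{D}z_i$ tracked appropriately, or equivalently accept coefficients up to $2^{D+1}$, as anticipated in Section~\ref{sec:exact-real-scope}. All these coefficients depend only on $D$ and $i$, never on $\bmnu$.

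Next comes gating by $k$. The output must be $\sum_{i\le k}\nu_i$. At step $i$ I would compute the indicator $g_i:=\one_{\{k\ge i\}}$. Since $k$ is an integer, this is again the two-hinge ramp $\varrho(k-i+1)-\varrho(k-i)$. I then add $\min\{\nu_i,g_i\}=\varrho(\nu_i+g_i-1)$ to the running sum $s_i$. Both arguments take values in $\{0,1\}$, so this product is exactly one ReLU. The inputs $z_0\ge0$ and $k\ge0$ are carried as nonnegative channels. I would transport $k$ as the pair $(\varrho(k),\varrho(-k))$ per Lemma~\ref{lem:safe-composition}(i) to remain safe for arbitrary real inputs, though only integer $k\in\{0,\dots,D\}$ matter. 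The running sum $s_i$ is nonnegative and is copied by identities.

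Then comes the layer ledger. One step needs a layer computing the two ramp hinges for the bit and the two hinges for $g_i$ while copying $z_i,k,s_{i-1}$. The following layer forms $\nu_i$, $g_i$ and $\varrho(\nu_i+g_i-1)$; forming all three in a single layer is impossible because $\varrho(\nu_i+g_i-1)$ needs $\nu_i,g_i$ already formed, so the product needs its own layer. A third layer updates $z_{i}=2z_{i-1}-\nu_i$ (nonnegative, so a ReLU copy is exact) and $s_i$. With slack, four layers per bit plus two initialization layers gives depth $4D+2$. The number of live channels per layer is a small constant (at most about ten), well under $20$; unused slots are set to fixed zeros as in Corollary~\ref{cor:state-interface}. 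The final affine readout returns $s_D$.

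The main obstacle is verifying exactness of the threshold: I must show by induction that $z_i\in 2^{-(D-i)}\{0,1,\dots,2^{D-i}-1\}$. Then $z_i\ge1/2$ or $z_i\le 1/2-2^{-(D-i)}$, so the ramp with slope $2^{D-i}$ equals the indicator exactly, and no rounding error propagates. This uses the finite-length binary encoding emphasized before the lemma. Once that invariant is established, the identity $E_D(\operatorname{bin}0.\nu_1\cdots\nu_D,k)=\sum_{i\le k}\nu_i$ follows by summing the gated bits.
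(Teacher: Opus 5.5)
Your proposal is correct and follows essentially the same route as the paper: a two-hinge ramp made exact by the dyadic-grid invariant of the remainder, the doubling update $z_i=2z_{i-1}-\nu_i$, the step indicator $\varrho(k-i+1)-\varrho(k-i)$, the product $\varrho(\nu_i+g_i-1)$, and a running accumulator, with four hidden layers per bit. The differences are cosmetic. The paper applies one fixed gate with $\delta_D=2^{-(D+1)}$ to $2r_{i-1}$ at every stage and forms the new remainder directly from the hinge channels, whereas your stage-dependent slope $2^{D-i}$ works equally well. Your aside about rescaling to $y_i=2^Dz_i$ is unnecessary, because the lemma places no bound on coefficient size.
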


\begin{proof}
At stage $i$, a fixed piecewise linear gate extracts the leading bit of the
current binary remainder.  We double the remainder, subtract the extracted
bit, and add that bit to the accumulator exactly when $i\le k$.  Repeating
this fixed-width block $D$ times produces the required prefix sum.  The
details below also verify that no decoder parameter depends on the bit string.

The identities proved below are required only on the following finite
collection of legal inputs:
\[
 \left\{(\operatorname{bin}0.\nu_1\cdots\nu_D,k):
 \bmnu\in\{0,1\}^D,\ k\in\{0,\ldots,D\}\right\}.
\]
The constructed ReLU network is defined on all of $\R^2$, but the lemma makes
no assertion away from these inputs.  This distinction is useful because some
intermediate affine expressions are known to be nonnegative only at the legal
inputs.

Set $\delta_D:=2^{-(D+1)}$ and define the fixed gate
\begin{equation}\label{eq:bit-gate}
 g_D(t):=
 \frac{\varrho(t-1+\delta_D)-\varrho(t-1)}{\delta_D}.
\end{equation}
The three ranges can be checked separately.  If $t\le1-\delta_D$, both
ReLU terms vanish.  If $1-\delta_D\le t\le1$, only the first is active and
\[
 g_D(t)=\frac{t-1+\delta_D}{\delta_D}.
\]
If $t\ge1$, their difference is
$(t-1+\delta_D)-(t-1)=\delta_D$.  Hence
\[
 g_D(t)=0\quad(t\le1-\delta_D),\qquad
 g_D(t)=1\quad(t\ge1),
\]
with the displayed linear interpolation between the two plateaus.

Let $r_0=\operatorname{bin}0.\nu_1\cdots\nu_D$.  Recursively define
\begin{equation}\label{eq:bit-recurrence}
 e_i=g_D(2r_{i-1}),\qquad r_i=2r_{i-1}-e_i,\qquad i=1,\ldots,D.
\end{equation}
We first prove the complete remainder invariant
\begin{equation}\label{eq:bit-remainder-invariant}
 r_i=\sum_{j=i+1}^D\nu_j2^{-(j-i)},
 \qquad i=0,\ldots,D,
\end{equation}
where the sum is empty and therefore equal to zero when $i=D$.  For $i=0$,
\eqref{eq:bit-remainder-invariant} is exactly the definition of
$r_0$.  Assume it holds at index $i-1$.  Then
\[
 r_{i-1}
 =\sum_{j=i}^D\nu_j2^{-(j-i+1)}.
\]
If $\nu_i=0$, then
\[
 2r_{i-1}
 =\sum_{j=i+1}^D\nu_j2^{-(j-i)}
 \le1-2^{-(D-i)}
 \le1-2^{-(D-1)}
 =1-4\delta_D
 <1-\delta_D,
\]
where the empty sum for $i=D$ is zero.  The second inequality uses
$D-i\le D-1$, and the equality uses
$\delta_D=2^{-(D+1)}$.  Hence $e_i=0=\nu_i$.
If $\nu_i=1$, then $2r_{i-1}\ge1$, so $e_i=1=\nu_i$.
Substitution in \eqref{eq:bit-recurrence} gives
\[
 r_i
 =\sum_{j=i+1}^D\nu_j2^{-(j-i)}.
\]
This is the invariant at index $i$.  Induction therefore proves
both
\begin{equation}\label{eq:bit-extraction-invariant}
 e_i=\nu_i,
 \qquad
 r_i=\sum_{j=i+1}^D\nu_j2^{-(j-i)},
 \qquad i=1,\ldots,D.
\end{equation}
In particular, every $r_i$ lies in $[0,1)$ at a legal input.

For the integer input $k$, the quantity
\[
 \tau_i(k):=\varrho(k-i+1)-\varrho(k-i)
\]
 equals $1$ if $k\ge i$ and $0$ if $k<i$.  Since
 $e_i,\tau_i(k)\in\{0,1\}$ at the required inputs,
\[
 e_i\tau_i(k)=\varrho(e_i+\tau_i(k)-1).
\]
Thus a running accumulator adds $e_i$ exactly when $i\le k$.  More
formally, define
\[
 u_i:=\varrho(e_i+\tau_i(k)-1),
 \qquad
 A_0:=0,
 \qquad
 A_i:=A_{i-1}+u_i.
\]
The second induction invariant is
\begin{equation}\label{eq:bit-accumulator-invariant}
 A_i
 =\sum_{j=1}^i\nu_j\one_{\{j\le k\}}
 =\sum_{j=1}^{\min\{i,k\}}\nu_j,
 \qquad i=0,\ldots,D.
\end{equation}
Here the last sum is understood to be zero when $\min\{i,k\}=0$.
It holds for $i=0$ because both sides vanish.  If it holds for $i-1$, then
\eqref{eq:bit-extraction-invariant} and the definition of $\tau_i$ give
\[
 u_i=e_i\tau_i(k)=\nu_i\one_{\{i\le k\}}.
\]
Adding this term to $A_{i-1}$ proves
\eqref{eq:bit-accumulator-invariant} at index $i$.  At the end of the
$D$ stages,
\[
 A_D=\sum_{j=1}^k\nu_j,
\]
which is the desired output.

It remains to realize these recurrences by a ReLU network and verify the
resource bounds.  One four-layer block can be organized as follows.  Its
input is the nonnegative state $(r_{i-1},k,A_{i-1})$, whose values are
described by \eqref{eq:bit-remainder-invariant} and
\eqref{eq:bit-accumulator-invariant}:
\begin{enumerate}[label={\textup{(\arabic*)}}]
 \item compute the two hinges in \eqref{eq:bit-gate} and the two
 hinges defining $\tau_i(k)$, while copying $r_{i-1},k,A_{i-1}$;
 \item form the nonnegative values $e_i$, $\tau_i(k)$, and
 $r_i=2r_{i-1}-e_i$, while copying $k,A_{i-1}$;
 \item form $u_i=\varrho(e_i+\tau_i(k)-1)$ and copy
 $r_i,k,A_{i-1}$;
 \item set $A_i=A_{i-1}+u_i$ and copy $r_i,k$.
\end{enumerate}
The four operations above yield the following state and channel ledger.  The
middle column lists the nonnegative coordinates retained after each layer.
\begingroup
\renewcommand{\arraystretch}{1.214}
\setlength{\arraycolsep}{7pt}
\[
\begin{array}{@{}c l c@{}}
\toprule
\text{layer of block }i
&\text{state retained after the layer}
&\text{channels}\\
\midrule
1&
\begin{gathered}
 \text{copied state: }(r_{i-1},k,A_{i-1}),\\
 \text{hinges: }\varrho(2r_{i-1}-1+\delta_D),\
 \varrho(2r_{i-1}-1),\\
 \varrho(k-i+1),\ \varrho(k-i)
\end{gathered}
&7\\[3pt]
2&(r_i,k,A_{i-1},e_i,\tau_i(k))&5\\
3&(r_i,k,A_{i-1},u_i)&4\\
4&(r_i,k,A_i)&3\\
\bottomrule
\end{array}
\]
\endgroup
The only point requiring care is layer~2.  It computes $e_i$, $\tau_i(k)$,
and $r_i$ from the hinge channels produced by layer~1, together with the
copied value $r_{i-1}$.  In particular, the computation of $r_i$ does not use
the layer~2 postactivation $e_i$.  All coordinates listed in the state column
are nonnegative at the required inputs, so copied coordinates pass unchanged
through ReLU.  One initial hidden layer creates
$(r_0,k,A_0)$ with $A_0=0$; after the $D$ blocks, the scalar output layer
reads $A_D$.  Thus the hidden depth is at most
\[
 1+4D\le4D+2,
\]
and the maximum displayed width is $7\le20$.  Assigning zero to unused entries
of the actual dense layers gives a fully connected realization of width at
most $20$.  All constants depend
on $D$ through $\delta_D$ but not on the particular bit string, so the
parameter vector is fixed.
\end{proof}

This construction is the one-dimensional, single-stream specialization of
the bit-extraction schemes underlying
\cite[Theorem~2]{Bartlett98almostlinear} and
\cite[Lemma~4.5]{shijun:optimal:rate:in:width:and:depth}.  Since the full
recurrence is verified above, neither cited result is invoked as a black box.

\subsection{Point fitting with an affine code}
\label{sec:point-fitting}

The two preceding modules play complementary roles.  The spline loader uses
$S$ latent entries to select data for each block, and its parameter slots
depend affinely on those entries.  The fixed prefix decoder uses depth
proportional to $D$ to read the data stored in the selected block.  We now
combine the two modules into a one-dimensional construction that fits up to
$SD$ consecutive samples.  Here and below, ``affine'' refers to the parameter
map $\bmxi\mapsto\bmtheta$; the target-dependent selection of $\bmxi$ from the
samples need not be affine.

\begin{lemma}[Point fitting with an affine code]\label{lem:point-fitting}
Let $1\le S\le D$, $1\le J\le SD$, and $\eta\ge0$.  Suppose
$y_0,\ldots,y_{J-1}\in\R$ satisfy
\begin{equation*}
 \lvert y_j-y_{j-1}\rvert\le\eta,\qquad j=1,\ldots,J-1.
\end{equation*}
Write $\bmy:=(y_0,\ldots,y_{J-1})$.  There exist a one-input architecture
$\Phi^{\mathrm{fit}}\in\Arch(96,10D+24)$ and an affine map
$\calA^{\mathrm{fit}}\in\aff(3S+1,P_{\Phi^{\mathrm{fit}}})$, both depending
only on $(S,D)$, such that one can choose $\bmxi\in\R^{3S+1}$ and set
\[
 \bmtheta:=\calA^{\mathrm{fit}}(\bmxi)
\]
so that
\begin{equation}\label{eq:point-fitting-error}
 \left\lvert
 \Phi^{\mathrm{fit}}_{\bmtheta}(j)-y_j
 \right\rvert
 \le\eta,\qquad j=0,\ldots,J-1.
\end{equation}
Neither the architecture nor the affine map depends on the samples or on
$\eta$.
\end{lemma}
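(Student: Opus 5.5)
The plan is to store the samples in $S$ blocks of $D$ consecutive indices. Each block is described by one base value and two binary streams of quantized increments. The three spline loaders of Lemma~\ref{lem:spline-loader} select the block data, the fixed decoder of Lemma~\ref{lem:binary-prefix} reads the prefix sums, and one extra latent coordinate carries the step size $\eta$. This accounts for exactly $3S+1$ latent coordinates.

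\textbf{Quantization.} First dispose of $\eta=0$: then all $y_j$ coincide, and we use the same construction with zero streams, so that only the base value matters. For $\eta>0$, set $z_j:=\eta\lfloor y_j/\eta\rfloor$. Then $0\le y_j-z_j<\eta$. Since $\lvert y_j-y_{j-1}\rvert\le\eta$, the increments $z_j-z_{j-1}$ lie in $\{-\eta,0,\eta\}$.

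\textbf{Encoding.} Write $j=sD+k$ with $0\le s\le S-1$ and $0\le k\le D-1$; this is possible because $J\le SD$, and we pad with dummy indices. For block $s$, define bits $\nu_{s,i},\nu'_{s,i}\in\{0,1\}$ for $i=1,\ldots,D$ by
\[
 z_{sD+i}-z_{sD+i-1}=\eta(\nu_{s,i}-\nu'_{s,i}),
\]
with zeros where indices are absent. Then put $c_s:=\operatorname{bin}0.\nu_{s,1}\cdots\nu_{s,D}$, $c'_s:=\operatorname{bin}0.\nu'_{s,1}\cdots\nu'_{s,D}$, and $b_s:=z_{sD}$. The latent vector is
\[
 \bmxi:=(b_0,\ldots,b_{S-1},c_0,\ldots,c_{S-1},c'_0,\ldots,c'_{S-1},\eta).
\]
With this choice,
\[
 z_{sD+k}=b_s+\eta\bigl(E_D(c_s,k)-E_D(c'_s,k)\bigr)
\]
for every required pair $(s,k)$.

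\textbf{Network assembly.} The network is assembled in four stages.
\begin{enumerate}[label={\textup{(\arabic*)}}]
 \item A fixed first stage maps the integer input $j$ to the pair $(s,k)$. The staircase $j\mapsto\lfloor j/D\rfloor$ is continuous and piecewise linear, with at most $2S$ breakpoints placed between consecutive integers. By Lemma~\ref{lem:slope-jump} it has width $5$ and depth $O(S)\le O(D)$, and it is fixed. We then obtain $k=j-Ds$ by a fixed affine combination, transporting $j$ alongside via Lemma~\ref{lem:safe-composition}(i).
 \item Three copies of the loader of Lemma~\ref{lem:spline-loader}, evaluated at the integer $s$, output $b_s$, $c_s$, and $c'_s$ exactly. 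Their slots are affine in the corresponding blocks of $\bmxi$, and the routing that precedes them is fixed.
 \item Two copies of the fixed decoder $E_D$ run in parallel on $(c_s,k)$ and $(c'_s,k)$, while $b_s$ is carried as a signed pair.
 \item The final readout is
 \[
  b_{s,+}-b_{s,-}+\eta\bigl(E_D(c_s,k)-E_D(c'_s,k)\bigr).
 \]
 Its weights $\pm\eta$ are affine in the last latent coordinate, and it follows fixed decoder layers. Lemma~\ref{lem:safe-composition}(iii) and Corollary~\ref{cor:state-interface} therefore give a single affine map $\calA^{\mathrm{fit}}$.
\end{enumerate}
The output at $j$ is then $z_j$, which gives \eqref{eq:point-fitting-error}.

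\textbf{Main obstacle.} I expect the main difficulty to be the exactness and bookkeeping at the interfaces rather than any estimate. The decoder identities hold only at legal inputs, so the staircase and loaders must deliver \emph{exactly} $s$, $k$, and $\operatorname{bin}0.\bmnu$ at integer inputs; the choice of breakpoints strictly between integers ensures this. In addition, each loader-to-decoder interface must have a fixed side, and $\eta$ must enter only at the last layer so that no product of two code-dependent slots is formed.

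\textbf{Resource ledger.} Finally, add up widths and depths. The width is at most $15$ for the loaders, $40$ for the two decoders, plus a few transport channels, which stays below $96$. The depth is at most $(2S+2)$ for the staircase, $(2S+2)$ for the loaders, $(4D+2)$ for the decoders, plus split and alignment layers. Using $S\le D$, this is bounded by $10D+24$.
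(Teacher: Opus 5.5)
Your proposal is correct and follows essentially the same route as the paper: quantize at scale $\eta$, store each block's baseline directly together with two binary streams of up/down increments, select these three block quantities with affine spline loaders evaluated at the quotient $s$, and read prefix counts with two fixed decoders at the remainder $k$, with $\eta$ entering only through the output weights $\pm\eta$. The only slip is in your depth ledger. With $2(S-1)$ breakpoints, Lemma~\ref{lem:slope-jump} gives the quotient staircase depth $4S-2$, not $2S+2$. The corrected total $6S+4D+O(1)\le 10D+O(1)$ still fits within $10D+24$.
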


\begin{proof}
The construction has four stages.  First, we pad the sequence and write each
index as $j=mD+k$.  Second, within each block, quantization at scale $\eta$
turns every successive change into $-1$, $0$, or $1$; we store the block
baseline directly and encode the positive and negative changes in two binary
fractions.  Third, three affine spline loaders select these blockwise
quantities, and two fixed prefix decoders reconstruct the cumulative changes.
Finally, the scale $\eta$ enters through the generated output-layer weights
$\pm\eta$, so no network parameter is a product of two target-dependent
quantities.  Figure~\ref{fig:point-fitting} summarizes this data flow.

For $\eta>0$, the central reconstruction identity, with notation introduced
below, is $a_m+\eta(U_{m,k}^+-U_{m,k}^-)=\eta z_{m,k}$.  The loaders select the
baseline and the two bit streams, while the fixed decoders supply only their
prefix counts.  Multiplication by $\eta$ occurs only during the forward pass
through output weights whose slot values are $\pm\eta$; no parameter slot
contains a product of two latent-dependent coordinates.

Define a padded sequence $\bar{y}_0,\ldots,\bar{y}_{SD-1}$ by
\[
 \bar{y}_j
 :=\begin{cases}
 y_j,&0\le j\le J-1,\\
 y_{J-1},&J\le j\le SD-1.
 \end{cases}
\]
The original samples are unchanged, and the padding preserves the increment
bound because all appended differences vanish:
\begin{equation}\label{eq:padded-slow-sequence}
 \lvert\bar{y}_j-\bar{y}_{j-1}\rvert\le\eta,
 \qquad j=1,\ldots,SD-1.
\end{equation}
Every $j\in\{0,\ldots,SD-1\}$ has a unique representation
\[
 j=mD+k,\qquad 0\le m\le S-1,\quad 0\le k\le D-1.
\]

Assume first that $\eta>0$ and put
\[
 z_{m,k}:=\bigl\lfloor \bar{y}_{mD+k}/\eta\bigr\rfloor.
\]
For $m=0,\ldots,S-1$ and $k=1,\ldots,D-1$,
\eqref{eq:padded-slow-sequence} implies
$z_{m,k}-z_{m,k-1}\in\{-1,0,1\}$.  Indeed, put
$u=\bar{y}_{mD+k}/\eta$ and $v=\bar{y}_{mD+k-1}/\eta$.  Then
$\lvert u-v\rvert\le1$.
If $\lfloor u\rfloor\ge\lfloor v\rfloor+2$, then
$u\ge\lfloor v\rfloor+2>v+1$, a contradiction; the opposite inequality is
excluded in the same way.  Define the two increment bits
\[
 u_{m,k}^+:=\max\{z_{m,k}-z_{m,k-1},0\},\qquad
 u_{m,k}^-:=\max\{z_{m,k-1}-z_{m,k},0\}
\]
for $k=1,\ldots,D-1$, and append $u_{m,D}^+=u_{m,D}^-=0$.  These are bits,
and telescoping gives
\begin{equation}\label{eq:quantized-telescope}
 z_{m,k}
 =z_{m,0}
 +\sum_{i=1}^k u_{m,i}^+
 -\sum_{i=1}^k u_{m,i}^-.
\end{equation}

For each block, store the three real numbers
\[
 a_m:=\eta z_{m,0},
 \qquad
 \zeta_m^\pm
 :=\operatorname{bin}0.u_{m,1}^\pm\cdots u_{m,D}^\pm.
\]
Together with the single scale $\eta$, these entries form the latent
vector (or code)
\begin{equation}\label{eq:point-code}
 \bmxi
 :=(a_0,\ldots,a_{S-1},
      \zeta_0^+,\ldots,\zeta_{S-1}^+,
      \zeta_0^-,\ldots,\zeta_{S-1}^-,\eta)
 \in\R^{3S+1}.
\end{equation}
Although $a_m=\eta z_{m,0}$ in the target-dependent choice of the code,
$a_m$ is stored as a separate latent coordinate.  Hence the fixed affine
parameter map never needs to form this product.

We next construct a fixed routing module that recovers the block index $m$
and the within-block index $k$ from the input index $j$.  Define a continuous
piecewise linear function $Q_{S,D}$ that equals $m$ on
\[
 [mD,(m+1)D-1],\qquad m=0,\ldots,S-1,
\]
and interpolates linearly from $m$ to $m+1$ on
$[(m+1)D-1,(m+1)D]$ for $m=0,\ldots,S-2$.  Extend it constantly to
$(-\infty,0]$ and $[SD-1,\infty)$.  Thus $Q_{S,D}:\R\to\R$ is globally
continuous and, at every required integer $j=mD+k$, one has
\[
 Q_{S,D}(j)=m,\qquad k=j-DQ_{S,D}(j).
\]
The function has at most $2(S-1)$ breakpoints when $S\ge2$, since each of the
$S-1$ transition intervals contributes at most its left and right endpoint.
Its only slopes are
$0$ on the constant pieces and $1$ on the transition pieces, and all
breakpoints and slopes depend only on $(S,D)$.  Lemma~\ref{lem:slope-jump}
therefore writes $Q_{S,D}$ as a fixed hinge sum and realizes it with width at
most $5$ and depth at most
\[
 4(S-1)+2=4S-2.
\]
For $S=1$, set $Q_{1,D}=0$ and use one fixed hidden layer with terminal state
\[
 (j_+,j_-,Q_+,Q_-)
 :=(\varrho(j),\varrho(-j),0,0).
\]
The fixed scalar readout $Q_+-Q_-$ is zero.  After omitting this readout, the
same routing layer used below gives
\[
 m=\varrho(Q_+-Q_-)=0,
 \qquad
 k=\varrho\bigl(j_+-j_--D(Q_+-Q_-)\bigr)=j
\]
at every required integer $j=0,\ldots,D-1$.  Thus the exceptional case has
exactly the same terminal-state interface as the case $S\ge2$, and its two
hidden layers, including the routing layer, lie within the bound $4S+2$.

For $S\ge2$, apply the construction from
Lemma~\ref{lem:hinge-serialization} to $Q_{S,D}$ and omit only its final
scalar readout.  Its terminal hidden state before that readout contains the
transported input pair and the running-sum pair.  Together with the explicit
construction above for $S=1$, we may therefore in every case retain a
terminal state containing the signed pairs
\[
 (j_+,j_-)
 \quad\text{and}\quad
 (Q_+,Q_-)
 :=\bigl(\varrho(Q_{S,D}(j)),\varrho(-Q_{S,D}(j))\bigr).
\]
One fixed routing layer forms
\[
 m=\varrho(Q_+-Q_-),
 \qquad
 k=\varrho\bigl(j_+-j_--D(Q_+-Q_-)\bigr).
\]
At a required integer $j=mD+k$, both displayed preactivations are
nonnegative and equal the desired quotient and remainder.  Keeping the signed
input is necessary because the architecture has no skip connections.  The
serialized state together with this routing layer uses at most twelve
channels and hidden depth at most $4S+2$.

With the quotient and remainder available, we now load the three blockwise
code values.  To make their common affine domain explicit, let
\[
 \pi_a,\pi_+,\pi_-:\R^{3S+1}\to\R^S,
 \qquad
 \pi_\eta:\R^{3S+1}\to\R
\]
be the fixed coordinate projections for which
\[
 \pi_a(\bmxi)=(a_m)_{m=0}^{S-1},\qquad
 \pi_+(\bmxi)=(\zeta_m^+)_{m=0}^{S-1},\qquad
 \pi_-(\bmxi)=(\zeta_m^-)_{m=0}^{S-1},\qquad
 \pi_\eta(\bmxi)=\eta.
\]
Apply Lemma~\ref{lem:spline-loader} in parallel with parameter maps
$\calA^{\mathrm{spl}}\circ\pi_a$,
$\calA^{\mathrm{spl}}\circ\pi_+$, and
$\calA^{\mathrm{spl}}\circ\pi_-$.  At the integer input $m$, the outputs are
$a_m,\zeta_m^+,\zeta_m^-$.  Every generated spline coefficient is therefore
a fixed linear combination of at most three coordinates of
\eqref{eq:point-code}.  The nonnegative remainder
$k\in\{0,\ldots,D-1\}$ is copied by fixed identity channels through all
loader layers.  The three five-channel loaders, together with the transported
remainder, use at most sixteen channels.

At the interface to the decoders, omit the three scalar output layers and
merge their affine readouts with a following ReLU interface layer whose
pre-merge coefficients are fixed.  That
interface converts the possibly signed baseline to
\[
 a_{m,+}:=\varrho(a_m),
 \qquad
 a_{m,-}:=\varrho(-a_m),
\]
while the two nonnegative binary fractions and $k$ are copied.  Thus the state
entering the next module can be taken to be
\[
 (a_{m,+},a_{m,-},\zeta_m^+,\zeta_m^-,k).
\]
The omitted loader readouts are also fixed when $S=1$, because of the
signed-pair implementation in Lemma~\ref{lem:spline-loader}.  Hence their
merger with the fixed interface never multiplies two target-dependent
coefficients.

Next apply two parallel copies of the fixed decoder from
Lemma~\ref{lem:binary-prefix}, one to $(\zeta_m^+,k)$ and the other to
$(\zeta_m^-,k)$.  They return
\[
 U_{m,k}^+:=\sum_{i=1}^k u_{m,i}^+,\qquad
 U_{m,k}^-:=\sum_{i=1}^k u_{m,i}^-.
\]
While the fixed decoders run, $(a_{m,+},a_{m,-})$ is transported through
identity channels.  Two width-$20$ decoders together with these two baseline
channels use width at most $42$; we retain the looser bound $44$ in the ledger
below.  At the terminal decoder layer, retain the two accumulator coordinates
$U_{m,k}^+$ and
$U_{m,k}^-$ and omit the decoders' fixed scalar readouts.  The scalar output
layer of the complete network is
\begin{equation}\label{eq:point-decoder-output}
 (a_{m,+}-a_{m,-})+\eta U_{m,k}^+-\eta U_{m,k}^-.
\end{equation}
The multiplication by $\eta$ is not part of a nonlinear parameter map:
the two varying output-layer weights are $\pi_\eta(\bmxi)$ and
$-\pi_\eta(\bmxi)$.  They are therefore affine in the latent vector.  By
\eqref{eq:quantized-telescope}, the readout in
\eqref{eq:point-decoder-output} equals $\eta z_{m,k}$.  The defining
property of the floor function gives
\[
 0\le \bar{y}_{mD+k}-\eta z_{m,k}<\eta.
\]
For every original index $j<J$, one has $\bar{y}_j=y_j$, so this proves
\eqref{eq:point-fitting-error}.

If $\eta=0$, \eqref{eq:padded-slow-sequence} makes the padded sequence
constant.  Set every baseline $a_m$ equal to $\bar{y}_0$, set both binary
streams to zero, and set the scale coordinate to zero.  The same architecture
then returns $\bar{y}_0$ at every required integer and hence reproduces the
original data exactly.  The baseline coordinates are stored directly in the
latent vector, so the affine map never forms a product of $\eta$ with
another latent coordinate.

It remains to verify the architecture budget and the affine nature of the
complete assignment.  At the required integer inputs, the stagewise
quantities evolve as
\[
\begin{aligned}
 (m,k)
 &\longrightarrow (a_m,\zeta_m^+,\zeta_m^-,k)\\
 &\longrightarrow (a_{m,+},a_{m,-},\zeta_m^+,\zeta_m^-,k)\\
 &\longrightarrow (a_{m,+},a_{m,-},U_{m,k}^+,U_{m,k}^-)\\
 &\longrightarrow
 (a_{m,+}-a_{m,-})+\eta U_{m,k}^+-\eta U_{m,k}^-.
\end{aligned}
\]
The first arrow is implemented by the three parallel loaders while $k$ is
copied.  Their scalar affine readouts are merged into the fixed
split/copy interface layer implementing the second arrow.  The third arrow is
implemented by two fixed prefix decoders while the baseline pair is copied;
their fixed scalar readouts are omitted and their terminal nonnegative
accumulators are retained.  The last arrow is the single scalar output layer
with weights $1,-1,\eta,-\eta$ and zero bias.

\Needspace{18\baselineskip}
The corresponding resource ledger is given below.  The width budget is the
maximum of the modulewise width bounds, whereas the hidden-depth
contributions are added sequentially.  The target-dependent slots are
identified immediately after the table.
\begin{center}
\begingroup
\small
\renewcommand{\arraystretch}{1.16}
\setlength{\tabcolsep}{4pt}
\begin{tabularx}{0.95\linewidth}{
 @{}
 >{\raggedright\arraybackslash}p{.29\linewidth}
 >{\centering\arraybackslash}X
 >{\centering\arraybackslash}p{.12\linewidth}
 >{\centering\arraybackslash}p{.18\linewidth}
 @{}
}
\toprule
Module
& State at the module output
& Width
& Hidden depth\\
\midrule
Quotient and remainder routing
& $(m,k)$
& $\le12$
& $\le4S+2$\\
\addlinespace[2pt]
Three spline loaders and transported $k$
& terminal states of the three loaders and $k$
& $\le16$
& $\le2S+2$\\
\addlinespace[2pt]
Loader readout and split/copy interface
& $\begin{gathered}
   (a_{m,+},a_{m,-}),\\
   (\zeta_m^+,\zeta_m^-,k)
  \end{gathered}$
& $\le5$
& $1$\\
\addlinespace[2pt]
Two prefix decoders and transported baseline pair
& $\begin{gathered}
   (a_{m,+},a_{m,-}),\\
   (U_{m,k}^+,U_{m,k}^-)
  \end{gathered}$
& $\le44$
& $\le4D+2$\\
\addlinespace[2pt]
Final affine readout
& one scalar
& $1$
& $0$\\
\bottomrule
\end{tabularx}
\endgroup
\end{center}
The first module is completely fixed.  In the loader module, the three maps
$\calA^{\mathrm{spl}}\circ\pi_a$,
$\calA^{\mathrm{spl}}\circ\pi_+$, and
$\calA^{\mathrm{spl}}\circ\pi_-$ place every varying hinge coefficient into
its designated block of network slots; fixed zeros occupy the off-block
connections.  Hence each affected slot is affine in the common vector
$\bmxi$.  The two prefix decoders and all transport connections are fixed.
In the last row, the two varying output weights are
$\pi_\eta(\bmxi)$ and $-\pi_\eta(\bmxi)$, while the coefficients of
$a_{m,+}-a_{m,-}$ are fixed.  Concatenating these blockwise assignments in the
fixed global slot order defines a single affine map from $\R^{3S+1}$ into the
complete parameter space.  Thus every varying slot has been identified
explicitly.

The width bounds include every signed transport channel.  The quotient and
remainder routing layer is already included in the first row; the loader
readout and split/copy interface is displayed separately; alignment inside a
parallel block uses fixed identity/zero padding up to the common block depth
and contributes no additional depth beyond the module bounds already listed;
and the final affine readout is the network's scalar output layer, not a
hidden layer.  The sequential hidden depths therefore add to at most
\[
 (4S+2)+(2S+2)+1+(4D+2)
 =6S+4D+7
 \le10D+7
 \le10D+24.
\]
The maximum used width is $44\le96$, and all unused dense slots are fixed at
zero.  The slot-by-slot description following the table, together with
Lemma~\ref{lem:safe-composition}, shows that every parameter assignment is
affine in the code and that no product of two target-dependent coefficients is
created.  Every loader row uses at most three latent coordinates.  This
defines $\calA^{\mathrm{fit}}$ and completes the proof.
\end{proof}

\begin{figure}[ht]
\centering
\includegraphics[width=0.874796\textwidth]{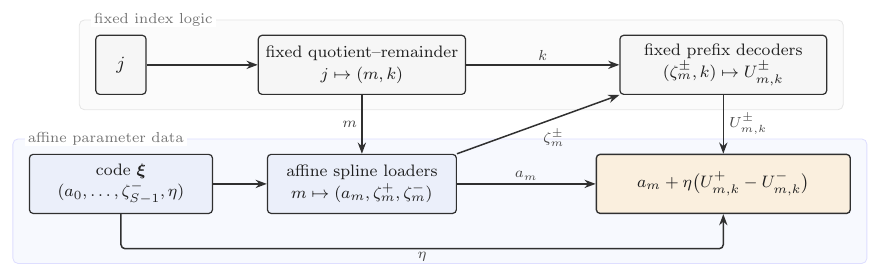}
\caption{Data flow in affine-coded point fitting.  Orange nodes contain the
latent vector or values loaded affinely from it; quotient computation,
prefix extraction, and routing are fixed.}
\label{fig:point-fitting}
\end{figure}

\subsection{Spatial addressing, bridge sequences, and boundary repair}
\label{sec:spatial-repair}

Lemma~\ref{lem:point-fitting} fits a slowly varying table indexed by integers.
To transfer this construction to a function on $[0,1]^d$, we must map each
spatial point to a table address, arrange the stored values so that adjacent
addresses vary slowly, and repair the thin regions in which a continuous
address map transitions between integer values.  The next three lemmas carry
out these tasks in order.  The first is a fixed-width specialization of the
step construction in
\cite[Proposition~3.1]{shijun:optimal:rate:in:width:and:depth}.

\begin{lemma}[One-dimensional address map]\label{lem:address-1d}
Let $q\in\N^+$, $K=q^2$, and $0<\delta\le1/(3K)$.  There exist a
one-input architecture $\Psi\in\Arch(20,8q+4)$ and a fixed parameter vector
$\bmtheta\in\R^{P_\Psi}$ such that
\begin{equation}\label{eq:address-map}
 \Psi_{\bmtheta}(x)=k
 \quad\text{for }x\in
 \bigl[\tfrac{k}{K},\tfrac{k+1}{K}
       -\delta\one_{\{k\le K-2\}}\bigr],
 \quad k=0,\ldots,K-1.
\end{equation}
All parameters depend only on $(K,\delta)$.
Moreover, in the realization constructed below, the terminal hidden state
contains nonnegative channels from which the address is obtained by a fixed
affine readout.  Consequently, this readout may be omitted and merged into a
following fixed affine layer.
\end{lemma}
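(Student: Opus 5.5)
The construction uses two levels with $K=q^2$ and writes $k=q\,k_1+k_0$, where $0\le k_1,k_0\le q-1$. This mirrors the fixed-width step construction in \cite[Proposition~3.1]{shijun:optimal:rate:in:width:and:depth}: a coarse staircase with $q$ steps, followed by a fine staircase with $q$ steps applied to a rescaled local coordinate. A direct staircase with $K$ steps would need about $K$ hinges and hence depth proportional to $K$. Splitting into two levels of $q$ steps each keeps the depth proportional to $q$, which is the claimed $8q+4$.

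\textbf{Step 1 (coarse index).} Define a continuous piecewise linear function $Q_1$. It equals $k_1$ on $[k_1/q,(k_1+1)/q-\delta]$ and rises linearly to $k_1+1$ on the gap $[(k_1+1)/q-\delta,(k_1+1)/q]$, for $k_1\le q-2$. It is extended constantly outside $[0,1]$. This function has at most $2(q-1)$ breakpoints, and its slopes are fixed ($0$ or $1/\delta$). By Lemma~\ref{lem:slope-jump} and Lemma~\ref{lem:hinge-serialization}, it is realized with width $5$ and depth at most $4q$. As in the routing construction of Lemma~\ref{lem:point-fitting}, I retain the terminal state consisting of the signed pairs $(x_+,x_-)$ and $(Q_{1,+},Q_{1,-})$ and omit the readout. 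A fixed layer then forms the nonnegative local coordinate
\[
 t:=\varrho\bigl(qx-Q_1(x)\bigr)\in[0,1].
\]
On each coarse plateau, $t=qx-k_1$. Because $\delta\le 1/(3K)$, every fine good interval $[k/K,(k+1)/K-\delta]$ with $k_0\le q-2$, and also the last fine interval inside the coarse plateau, is mapped into $[k_0/q,(k_0+1)/q-q\delta]$. Alongside $t$, I copy $Q_{1,\pm}$.

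\textbf{Step 2 (fine index).} Apply a second fixed staircase $Q_0$ of the same shape to $t$, with gap width $q\delta$. It equals $k_0$ on $[k_0/q,(k_0+1)/q-q\delta]$, and $q\delta\le 1/(3q)$. This costs another width-$5$ block of depth at most $4q$, run in parallel with transport of $Q_{1,\pm}$. The address is $\Psi_\theta(x)=qQ_1+Q_0$, a fixed readout of nonnegative/signed terminal channels. This yields the final ``moreover'' claim, since the readout can be merged by Lemma~\ref{lem:safe-composition}(iii). Adding the two blocks plus a few interface layers gives depth at most $8q+4$ and width well under $20$. All parameters depend only on $(q,\delta)$, hence only on $(K,\delta)$.

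\textbf{Main obstacle: the interval bookkeeping.} I must check that every point of $[k/K,(k+1)/K-\delta\one_{\{k\le K-2\}}]$ lands simultaneously on the correct coarse plateau and on the correct fine plateau. Three cases need care.
\begin{itemize}
\item When $k_0=q-1$ and $k_1\le q-2$, the fine interval ends exactly where the coarse gap begins. There $t\le 1-q\delta$, so the fine value $q-1$ is attained, but the coarse gap must not be entered. This is why the coarse gap is placed at width $\delta$ just before $(k_1+1)/q$.
\item For the last index $k=K-1$ there is no $\delta$ deduction. The constant extensions of $Q_1$ and $Q_0$ beyond $1$ handle this.
\item On the coarse plateaus, $t$ must be exactly $qx-k_1$, which forces the ReLU in its definition to act as the identity there.
\end{itemize}
I would verify each case directly from the plateau definitions.
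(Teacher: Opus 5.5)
Your proposal is correct and is essentially the paper's construction. Your fine staircase satisfies $Q_0(s)=\varphi_2(s/q)$, so your address $qQ_1(x)+Q_0\bigl(qx-Q_1(x)\bigr)$ coincides with the paper's $q\varphi_1(x)+\varphi_2\bigl(x-\varphi_1(x)/q\bigr)$. The only differences are that you rescale the local coordinate by $q$, which turns the fine gap into $q\delta$, and clip it with a ReLU so it travels as one nonnegative channel, whereas the paper carries the unscaled remainder as a signed pair $(r_+,r_-)$.

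The one detail you leave implicit is $q=1$ ($K=1$). There the staircases have no breakpoints, so Lemma~\ref{lem:hinge-serialization} does not literally apply. The paper handles this case with a single fixed layer realizing the constant zero address; a dummy hinge with zero coefficient would work equally well.
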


\begin{proof}
Write the desired address as $k=mq+\ell$.  The first continuous staircase
recovers the coarse index $m$.  After subtracting $m/q$ from the input, a
second staircase recovers the fine index $\ell$.  Both staircases interpolate
continuously across gaps of width $\delta$ and are serialized at fixed width;
the final output is $qm+\ell$.

If $q=1$, then $K=1$.  Use one fixed hidden layer whose state contains
\[
 \bigl(\varrho(x),\varrho(-x),0,0\bigr),
\]
and use the fixed scalar readout that subtracts the fourth coordinate from
the third.  This realizes the required
constant address, belongs to $\Arch(20,8q+4)$, and, after the readout is
omitted, leaves an explicit terminal address pair for the later composition.
Hence assume $q\ge2$ for the staircase construction.

Construct a continuous piecewise linear function $\varphi_1$ that equals $m$ on
\[
 \bigl[\tfrac{m}{q},\tfrac{m+1}{q}-\delta\bigr],
 \qquad m=0,\ldots,q-2,
\]
and equals $q-1$ on the last coarse interval $[(q-1)/q,1]$.  On each
omitted gap interpolate linearly, and extend $\varphi_1$ constantly to
$(-\infty,0]$ and $[1,\infty)$.  Construct $\varphi_2$ on $[0,1/q]$ in the same
way so that it equals $\ell$ on
\[
 \bigl[\tfrac{\ell}{q^2},
       \tfrac{\ell+1}{q^2}-\delta\one_{\{\ell\le q-2\}}\bigr],
 \qquad \ell=0,\ldots,q-1.
\]
Thus the last plateau extends to the endpoint $1/q$.  Extend $\varphi_2$
constantly to $(-\infty,0]$ and $[1/q,\infty)$.  Both functions are now
continuous piecewise linear functions on all of $\R$.  Their breakpoints,
values, and slopes depend only on $(q,\delta)$, so
Lemma~\ref{lem:slope-jump} gives fixed hinge-sum representations for both
staircases.
The resulting address function is
\begin{equation*}
 \iota_{K,\delta}(x)
 =q\varphi_1(x)+\varphi_2\bigl(x-\tfrac{\varphi_1(x)}{q}\bigr).
\end{equation*}
Let $x$ belong to the interval in \eqref{eq:address-map} and write
$k=mq+\ell$, with $0\le m,\ell\le q-1$.  We first verify explicitly that
$x$ lies on the $m$th coarse plateau.

If $m\le q-2$ and $\ell\le q-2$, then
\[
 \frac{k+1}{K}-\delta
 =\frac{m}{q}+\frac{\ell+1}{q^2}-\delta
 \le\frac{m+1}{q}-\delta.
\]
If $m\le q-2$ and $\ell=q-1$, then the right endpoint of the fine interval is
exactly $(m+1)/q-\delta$.  Thus in both subcases the full fine interval lies
inside
\[
 \bigl[\tfrac{m}{q},\tfrac{m+1}{q}-\delta\bigr].
\]
If $m=q-1$, every fine interval lies inside the last coarse interval
$[(q-1)/q,1]$.  We have therefore proved in all cases that
$\varphi_1(x)=m$.

Subtracting $m/q$ gives
\[
 x-\tfrac{m}{q}
 \in\bigl[
 \tfrac{\ell}{q^2},
 \tfrac{\ell+1}{q^2}-\delta\one_{\{k\le K-2\}}
 \bigr].
\]
If $\ell\le q-2$, then necessarily $k\le K-2$, and this is exactly the
$\ell$th plateau interval of $\varphi_2$.  If $\ell=q-1$ and $m\le q-2$,
the right endpoint is $1/q-\delta$, which lies in the last plateau.  Finally,
if $m=\ell=q-1$, then $k=K-1$ and the right endpoint is $1/q$; the last
plateau was defined to include that endpoint.  Hence
\[
 \varphi_2\bigl(x-\tfrac{m}{q}\bigr)=\ell,
\]
and consequently $\iota_{K,\delta}(x)=qm+\ell=k$.

Each staircase has $2(q-1)$ breakpoints: the two endpoints of each of its
$q-1$ transition gaps.  By Lemma~\ref{lem:slope-jump}, each staircase is a
hinge sum with $2(q-1)$ hinges, so Lemma~\ref{lem:hinge-serialization}
realizes it with width $5$ and depth at most
\[
 4(q-1)+2=4q-2.
\]

We next make the carried states explicit.  At the end of the first serialized
staircase, retain
\[
 (x_+,x_-)
 \quad\text{and}\quad
 (H_{1,+},H_{1,-})
 :=\bigl(\varrho(\varphi_1(x)),
         \varrho(-\varphi_1(x))\bigr).
\]
One fixed interface layer forms the signed remainder
\[
 \begin{aligned}
 r_+
 &:=\varrho\bigl(x_+-x_- -\tfrac{H_{1,+}-H_{1,-}}{q}\bigr),\\
 r_-
 &:=\varrho\bigl(-x_++x_- +\tfrac{H_{1,+}-H_{1,-}}{q}\bigr).
 \end{aligned}
\]
Therefore
\[
 r_+-r_-=x-\frac{\varphi_1(x)}{q}.
\]
The pair $(H_{1,+},H_{1,-})$ is copied by fixed identity channels while the
second serialized staircase is evaluated at the signed input $(r_+,r_-)$.  A
serialized hinge sum uses at most five channels; retaining the two channels
for $\varphi_1$ raises this to at most seven.  The interface state itself has
only four channels.  Hence width $7$ already suffices for the described
states, and the stated width $20$ provides conservative uniform padding.

To justify the reuse of a signed input explicitly, every hinge
$\varrho(r-\kappa)$ in the scalar serialization is replaced by
\[
 \varrho(r_+-r_- -\kappa).
\]
The channels $(r_+,r_-)$ are copied by the same fixed identity updates used
in Lemma~\ref{lem:hinge-serialization}; its running signed sum is unchanged.
Thus the second serializer implements the same hinge formula at
$r=r_+-r_-$, without any multiplication or additional target-dependent
parameter slot.

The displayed interface is one fixed hidden layer.  The first initialization
layer of the second serializer accepts $(r_+,r_-)$ directly (rather than
recomputing them from a scalar input), forms its own fixed initial value
channels, and copies $(H_{1,+},H_{1,-})$.  All subsequent layers copy the
retained pair produced by the first staircase using fixed identities.
Consequently, no additional alignment layer is needed, and the total hidden
depth is at most
\[
 (4q-2)+1+(4q-2)=8q-3\le8q+4.
\]
At the terminal hidden state, write the second staircase value as the signed
pair $(H_{2,+},H_{2,-})$.  The fixed affine output row computes
\[
 q(H_{1,+}-H_{1,-})+(H_{2,+}-H_{2,-})
 =q\varphi_1+\varphi_2.
\]
All coefficients are fixed functions of $(q,\delta)$, and the final affine
row is fixed as well.  This produces $\Psi_{\bmtheta}$ and proves the
lemma without invoking an
external network construction for step functions.
\end{proof}

To combine the coordinate addresses into a single table index while reserving
room for bridge values, define, for
$\bmbeta=(\beta_1,\ldots,\beta_d)\in\{0,\ldots,K-1\}^d$,
\begin{equation}\label{eq:lex-index}
 \operatorname{row}(\bmbeta)
 :=\sum_{\ell=1}^{d-1}\beta_\ell K^{d-1-\ell},
 \qquad
 \operatorname{addr}(\bmbeta)
 :=2K\operatorname{row}(\bmbeta)+\beta_d.
\end{equation}
The empty sum is zero when $d=1$.  The factor $2K$ reserves $K$ unused
addresses after each row for bridge values.

\begin{lemma}[Slow bridge sequence]\label{lem:bridge}
Let $f\in C([0,1]^d)$ and $K\in\N^+$.  There exist $2K^d$ samples
$y_0,\ldots,y_{2K^d-1}$ such that
\begin{equation}\label{eq:bridge-target}
 y_{\operatorname{addr}(\bmbeta)}=f(\bmbeta/K),\qquad
 \bmbeta\in\{0,\ldots,K-1\}^d,
\end{equation}
and
\begin{equation}\label{eq:bridge-increment}
 \lvert y_j-y_{j-1}\rvert
 \le\omega_f\bigl(\tfrac{\sqrt d}{K}\bigr),
 \qquad j=1,\ldots,2K^d-1.
\end{equation}
\end{lemma}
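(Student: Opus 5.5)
The plan is to build the sequence row by row. Inside each row the stored values are the samples of $f$ along the last coordinate. Each block of $K$ reserved bridge addresses linearly interpolates the \emph{values} (not the points) from the end of one row to the start of the next. The only nontrivial ingredient is the subadditivity $\omega_f(Kr)\le K\omega_f(r)$ recorded after \eqref{eq:modulus}.

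First, fix the indexing. The map $\bmbeta\mapsto\operatorname{row}(\bmbeta)$ is a bijection from the first $d-1$ coordinates onto $\{0,\ldots,K^{d-1}-1\}$. Every $j\in\{0,\ldots,2K^d-1\}$ can be written uniquely as $j=2Kr+s$ with $0\le r\le K^{d-1}-1$ and $0\le s\le 2K-1$. Let $\bmbeta'(r)\in\{0,\ldots,K-1\}^{d-1}$ be the prefix with $\operatorname{row}=r$, and write $\bm{p}_{r,s}:=(\bmbeta'(r),s)/K\in[0,1]^d$ for $s\le K-1$. Define
\[
 y_{2Kr+s}:=f(\bm{p}_{r,s}),\qquad 0\le s\le K-1,
\]
which gives \eqref{eq:bridge-target} directly. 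For the bridge slots $s=K+i$, $i=0,\ldots,K-1$, and $r\le K^{d-1}-2$, set
\[
 y_{2Kr+K+i}:=f(\bm{p}_{r,K-1})+\tfrac{i+1}{K+1}\bigl(f(\bm{p}_{r+1,0})-f(\bm{p}_{r,K-1})\bigr).
\]
For the last row $r=K^{d-1}-1$, set every bridge value equal to $f(\bm{p}_{r,K-1})$. The case $K=1$ needs no separate treatment, since then there are no in-row steps.

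Next, check \eqref{eq:bridge-increment}, of which there are three kinds. Within a row, consecutive points differ by $1/K$ in the last coordinate only. Their distance is therefore $1/K\le\sqrt d/K$, and the increment is at most $\omega_f(\sqrt d/K)$ by monotonicity. Along a bridge, including the step into the first bridge slot and the step from the last bridge slot into $y_{2K(r+1)}=f(\bm{p}_{r+1,0})$, there are $K+1$ equal increments of size
\[
 \frac{\lvert f(\bm{p}_{r+1,0})-f(\bm{p}_{r,K-1})\rvert}{K+1}.
\]
The last row's constant bridge contributes zero increments.

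The main point is bounding the bridge increment, because the lexicographic successor of a prefix can involve carries. The two endpoints $\bm{p}_{r,K-1}$ and $\bm{p}_{r+1,0}$ may therefore be far apart in $[0,1]^d$, although never farther than the diameter $\sqrt d$. Hence the numerator is at most $\omega_f(\sqrt d)=\omega_f(K\cdot\sqrt d/K)\le K\,\omega_f(\sqrt d/K)$. Dividing by $K+1$ gives an increment of at most $\tfrac{K}{K+1}\omega_f(\sqrt d/K)\le\omega_f(\sqrt d/K)$. Interpolating values rather than walking through intermediate lattice points is what makes this work for arbitrary $d$ without any path-length bookkeeping.
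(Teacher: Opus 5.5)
Your proposal is correct and follows essentially the same route as the paper: grid samples fill the first $K$ slots of each row, the $K$ reserved slots linearly interpolate the \emph{values} $f(\bm{p}_{r,K-1})$ and $f(\bm{p}_{r+1,0})$, the last row ends in a constant tail, and each bridge increment is bounded using $\omega_f(\sqrt d)\le K\,\omega_f(\sqrt d/K)$. The only cosmetic difference is that you split the jump into $K+1$ equal steps ending at $y_{2K(r+1)}$, whereas the paper uses $K$ steps whose last bridge value already equals $v_{i+1,0}$, so the following step is zero; both give the required bound.
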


\begin{proof}
In each row, the first $K$ addresses store grid values, while the next
$K$ addresses form a linear bridge from the last value in the current row to
the first value in the next row.  Each bridge increment is the endpoint
difference divided by $K$, and the integer scaling property of the modulus
bounds it at the desired scale $\sqrt d/K$.  We now define the sequence
explicitly.

For every $i\in\{0,\ldots,K^{d-1}-1\}$, let
$\bmbeta'(i)\in\{0,\ldots,K-1\}^{d-1}$ be the unique prefix
multi-index with lexicographic number $i$.  Concretely, if
$\bmbeta'(i)=(\beta_1,\ldots,\beta_{d-1})$, then its base-$K$ digits satisfy
\[
 i=\sum_{\ell=1}^{d-1}\beta_\ell K^{d-1-\ell}.
\]
Consequently, adjoining the last digit $k$ gives
$\operatorname{addr}((\bmbeta'(i),k))=2Ki+k$.  When $d=1$,
$\bmbeta'(i)$ is the empty vector and there is only one row.  Define
\[
 v_{i,k}
 :=f\bigl(\tfrac{1}{K}(\bmbeta'(i),k)\bigr),
 \qquad k=0,\ldots,K-1,
\]
and assign the actual grid values by
\[
 y_{2Ki+k}:=v_{i,k},\qquad k=0,\ldots,K-1.
\]
This is exactly \eqref{eq:bridge-target}.  Two consecutive actual
values $v_{i,k-1}$ and $v_{i,k}$ are sampled at points whose Euclidean
distance is $1/K$, and hence their difference is at most
$\omega_f(1/K)\le\omega_f\bigl(\tfrac{\sqrt d}{K}\bigr)$.

For $i<K^{d-1}-1$, set, for $u=1,\ldots,K$,
\begin{equation*}
 y_{2Ki+K-1+u}
 :=\bigl(1-\tfrac{u}{K}\bigr)v_{i,K-1}
   +\tfrac{u}{K}v_{i+1,0}.
\end{equation*}
These are the $K$ bridge addresses
$2Ki+K,\ldots,2K(i+1)-1$.  Every consecutive bridge increment, including
the first increment from $v_{i,K-1}$, has magnitude
\[
 \frac{1}{K}\lvert v_{i+1,0}-v_{i,K-1}\rvert
 \le\frac{1}{K}\omega_f(\sqrt{d})
 \le\omega_f\bigl(\tfrac{\sqrt d}{K}\bigr).
\]
The first inequality uses the diameter $\sqrt{d}$ of the cube; the second
is exactly the integer scaling property recorded after
\eqref{eq:modulus}, namely
\[
 \omega_f(\sqrt{d})
 =\omega_f\bigl(K\tfrac{\sqrt d}{K}\bigr)
 \le K\omega_f\bigl(\tfrac{\sqrt d}{K}\bigr).
\]
At $u=K$, the bridge
value equals $v_{i+1,0}$, so the following transition to the first actual
value of row $i+1$ has size zero.

Let $i_\ast:=K^{d-1}-1$.  For the last row, set
\[
 y_{2Ki_\ast+K-1+u}:=v_{i_\ast,K-1},
 \qquad u=1,\ldots,K.
\]
These are precisely the remaining indices
$2Ki_\ast+K,\ldots,2K^d-1$, and their increments are zero.  The preceding
cases exhaust all pairs of adjacent addresses: consecutive actual samples
within one row, the last actual sample to the first bridge sample, consecutive
bridge samples, the final bridge sample to the first actual sample of the next row,
and the constant tail after the final row.  When $K=1$, the within-row and
nonconstant-bridge cases are empty, and the same assignments still give the
two required samples.  Hence these cases prove
\eqref{eq:bridge-increment}.
\end{proof}

The address map is constant and correct away from thin transition strips.  The
boundary-repair argument below adapts the horizontal shift and median
construction in
\cite[Theorem~2.1 and Lemmas~3.2--3.4]{shijun:smooth:functions}.  We include
the proof to verify that, in the present affine latent setting, the repair
preserves affine dependence on the code while retaining explicit width and
depth bounds.  To keep the affine map universal, $\delta$ must be fixed
independently of $f$.

For positive integers $d,K$ and $0<\delta\le 1/(3K)$, define the transition
set
\begin{equation*}
 \Omega_{K,\delta}
 :=\bigcup_{j=1}^d\bigcup_{k=1}^{K-1}
 \left\{\bmx\in[0,1]^d:x_j\in(k/K-\delta,k/K)\right\}.
\end{equation*}
Thus $\Omega_{K,\delta}$ is the union of the one-sided strips on
which at least one coordinate address may be changing.

Figure~\ref{fig:boundary-shift} illustrates the one-dimensional geometry
behind the repair.

\begin{figure}[ht]
\centering
\includegraphics[width=0.7258\textwidth]{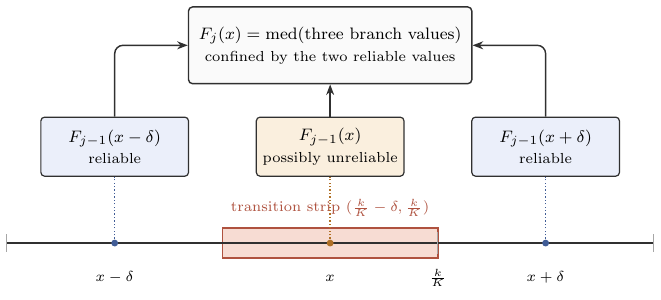}
\caption{A representative one-dimensional transition.  If $x$ lies in a
transition strip, the shifts $x-\delta$ and $x+\delta$ lie on its two sides.
More generally, among the three translated branches at least two are reliable,
so the median remains between the two reliable values, regardless of the
remaining branch.}
\label{fig:boundary-shift}
\end{figure}

\begin{lemma}[Boundary repair]\label{lem:boundary-repair}
Let $d,K,M,N,L\in\N^+$ and $0<\delta\le1/(3K)$.
Let $\widetilde\Phi$ be a $d$-input architecture in $\Arch(N,L)$ and
$\widetilde\calA\in\aff(M,P_{\widetilde\Phi})$ be given.  There are
$\Phi\in\Arch(3^d(N+4),L+2d)$ and
$\calA\in\aff(M,P_\Phi)$, constructed solely from
$(\widetilde\Phi,\widetilde\calA,K,\delta)$, with the following property.
For every $f\in C([0,1]^d)$, $\varepsilon\ge0$, and
$\bmxi_f\in\R^M$, set
$\widetilde{\bmtheta}_f:=\widetilde\calA(\bmxi_f)$.  If
\[
 \lvert f(\bmx)
 -\widetilde\Phi_{\widetilde{\bmtheta}_f}(\bmx)\rvert
 \le\varepsilon,\qquad
 \bmx\in[0,1]^d\setminus\Omega_{K,\delta},
\]
then, with $\bmtheta_f:=\calA(\bmxi_f)$,
\begin{equation*}
 \lvert f(\bmx)-\Phi_{\bmtheta_f}(\bmx)\rvert
 \le\varepsilon+d\omega_f(\delta),\qquad \bmx\in[0,1]^d.
\end{equation*}
\end{lemma}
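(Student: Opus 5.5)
The plan is to repair one coordinate at a time, using a three-way median of shifted copies. Set $\Phi^{(0)}:=\widetilde\Phi$. For $j=1,\ldots,d$, define
\[
 \Phi^{(j)}(\bmx):=\med\bigl(\Phi^{(j-1)}(\bmx-\delta\bm{e}_j),\,\Phi^{(j-1)}(\bmx),\,\Phi^{(j-1)}(\bmx+\delta\bm{e}_j)\bigr).
\]
To keep shifted points inside the cube, I would clamp the shifted input coordinate to $[0,1]$, using $\min\{\max\{t,0\},1\}=\varrho(t)-\varrho(t-1)$ with fixed weights.

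\textbf{Architecture and affine dependence.} Each stage costs one fixed input layer (shift plus clamp), three parallel copies of the previous network, and a fixed median gadget. The median needs $\med(a,b,c)=\max\{\min\{a,b\},\min\{\max\{a,b\},c\}\}$, built from $\max\{u,v\}=\varrho(u-v)+v$ with signed transport. The three copies share the latent vector, since Lemma~\ref{lem:safe-composition}(iv) lets copied slots repeat the same affine coordinate functions. The shift layer precedes the copies and is fixed, and the median follows the copies' readouts and is fixed, so Corollary~\ref{cor:state-interface} keeps every slot affine in $\bmxi$.

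For the ledger, unroll the recursion as $3^d$ parallel copies of $\widetilde\Phi$ applied to all $3^d$ shifted and clamped inputs. The shift and clamp for all coordinates are done together in the first layers, and the medians are applied in $d$ successive rounds at the end. Alongside each copy I would budget about $4$ transport channels, giving width $3^d(N+4)$. Each round of medians should cost about $2$ hidden layers, giving depth $L+2d$. I expect this depth accounting to be the main obstacle: the readout of each copy must be merged into the first median layer (fixed side), and clamping must be merged into the first layer of $\widetilde\Phi$ (fixed side, then affine). Both mergers are allowed because one side is fixed. If two layers per median round turn out too tight, I would use the width slack in $3^d(N+4)$ to compute pairwise $\min$/$\max$ in parallel.

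\textbf{Error bound.} I would prove by induction on $j$ that
\[
 \lvert f(\bmx)-\Phi^{(j)}(\bmx)\rvert\le\varepsilon+j\,\omega_f(\delta)\quad\text{whenever } x_i\notin\bigcup_{k=1}^{K-1}(k/K-\delta,k/K) \text{ for all } i>j.
\]
The base case $j=0$ is the hypothesis. For the inductive step, fix such an $\bmx$. If $x_j$ lies outside the transition strips, then since $\delta\le1/(3K)$ the strips along coordinate $j$ are separated by more than $2\delta$, so at most one of the three shifted points $x_j-\delta,\ x_j,\ x_j+\delta$ lands in a strip. If $x_j$ lies in a strip, then $x_j\pm\delta$ (after clamping) both lie outside it. In either case at least two of the three branches are valid by the induction hypothesis, and each valid branch is within $\varepsilon+(j-1)\omega_f(\delta)+\omega_f(\delta)$ of $f(\bmx)$, because the shifted point is within distance $\delta$ of $\bmx$. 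The median of three numbers lies between two of them, so when two of the three lie in an interval, the median does too. This gives the bound at level $j$. Taking $j=d$ covers every $\bmx\in[0,1]^d$ and yields $\varepsilon+d\,\omega_f(\delta)$.

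Near the cube faces, clamping moves the shifted point by at most $\delta$. Since the strips avoid $[0,\delta)$ and $(1-\delta,1]$, a clamped point is never inside a strip, so this case adds no difficulty.
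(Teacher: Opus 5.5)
Your error induction is correct and matches the paper's coordinatewise median argument: the same nested domains, the same use of $1/K-\delta\ge2\delta$ to show that at most one of the three shifts is bad, and the same interval property of the median.

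The gap is in the depth ledger, and the clamp causes it. The map $t\mapsto\varrho(t)-\varrho(t-1)$ is not affine. The values $\varrho(t)$ and $\varrho(t-1)$ must be produced by a hidden ReLU layer, and only the final subtraction can be merged into the first layer of $\widetilde\Phi$. Lemma~\ref{lem:safe-composition}(iii) merges only adjacent affine maps with no activation between them. So the clamp cannot be ``merged into the first layer of $\widetilde\Phi$''; it costs a hidden layer. In your recursive form, with one shift-and-clamp layer per stage, the depth becomes $L+3d$. In the unrolled form with all clamps in one shared layer, each branch still has depth $L+1$ before the medians, giving $L+2d+1$ with two layers per median round. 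When $\depth(\widetilde\Phi)=L$, this violates the asserted membership $\Phi\in\Arch(3^d(N+4),L+2d)$.

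The fix is to drop the clamp, which the argument never needs. Realize the branch at $\bmx\mp\delta\bm{e}_j$ by replacing that copy's first-layer bias $\bmb(\bmxi)$ with $\bmb(\bmxi)\mp\delta\bmW(\bmxi)\bm{e}_j$. This is affine in $\bmxi$ because $\delta$ is fixed, and it costs no layer. A branch evaluated outside the cube still has a numerical value and is simply treated as the unreliable one. You must then handle the face case directly:
\begin{itemize}
\item For $K\ge2$ and $x_j<\delta$: the shifts $x_j$ and $x_j+\delta$ lie in $[0,2\delta)$, and $2\delta\le1/K-\delta$, so both precede the first strip.
\item For $K\ge2$ and $x_j>1-\delta$: the shifts $x_j-\delta$ and $x_j$ both exceed $1-2\delta\ge(K-1)/K+\delta$, so both lie beyond the last strip.
\item For $K=1$: there are no strips, and $\delta\le1/3$ keeps two shifts in $[0,1]$.
\end{itemize}

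Separately, your nested formula $\max\{\min\{a,b\},\min\{\max\{a,b\},c\}\}$ costs three hidden layers as written. To get exactly two per stage, use
\[
 \med(a,b,c)=(a+b-c)-\varrho\bigl(\max\{a,b\}-c\bigr)+\varrho\bigl(c-\min\{a,b\}\bigr),
 \qquad
 \max\{a,b\}=b+\varrho(a-b),\quad \min\{a,b\}=a-\varrho(a-b).
\]
The first layer holds $\varrho(a-b)$ and the signed pairs of $a,b,c$; the three branch readouts merge into it from the fixed side. The second layer holds the signed pair of $a+b-c$ and the two displayed hinges. Then $w_j\le\max\{3w_{j-1},7\}$ keeps the width below $3^d(N+4)$, and each stage adds exactly two hidden layers, so the depth is $L+2d$.
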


\begin{proof}
We repair one coordinate at a time.  For the coordinate currently under
consideration, at least two of the three shifted points corresponding to
$-\delta$, $0$, and $+\delta$ remain in the cube and avoid the transition
strips in that coordinate.  The later coordinates already satisfy the
induction-domain restrictions, so these two points belong to the preceding
reliable set.  Their network values are therefore close to the target at the
unshifted point.  Taking the median prevents the third, possibly unreliable,
value from leaving the interval determined by the two reliable ones.  Repeating
this argument over all $d$ coordinates adds one modulus term per coordinate.

\proofstep{1}{Realize the median by two explicit ReLU layers.}

For real $a,b,c$, set
\[
 h_{a,b}:=\varrho(a-b),\qquad
 u:=\varrho(b+h_{a,b}-c),\qquad
 v:=\varrho(c-a+h_{a,b}),
\]
and let $t:=a+b-c$.  Thus
$t_+=\varrho(a+b-c)$ and $t_-=\varrho(-a-b+c)$ under the
positive and negative part convention fixed in Section~\ref{sec:notation}.
Since $b+h_{a,b}=\max\{a,b\}$ and $a-h_{a,b}=\min\{a,b\}$,
\begin{equation*}
 \med(a,b,c)=t_+-t_--u+v.
\end{equation*}
Indeed, let $a_{\max}:=\max\{a,b\}$ and
$a_{\min}:=\min\{a,b\}$.  Then
\begin{align*}
 t_+-t_--u+v
 &=a_{\max}+a_{\min}-c-(a_{\max}-c)_++(c-a_{\min})_+\\
 &=\min\{a_{\max},c\}+(a_{\min}-c)_+\\
 &=\med(a,b,c).
\end{align*}
The last identity covers simultaneously the three alternatives
$c\le a_{\min}$, $a_{\min}\le c\le a_{\max}$, and $c\ge a_{\max}$.
We shall also use the following elementary interval consequence.  If
$I\subseteq\R$ is an interval, $r\le s$, and $r,s\in I$, while $w$ is
arbitrary, then
$\med(r,s,w)\in[r,s]\subseteq I$: the median is $r$, $w$, or $s$ according as
$w\le r$, $r\le w\le s$, or $w\ge s$.
The first hidden layer computes $a_+:=\varrho(a)$,
$a_-:=\varrho(-a)$, and analogously $b_\pm,c_\pm$, together with $h_{a,b}$,
using seven channels.  The second computes $(t_+,t_-,u,v)$ using four
channels, and the final affine row is $[1,-1,-1,1]$ with respect to the
displayed ordering.  Thus the median block adds two hidden layers and has
width at most seven.

\proofstep{2}{Set up the induction domains.}

For $j=0,\ldots,d$, let
\[
 \calD_j:=\left\{\bmx\in[0,1]^d:
 x_{j+1},\ldots,x_d\text{ avoid all transition strips}\right\}.
\]
There is no restriction when $j=d$, so $\calD_d=[0,1]^d$, whereas
$\calD_0=[0,1]^d\setminus\Omega_{K,\delta}$.

For $d=2$, Figure~\ref{fig:boundary-repair-domains} shows how these reliable
domains expand as the two coordinates are repaired.

\begin{figure}[ht]
\centering
\includegraphics[width=0.902\textwidth]{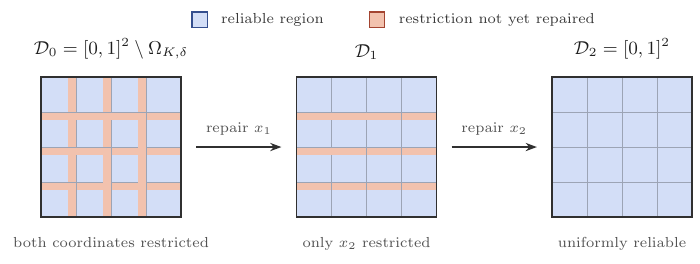}
\caption{Coordinatewise boundary repair in two dimensions.  Initially the
approximation is reliable outside the vertical and horizontal transition
strips.  Repairing the first coordinate removes the vertical restrictions,
and repairing the second coordinate yields a uniform estimate on the whole
square.}
\label{fig:boundary-repair-domains}
\end{figure}

We first construct the architecture and affine map independently of the code
selected for $f$.  Set
$(\Phi_0,\calA_0):=(\widetilde\Phi,\widetilde\calA)$.  Recursively, after
$(\Phi_{j-1},\calA_{j-1})$ has been constructed, form $\Phi_j$ from three
translated copies of $\Phi_{j-1}$ followed by the fixed two-hidden-layer
median block from Step~1.  For an arbitrary code $\bmxi$, define
$\calA_j(\bmxi)$ by copying the three affine slot assignments of
$\calA_{j-1}(\bmxi)$.  In the branches evaluated at
$\bmx-\delta\bm{e}_j$, $\bmx$, and $\bmx+\delta\bm{e}_j$, respectively,
replace each first-layer bias $\bmb(\bmxi)$ by
\[
 \bmb(\bmxi)-\delta\bmW(\bmxi)\bm{e}_j,
 \qquad \bmb(\bmxi),
 \qquad \bmb(\bmxi)+\delta\bmW(\bmxi)\bm{e}_j.
\]
The three scalar branch readouts are merged into the first fixed median layer,
and the remaining median parameters are fixed.  Every resulting slot is
affine in $\bmxi$: the translated biases are affine because $\delta$ is
fixed, and the readout merge has a fixed side.  Thus $(\Phi_j,\calA_j)$
depends only on $(\widetilde\Phi,\widetilde\calA,K,\delta)$ and not on $f$.

Having fixed these target-independent pairs, for the selected code
$\bmxi_f$, put
\[
 F_j(\bmx):=(\Phi_j)_{\calA_j(\bmxi_f)}(\bmx),
 \qquad j=0,\ldots,d.
\]
Then $F_0=\widetilde\Phi_{\widetilde{\bmtheta}_f}$, and the preceding
recursion yields
\[
 F_j(\bmx)
 =\med\left(
 F_{j-1}(\bmx-\delta\bm{e}_j),
 F_{j-1}(\bmx),
 F_{j-1}(\bmx+\delta\bm{e}_j)
 \right).
\]
Assume inductively that
\begin{equation}\label{eq:repair-induction}
 \lvert F_{j-1}(\bmx)-f(\bmx)\rvert
 \le\varepsilon+(j-1)\omega_f(\delta),
 \qquad \bmx\in\calD_{j-1}.
\end{equation}
Here $\bm{e}_j$ is the $j$th coordinate vector.  A neural network is defined
on all of $\R^d$, so a branch shifted outside the cube still has a numerical
value; that branch will be treated as unreliable.

\proofstep{3}{Show that two of the three branches are reliable.}

Fix $\bmx\in\calD_j$.  If $K=1$, there are no transition strips and
$\delta\le1/3$.  If $x_j<\delta$, the shifts $x_j$ and $x_j+\delta$ lie in
$[0,1]$; if $x_j>1-\delta$, the shifts $x_j-\delta$ and $x_j$ lie in
$[0,1]$; and if $\delta\le x_j\le1-\delta$, all three shifts do.  Since
there are no strips, at least two shifted points belong to
$\calD_{j-1}=[0,1]^d$, which proves the required reliability in this case.
Assume henceforth that $K\ge2$.  We first isolate the strip
geometry.  At least two of the three shifted coordinates lie in
$[0,1]$ and outside the $j$th transition strips.  To see this, if
$x_j<\delta$, then $x_j$ and $x_j+\delta$ are both admissible and lie before
the first strip; indeed,
$x_j+\delta<2\delta\le1/K-\delta$ because $3\delta\le1/K$.  If
$x_j>1-\delta$, the two admissible values are $x_j-\delta$ and $x_j$; both lie
beyond the last strip because $x_j-\delta>1-2\delta>(K-1)/K$.  In the remaining case
$\delta\le x_j\le1-\delta$, all three shifts lie in $[0,1]$.
One open strip of width $\delta$ cannot contain two of the
$\delta$-separated shifts.  Moreover, points in two distinct open strips are
separated by more than $1/K-\delta\ge2\delta$, while the three shifts span
only $2\delta$.  Hence two different strips cannot contain two of the shifts
either.
The fact that the strips are open matters only at equality: a shifted
coordinate lying exactly on a strip endpoint is reliable.  Hence at most one
shift is unreliable in every case.

For each reliable shift $\bmx'$, its $j$th coordinate avoids every strip,
and its coordinates $j+1,\ldots,d$ equal those of $\bmx\in\calD_j$.  Hence
$\bmx'\in\calD_{j-1}$, so the induction hypothesis applies.  Together with
\eqref{eq:modulus}, the induction hypothesis gives
\[
 \lvert F_{j-1}(\bmx')-f(\bmx)\rvert
 \le\varepsilon+(j-1)\omega_f(\delta)+
 \lvert f(\bmx')-f(\bmx)\rvert
 \le\varepsilon+j\omega_f(\delta).
\]
Both reliable values lie in the interval centered at $f(\bmx)$ with radius
$\varepsilon+j\omega_f(\delta)$.  The median of three real numbers lies
between the two reliable values by the interval fact proved in Step~1,
regardless of the third branch.  Thus \eqref{eq:repair-induction}
holds with $j$ in place of $j-1$.  Induction from $\calD_0$ to
$\calD_d=[0,1]^d$ proves the error estimate.

\proofstep{4}{Verify architecture size and affine dependence.}

Define width bounds recursively by
\[
 w_0:=N,\qquad
 w_j:=\max\{3w_{j-1},7\},\quad j=1,\ldots,d.
\]
The three copies of the previous network run in parallel, while the following
median block uses width at most seven.  Hence
$\width(\Phi_j)\le w_j$ for every $j$.  Since
$w_j\le3w_{j-1}+7$, solving the recurrence gives
\[
 w_d\le3^dN+\frac{7}{2}(3^d-1)<3^d(N+4).
\]
Each repair adds two hidden layers, so the final depth is at most $L+2d$.

The recursion for $(\Phi_j,\calA_j)$ in Step~2 already specifies every affine
slot assignment: translated first-layer biases are affine, copied branches
repeat the same coordinate functions, each merge of a branch readout into the
median block has one fixed side, and all remaining median parameters are
fixed.  Lemma~\ref{lem:safe-composition} therefore applies at every
interface.  Taking
$(\Phi,\calA):=(\Phi_d,\calA_d)$ gives the asserted target-independent pair,
and $F_d=\Phi_{\calA(\bmxi_f)}$ has the error proved in Step~3.
\end{proof}

\begin{remark}[Exact real arithmetic]\label{rem:exact-real}
The construction is formulated over exact real arithmetic.  The latent vector
$\bmxi_f$ contains binary fractions whose number of relevant digits increases
with $P$.  Moreover, the fixed decoder uses coefficients of size
$\delta_D^{-1}=2^{D+1}$ in its bit gate.  Accordingly,
Theorem~\ref{thm:upper} neither bounds the bit length of the latent scalars nor
controls the dynamic range of the fixed decoder weights; it also provides no
numerical stability guarantee.  If every latent scalar is restricted to $b$
bits, the model is additionally constrained by the total latent information
$Mb$.  As discussed in Section~\ref{sec:exact-real-scope}, we keep these
finite-precision effects separate from the main theorem over exact real
parameters.
\end{remark}

\subsection{Construction under integer budgets}
\label{sec:integer-construction}

All analytic and network-theoretic ingredients are now available.  It remains
to assemble them and coordinate their integer resources: $S$ counts latent
code blocks, $D$ is the number of decoder stages, and $q$ determines the
spatial mesh.  We first introduce the three abbreviations used below.
For $d,S,D\in\N^+$, set
\begin{equation}\label{eq:qK}
 q:=\bigl\lfloor(SD/2)^{1/(2d)}\bigr\rfloor,
 \qquad K:=q^2,
 \qquad \widehat{L}_{d,D}:=18D+30+2d.
\end{equation}
When $SD=1$, this formula gives $q=K=0$ and therefore does not yield a usable
spatial mesh.  The proposition below assumes $q\ge1$, while
Theorem~\ref{thm:upper} invokes it only with $q\ge2$.  It packages the complete
construction.  For reference, its resource ledger is
\[
 \bmxi_f\in\R^{3S+1},
 \qquad
 P_\Phi\le B_dD,
 \qquad
 K=\bigl\lfloor(SD/2)^{1/(2d)}\bigr\rfloor^2.
\]
Thus the product of the number of code blocks and the decoder length determines
the approximation scale.  The final subsection converts this exact integer
form into the prescribed budgets $(M,P)$.

\begin{proposition}[Universal construction in the regime $q\ge1$]\label{prop:integer-upper}
Let $d,S,D\in\N^+$ with $S\le D$, and let $q,K,\widehat{L}_{d,D}$ be the
quantities in \eqref{eq:qK}.  Assume $q\ge1$.  There exist an architecture
$\Phi\in\Arch(N_d,\widehat{L}_{d,D})$ with input dimension $d$ and an affine map
\[
 \calA_0\in\aff(3S+1,P_\Phi),
\]
both independent of $f$.  For every $f\in C([0,1]^d)$, there exists
$\bmxi_f\in\R^{3S+1}$ such that, with $\bmtheta_f:=\calA_0(\bmxi_f)$, one has
\begin{equation}\label{eq:integer-upper}
 \left\lVert f-\Phi_{\bmtheta_f}\right\rVert_{L^\infty([0,1]^d)}
 \le(d+2)\omega_f\bigl(\tfrac{\sqrt d}{K}\bigr).
\end{equation}
Moreover,
\begin{equation}\label{eq:integer-P}
 P_\Phi\le B_dD.
\end{equation}
\end{proposition}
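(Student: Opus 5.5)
We will assemble the construction from the modules already proved, in the order address map, point fitting, boundary repair. First, fix $\delta:=1/(3K)$, which depends only on $(S,D,d)$. Run $d$ parallel copies of the fixed address network of Lemma~\ref{lem:address-1d}, one per coordinate, with all copies fed from the same input $\bmx$. Each copy has width $20$ and depth at most $8q+4$. We retain their terminal address states and merge them, through one fixed affine layer, into the table index
\[
 j(\bmx)=\operatorname{addr}(\bm\beta(\bmx)),
\]
using \eqref{eq:lex-index}. Since $K^d\le SD/2$, every address satisfies $\operatorname{addr}(\bm\beta)\le 2K^d-1\le SD-1$. Hence Lemma~\ref{lem:point-fitting} applies with
\[
 J:=2K^d\le SD,\qquad \eta:=\omega_f(\sqrt d/K),
\]
and with the samples supplied by Lemma~\ref{lem:bridge}. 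The loader and decoder parameters of Lemma~\ref{lem:point-fitting} depend only on $(S,D)$, so the code is $\bmxi_f\in\R^{3S+1}$. Composing the fixed address network with $(\Phi^{\mathrm{fit}},\calA^{\mathrm{fit}})$ through Corollary~\ref{cor:state-interface} is legitimate, because the interface is on the fixed side. The index $j$ enters the fitting network as a signed pair $(j_+,j_-)$, exactly as the quotient-routing module expects.

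On $[0,1]^d\setminus\Omega_{K,\delta}$, every coordinate address satisfies $\beta_i(\bmx)=\lfloor Kx_i\rfloor$, with the convention that the last plateau closes at $1$. The composed network then outputs a value within $\eta$ of $y_{\operatorname{addr}(\bm\beta)}=f(\bm\beta/K)$. Moreover,
\[
 \lVert \bmx-\bm\beta/K\rVert_2\le\sqrt d/K,
\]
so the error off the strips is at most $2\omega_f(\sqrt d/K)$. Lemma~\ref{lem:boundary-repair} with $\varepsilon=2\omega_f(\sqrt d/K)$ then adds $d\,\omega_f(\delta)\le d\,\omega_f(\sqrt d/K)$. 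This yields \eqref{eq:integer-upper} with constant $d+2$. The choice of $\bmxi_f$ is the one from Lemma~\ref{lem:point-fitting}, and the repaired pair stays independent of $f$.

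The main obstacle is the bookkeeping for width and depth, together with the bound $P_\Phi\le B_dD$.

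\textbf{Depth.} Since $q^{2d}\le SD/2\le D^2/2$ and $d\ge1$, we get $q\le D$. The address stage therefore costs at most $8D+4$ plus one merge layer. The fitting stage costs $10D+24$ and the repair costs $2d$. Any leftover slack up to $\widehat L_{d,D}$ can be filled by fixed identity transport on the output pair. This stays within $18D+30+2d$, with a few layers of margin reserved for the split interfaces.

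\textbf{Width.} Before repair the width is at most $\max\{20d,96\}$, since the $d$ address copies run in parallel and are then followed by the fitting network. Lemma~\ref{lem:boundary-repair} then multiplies this to $3^d(\max\{20d,96\}+4)=N_d$.

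\textbf{Parameter count.} In a dense architecture with width at most $N_d$, each layer costs at most $N_d(N_d+1)$ slots. The first layer costs $N_d(d+1)$ and the output layer costs $N_d+1$. The number of hidden layers is at most $(18+2d)D+30$, using $D\ge1$. Hence
\[
 P_\Phi\le N_d(N_d+1)\bigl((18+2d)D+30\bigr)+N_d(d+2)+1 .
\]
Using $30\le30D$, this is at most
\[
 \bigl[(48+2d)N_d(N_d+1)+N_d(d+2)+1\bigr]D=B_dD .
\]
The delicate point is ensuring that the address-to-fitting interface and the repair's first-layer bias shifts preserve affine dependence. This follows from Lemma~\ref{lem:safe-composition}(iii), since the address network is fixed and $\delta$ is fixed.
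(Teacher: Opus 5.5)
Your proposal is correct and follows the paper's proof essentially step for step. The sequence is the same: parallel coordinate address maps with $\delta=1/(3K)$, bridge samples fed to Lemma~\ref{lem:point-fitting} with $J=2K^d$ and $\eta=\omega_f(\sqrt d/K)$, error $2\eta$ off the strips, boundary repair adding $d\,\omega_f(\delta)\le d\eta$, then the width, depth and slot count using $q\le D$. Two small differences: the paper merges the fixed address readout directly into the first fixed layer of the fitting network, so it needs no extra hidden layer (depth $18D+28+2d$), and its slot count is slightly tighter; your extra merge layer and cruder overcount still land within $\widehat L_{d,D}$ and exactly at $B_dD$.
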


\begin{proof}
We now combine the preceding modules.  Put
$\delta:=1/(3K)$ and
$\eta:=\omega_f\bigl(\tfrac{\sqrt d}{K}\bigr)$.  The grid has $K$ cells per
coordinate, and the definition of $q$ ensures that its $2K^d$ target and
bridge samples fit into $S$ blocks of length $D$.  A fixed address network maps
points outside the transition strips to grid indices, and
Lemma~\ref{lem:point-fitting} decodes the corresponding samples with error
$\omega_f\bigl(\tfrac{\sqrt d}{K}\bigr)$.  Spatial discretization contributes
one more copy of this modulus, while Lemma~\ref{lem:boundary-repair}
contributes at most $d$ additional copies at the smaller scale $\delta$.  The
point-fitting code uses exactly $3S+1$ coordinates.  After composing the
modules, we use uniform upper bounds on their actual widths and depths to
count all dense slots, including unused connections assigned zero.

Figure~\ref{fig:upper-roadmap} summarizes the construction.  The five steps
below implement its arrows and track the associated constants.

\begin{figure}[ht]
\centering
\includegraphics[width=0.8276\textwidth]{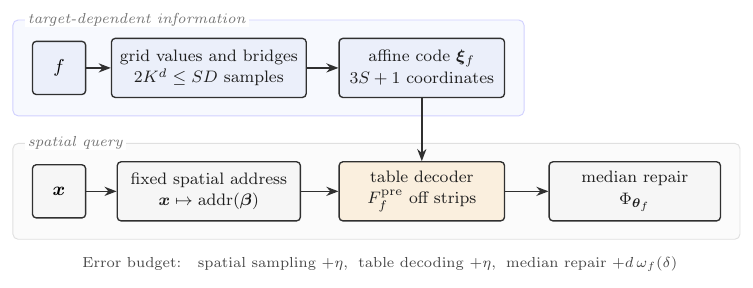}
\caption{Proof architecture for the constructive upper bound.  The target
determines only the sampled values and latent vector.  The spatial address
network, decoder architecture, and affine loading rule are fixed, while the
generated decoder parameters vary affinely with the latent vector.  The
capacity condition
$2K^d\le SD$ couples the number $S$ of code blocks with the decoder length
$D$.}
\label{fig:upper-roadmap}
\end{figure}

\proofstep{1}{Choose the grid and a transition width fixed across targets.}

The choice $\delta=1/(3K)$ depends only on $(d,S,D)$ through $K$ and is
therefore independent of $f$.  Since $K=q^2$ and $q$ is defined by
\eqref{eq:qK},
\begin{equation}\label{eq:capacity-check}
 2K^d=2q^{2d}\le SD.
\end{equation}
If $q=1$, then $K=1$; the constant address module in
Lemma~\ref{lem:address-1d}, the bridge construction, and all later estimates
remain valid.  Thus this edge case requires no separate architecture.

\proofstep{2}{Construct the spatial address.}

Apply Lemma~\ref{lem:address-1d} to each coordinate in parallel.  Outside
$\Omega_{K,\delta}$, the retained address channels represent
\[
 \bmbeta=(\beta_1,\ldots,\beta_d),
 \quad\text{where}\quad
 x_j\in\bigl[\tfrac{\beta_j}{K},
 \tfrac{\beta_j+1}{K}-\delta\one_{\{\beta_j\le K-2\}}\bigr].
\]
The $d$ copies run in parallel: their rows in the first layer are stacked
vertically, and connections between distinct copies are fixed at zero.  Their
total width is at most $20d$ and their depth is at most $8q+4$.  The scalar
address $\operatorname{addr}(\bmbeta)$ in \eqref{eq:lex-index} is a
fixed affine function of $\bmbeta$, so forming it requires no hidden layer.
This parallel construction can still be incorporated into a scalar-output
architecture.  We omit the $d$ fixed scalar readout rows of the coordinate
copies, retain their terminal hidden states, and substitute those readouts
into the fixed affine formula for $\operatorname{addr}(\bmbeta)$.  That fixed
scalar readout is in turn merged into the first fixed affine layer of the
quotient and remainder block of $\Phi^{\mathrm{fit}}$.  Both sides of this
interface are fixed, so the merger adds neither a hidden layer nor a
code-dependent slot.  Outside $\Omega_{K,\delta}$, the scalar supplied to the
table module is exactly $\operatorname{addr}(\bmbeta)$.

\proofstep{3}{Load and decode the function values.}

Use Lemma~\ref{lem:bridge} to construct $2K^d$ samples.  The quantity
$\eta=\omega_f\bigl(\tfrac{\sqrt d}{K}\bigr)$ bounds every adjacent
increment.  Since \eqref{eq:capacity-check} holds,
Lemma~\ref{lem:point-fitting} applies with $J=2K^d$ and gives a latent vector
$\bmxi_f\in\R^{3S+1}$.  Set
\[
 \bmtheta_f^{\mathrm{fit}}
 :=\calA^{\mathrm{fit}}(\bmxi_f),
 \qquad
 F_f^{\mathrm{tab}}
 :=\Phi^{\mathrm{fit}}_{\bmtheta_f^{\mathrm{fit}}}.
\]
Thus $F_f^{\mathrm{tab}}$ is the function produced by the point fitting
module, and
\begin{equation}\label{eq:table-approx}
 \lvert F_f^{\mathrm{tab}}(j)-y_j\rvert\le\eta,
 \qquad j=0,\ldots,2K^d-1.
\end{equation}
The degenerate case $\eta=0$ requires no change in the universal pair.  Indeed,
$f$ is then constant on the connected cube: any two points can be joined by
finitely many segments of length at most $\sqrt d/K$.  We keep the same pair
$(\Phi^{\mathrm{fit}},\calA^{\mathrm{fit}})$ and use the $\eta=0$ code from
Lemma~\ref{lem:point-fitting}, namely constant baselines, zero binary
fractions, and zero scale.  Thus no target-dependent replacement of the
architecture or affine map is made.

Compose the fixed address module with the universal table pair
$(\Phi^{\mathrm{fit}},\calA^{\mathrm{fit}})$ using the address-to-table merger
described in Step~2.  The address readout and the quotient part of the
point fitting network are both fixed, so
Lemma~\ref{lem:safe-composition} preserves affine dependence on the latent
vector without an additional interface layer.  Denote the resulting
target-independent pair by $(\widetilde\Phi,\widetilde\calA)$.  For the
selected code, set
\[
 \bmtheta_f^{\mathrm{pre}}
 :=\widetilde\calA(\bmxi_f),
 \qquad
 F_f^{\mathrm{pre}}
 :=\widetilde\Phi_{\bmtheta_f^{\mathrm{pre}}}.
\]
If $\bmx\notin\Omega_{K,\delta}$ and its cell index is $\bmbeta$,
\eqref{eq:bridge-target} and \eqref{eq:table-approx} imply
\begin{align*}
 \lvert f(\bmx)-F_f^{\mathrm{pre}}(\bmx)\rvert
 &\le \lvert f(\bmx)-f(\bmbeta/K)\rvert
      +\lvert f(\bmbeta/K)-F_f^{\mathrm{tab}}(\operatorname{addr}(\bmbeta))\rvert\\
 &\le\omega_f\bigl(\tfrac{\sqrt d}{K}\bigr)+\eta
 =2\eta.
\end{align*}

\proofstep{4}{Repair all transition strips.}

Apply Lemma~\ref{lem:boundary-repair} to the preliminary pair
$(\widetilde\Phi,\widetilde\calA)$ with latent-vector dimension $3S+1$.
It produces
the architecture $\Phi$ and affine map $\calA_0$ in the proposition.  Since
\[
 \delta=\frac{1}{3K}\le\frac{\sqrt{d}}{K},
\]
monotonicity of the modulus gives $\omega_f(\delta)\le\eta$.  The repaired
realized function, with $\bmtheta_f:=\calA_0(\bmxi_f)$, therefore satisfies
\[
 \left\lVert f-\Phi_{\bmtheta_f}\right\rVert_{L^\infty([0,1]^d)}
 \le2\eta+d\omega_f(\delta)
 \le(d+2)\eta,
\]
which is \eqref{eq:integer-upper}.

\proofstep{5}{Verify width, depth, affine dependence, and total parameter count.}

The $d$ coordinate address copies run in parallel, so their widths add to at
most $20d$ while their common depth remains at most $8q+4$.  The resulting
address module is then composed sequentially with the point fitting module of
width $96$.  Sequential modules take the maximum of their widths
and add their depths.  Therefore, before boundary repair, the combined module
has width at most $\max\{20d,96\}$ and depth at most
\[
 (8q+4)+(10D+24).
\]
Because $S\le D$, one has
\[
 q\le(D^2/2)^{1/(2d)}
 \le D.
\]
Hence the preliminary depth is at most $18D+28$.  Boundary repair increases
the width to at most
\[
 3^d\bigl(\max\{20d,96\}+4\bigr)=N_d
\]
and the depth to at most
$18D+28+2d\le\widehat{L}_{d,D}$.  Thus the constructed architecture is fully
connected and belongs to $\Arch(N_d,\widehat{L}_{d,D})$.  Because no further
enlargement to the upper bounds $N_d$ and $\widehat{L}_{d,D}$ is needed, the
architecture and the codomain of its affine parameter map remain unchanged.
All fixed-zero embeddings within the dense layers are included in the
parameter count.

For affine dependence, Lemma~\ref{lem:point-fitting} uses $3S+1$
target-dependent coordinates, while the address, spatial routing, and
transition width are fixed.  Every interface has
at least one fixed side, as checked in Step~3 and formalized in
Corollary~\ref{cor:state-interface}.  Thus no product of two target-dependent
coefficients is introduced.  Under the ordering in
\eqref{eq:parameter-count}, these affine assignments of all dense weights and
biases define a single universal map
$\calA_0\in\aff(3S+1,P_\Phi)$.  In particular, its codomain is the parameter
space of the same architecture $\Phi$ appearing in the proposition.

Finally, write $N:=N_d$ and use \eqref{eq:parameter-count}.  Replacing
each actual hidden width and the actual depth by their upper bounds can only
increase the count, and
$\widehat{L}_{d,D}=18D+30+2d$.  Since $D\ge1$,
the first hidden layer contributes at most $N(d+1)$ slots, the remaining
$\widehat{L}_{d,D}-1=18D+29+2d$ hidden-to-hidden layers contribute at most
$(18D+29+2d)N(N+1)$ slots, and the scalar output layer contributes $N+1$.
To bound this expression by a constant multiple of $D$, use $D\ge1$ to obtain
\[
 18D+29+2d\le(47+2d)D,
\]
and
\[
 N(d+1)+(N+1)=N(d+2)+1
 \le\bigl[N(d+2)+1\bigr]D.
\]
Consequently,
\begin{align*}
 P_\Phi
 &\le N(d+1)+(18D+29+2d)N(N+1)+(N+1)\\
 &\le\bigl[(47+2d)N(N+1)+N(d+2)+1\bigr]D\\
 &\le B_dD.
\end{align*}
This proves \eqref{eq:integer-P} and completes the proof.
\end{proof}

\subsection{Matching the prescribed budgets}
\label{sec:budget-selection}

Proposition~\ref{prop:integer-upper} is stated in terms of the internal
integers $(S,D)$.  No further network construction is needed.  We now choose
these integers from $(M,P)$, verify the latent-coordinate and parameter-slot
budgets separately, and translate the product $SD$ into the final mesh scale.
The choices and the resulting capacity relation are
\[
 D=\lfloor P/B_d\rfloor,
 \qquad
 3S+1\le\min\{M,P\},
 \qquad
 SD\ge\frac{P\min\{M,P\}}{4B_d^2}.
\]
Thus the choice of $D$ is governed by the parameter-slot budget, the choice of
$S$ by the latent-coordinate budget, and their product determines the spatial
resolution.

\begin{proof}[Proof of Theorem~\ref{thm:upper}]
Let $\widetilde{M}:=\min\{M,P\}$ be the effective latent coordinate budget.
We allocate
$3S+1\le\widetilde{M}$ coordinates
to $S$ code blocks and choose the decoder length $D$ proportional to $P$.
The explicit threshold $P_d^\star$ ensures that the grid integer $q$ is at
least two.  The floor estimates below yield
$SD\ge P\widetilde{M}/(4B_d^2)$ and therefore bound the mesh size by
$C_d(P\widetilde{M})^{-1/d}$.  Proposition~\ref{prop:integer-upper} then gives
the required architecture, while precomposition with a coordinate projection
extends its affine map from $\R^{3S+1}$ to the prescribed domain $\R^M$.

Set
\begin{equation*}
 D:=\lfloor P/B_d\rfloor,
 \qquad
 S:=\min\bigl\{\lfloor(\widetilde{M}-1)/3\rfloor,D\bigr\}.
\end{equation*}
The assumptions give $\widetilde{M}\ge4$ and
$P\ge P_d^\star=2^{2d+2}B_d$.  We will repeatedly use the elementary estimate
\[
 z\ge2\quad\Longrightarrow\quad
 \lfloor z\rfloor\ge z-1\ge z/2.
\]
Applying this estimate to $P/B_d\ge2^{2d+2}\ge2$ gives
\begin{equation}\label{eq:D-lower}
 D\ge\frac{P}{2B_d}\ge2^{2d+1}.
\end{equation}
Also $1\le S\le D$ and $3S+1\le\widetilde{M}\le M$.

We next derive a lower bound for $S$.  Since $\widetilde{M}\ge4$,
\[
 \lfloor(\widetilde{M}-1)/3\rfloor
 \ge\frac{\widetilde{M}}{6}.
\]
For $\widetilde{M}=4,5,6,7$, this inequality is checked directly; for
$\widetilde{M}\ge8$, it follows from
$\lfloor(\widetilde{M}-1)/3\rfloor\ge(\widetilde{M}-4)/3
\ge\widetilde{M}/6$.  Moreover,
$D\ge P/(2B_d)\ge\widetilde{M}/(2B_d)$.  Since $N_d\ge1$,
\eqref{eq:Bd} yields
$B_d\ge(48+2d)N_d(N_d+1)\ge100>3$.  Thus the two arguments in the minimum
defining $S$ satisfy, respectively,
\[
 \lfloor(\widetilde{M}-1)/3\rfloor
 \ge\frac{\widetilde{M}}{6}
 \ge\frac{\widetilde{M}}{2B_d},
\]
and
\[
 D\ge\frac{\widetilde{M}}{2B_d}.
\]
Therefore
\begin{equation}\label{eq:S-lower}
 S\ge\frac{\widetilde{M}}{2B_d}.
\end{equation}
Combining \eqref{eq:D-lower} and \eqref{eq:S-lower} gives
\begin{equation*}
 SD\ge\frac{P\widetilde{M}}{4B_d^2}.
\end{equation*}
Moreover, since $S\ge1$, \eqref{eq:D-lower} gives
$SD\ge D\ge2^{2d+1}$.  Therefore
\[
 x:=(SD/2)^{1/(2d)}\ge2,
\]
so $q=\lfloor x\rfloor\ge2$ and, by the same floor estimate,
$q\ge x/2$.  With $K=q^2$,
\begin{align}
 \frac{\sqrt{d}}{K}
 &\le4\sqrt{d}\bigl(\tfrac{2}{SD}\bigr)^{1/d}\notag\\
 &\le4\cdot 8^{1/d}\sqrt{d}B_d^{2/d}
      (P\widetilde{M})^{-1/d}\notag\\
 &=C_d(P\widetilde{M})^{-1/d}.
 \label{eq:mesh-budget}
\end{align}

We may now apply Proposition~\ref{prop:integer-upper}, which supplies
$\Phi\in\Arch(N_d,\widehat{L}_{d,D})$ with
\[
 P_\Phi\le B_dD\le P
\]
and error at most $(d+2)\omega_f\bigl(\tfrac{\sqrt d}{K}\bigr)$.  Since
$D=\lfloor P/B_d\rfloor$, the two depth bounds coincide:
\[
 \widehat{L}_{d,D}=L_{d,P}.
\]

To obtain an affine map with domain exactly $\R^M$, define the fixed
coordinate projection
\[
 \pi_{S,M}:\R^M\to\R^{3S+1},\qquad
 \pi_{S,M}(\bmxi):=(\xi_1,\ldots,\xi_{3S+1}),
\]
and set
\[
 \calA:=\calA_0\circ\pi_{S,M}
 \in\aff(M,P_\Phi).
\]
If the latent vector supplied by Proposition~\ref{prop:integer-upper} is
$\bmxi_f^{\mathrm{core}}\in\R^{3S+1}$, take
$\bmxi_f=(\bmxi_f^{\mathrm{core}},\bmzero)\in\R^M$, where the appended
zero vector has length $M-(3S+1)$ and may therefore have length zero.  Then
$\pi_{S,M}(\bmxi_f)=\bmxi_f^{\mathrm{core}}$.  Set
$\bmtheta_f:=\calA(\bmxi_f)$.  Finally, Proposition~\ref{prop:integer-upper},
monotonicity of $\omega_f$, and \eqref{eq:mesh-budget} give
\[
 \lVert f-\Phi_{\bmtheta_f}\rVert_{L^\infty([0,1]^d)}
 \le(d+2)\omega_f\bigl(\tfrac{\sqrt d}{K}\bigr)
 \le(d+2)\omega_f\bigl(C_d(P\widetilde{M})^{-1/d}\bigr).
\]
Since $\widetilde{M}=\min\{M,P\}$, this is
\eqref{eq:main-upper}.  The affine assertion follows from
Proposition~\ref{prop:integer-upper} and the fixed projection.
\end{proof}

\section{Proof of the H\"older lower bound}\label{sec:lower-proof}

The lower-bound proof combines a capacity estimate with a packing construction.
For every admissible pair $(\Phi,\calA)$ with input dimension $d$ and
$P_\Phi\le P$, recall
$U_\Phi$ from \eqref{eq:hidden-unit-count} and put
\[
 \widetilde{c}_\alpha:=4^{-(\alpha+1)}2^{-\alpha}.
\]
The first step removes all redundant affine latent coordinate directions:
\[
 r_{\calA}=\rank(\bmA)
 \le\min\{M,P_\Phi\}.
\]
Counting binary polynomial patterns then gives the capacity estimate
\[
 \Pdim(\calF_{\Phi,\calA})
 \le8r_{\calA}(U_\Phi+1)
 \le8P\min\{M,P\}.
\]
Finally, binary choices of positive and negative H\"older bumps convert this
capacity upper bound into approximation error.  With
$V_0:=8P\min\{M,P\}\ge1$, the packing lemma gives
\[
 \sup_{f\in\calH_d^\alpha}\inf_{g\in\calF_{\Phi,\calA}}
 \lVert f-g\rVert_{L^\infty([0,1]^d)}
 \ge \widetilde{c}_\alpha V_0^{-\alpha/d}.
\]
The remainder of the section proves these steps in order.
Section~\ref{sec:capacity-preliminaries} develops the required VC-dimension,
pseudo-dimension, and polynomial-pattern tools.
Section~\ref{sec:effective-coordinates} removes redundant latent directions,
and Section~\ref{sec:affine-pdim} applies the resulting coordinates to the
network family.  Section~\ref{sec:bump-packing} converts the capacity estimate
into a uniform approximation lower bound, and
Section~\ref{sec:lower-completion} completes the minimax argument.  Throughout
the capacity argument, $r$ is the number of effective parameter
variables, $s$ bounds the number of Boolean-atom occurrences,
$d_{\mathrm{pol}}$ bounds their polynomial degree, and $m$ is the number of
lifted input and threshold pairs under consideration.  The symbol $p_j$ is
reserved for an atom polynomial, while $p_{i,j}$ denotes that same atom after
the $i$th lifted instance has been fixed.

\subsection{VC dimension, pseudo-dimension, and polynomial predicates}
\label{sec:capacity-preliminaries}
To prove the capacity estimate announced above, we first isolate the general
combinatorial tools from the network-specific argument.  We begin with VC
dimension and pseudo-dimension, then develop the Boolean-formula and polynomial
sign-pattern bounds used later to analyze $\calF_{\Phi,\calA}$.

\subsubsection*{VC dimension and pseudo-dimension}

The capacity argument concerns binary labelings, although the original
function class is real-valued.  The required passage has two steps.  First,
each real-valued function is converted into a binary function on a lifted
input $(x,t)$.  Second, ordinary VC dimension is applied to the resulting
binary class.  Pseudo-dimension is exactly the quantity obtained in this way.

Let $X$ be a nonempty set, and let
$\calC\subseteq\{0,1\}^X$ be a binary function class.  For a finite set
$T\subseteq X$, its trace is
\[
 \calC|_T:=\{c|_T:c\in\calC\}.
\]
Here $c|_T$ is the restriction of $c$ to $T$, while $\{0,1\}^T$ denotes
the set of all functions from $T$ to $\{0,1\}$.  In particular, if
$\lvert T\rvert=m$, then $\{0,1\}^T$ contains exactly $2^m$ labelings.
The set $T$ is shattered by $\calC$ when every binary labeling of $T$ is
realized, that is,
\[
 \calC|_T=\{0,1\}^T.
\]
Throughout this section, we use $\bmv$ for binary label vectors.  Thus,
equivalently, if $T=\{x_1,\ldots,x_m\}$ is enumerated without repetition,
then for every
label vector
$\bmv=(v_1,\ldots,v_m)\in\{0,1\}^m$ there exists
$c_{\bmv}\in\calC$ such that
\[
 c_{\bmv}(x_i)=v_i,
 \qquad i=1,\ldots,m.
\]
Let $\operatorname{Shat}(\calC)$ denote the family of finite subsets of $X$
that are shattered by $\calC$.  The VC dimension is therefore the following
quantity:
\[
 \VCdim(\calC)
 :=\sup\left(\{0\}\cup
 \{\lvert T\rvert:T\in\operatorname{Shat}(\calC)\}\right).
\]
Thus $\VCdim(\calC)$ is the largest number of points that
can be shattered, or $+\infty$ if arbitrarily large finite sets can be
shattered.  The extra set $\{0\}$ makes the value equal to zero when no
nonempty set is shattered.

Now let $\calF\subseteq\R^X$ be a real-valued class.  Its subgraph class is
the binary class
\begin{equation*}
 \operatorname{Sub}(\calF)
 :=\left\{
 (x,t)\mapsto\one_{\{g(x)>t\}}:
 g\in\calF
 \right\}
\end{equation*}
on $X\times\R$; in symbols,
\[
 \operatorname{Sub}(\calF)\subseteq\{0,1\}^{X\times\R}.
\]
Thus the class to which VC dimension is applied is
$\operatorname{Sub}(\calF)$, not the original real-valued class $\calF$.
Geometrically, for a fixed $g$, the lifted point $(x,t)$ receives label one
exactly when its height $t$ lies strictly below the graph height $g(x)$.
Define
\begin{equation}\label{eq:pdim-definition}
 \Pdim(\calF):=\VCdim\left(\operatorname{Sub}(\calF)\right).
\end{equation}
More explicitly, for $m\in\N^+$, the inequality $m\le\Pdim(\calF)$ holds
precisely when there are $m$ distinct lifted points
\[
 (x_1,t_1),\ldots,(x_m,t_m)\in X\times\R
\]
such that, for every label vector
$\bmv=(v_1,\ldots,v_m)\in\{0,1\}^m$,
there is a function $g_{\bmv}\in\calF$ satisfying
\begin{equation}\label{eq:pdim-labeling}
 \one_{\{g_{\bmv}(x_i)>t_i\}}
 =v_i,
 \qquad i=1,\ldots,m.
\end{equation}
The base inputs $x_i$ must also be pairwise distinct.  Otherwise, if
$x_i=x_j$ and $t_i<t_j$, the requested label pair $(v_i,v_j)=(0,1)$ would
require both $g(x_i)\le t_i$ and $g(x_i)>t_j$, which is impossible.  The case
$t_j<t_i$ is symmetric, while equality would contradict the distinctness of
the lifted points.
The quantifier order is essential.  One first fixes the pairwise distinct
lifted instances $(x_i,t_i)$; then, for every prescribed label vector
$\bmv\in\{0,1\}^m$, one may choose a function $g_{\bmv}\in\calF$ satisfying
\eqref{eq:pdim-labeling}.  In particular, the thresholds $t_i$ cannot
depend on the desired label vector.

For comparison, the zero-threshold class is
\[
 \operatorname{Thr}_0(\calF)
 :=\left\{x\mapsto\one_{\{g(x)>0\}}:g\in\calF\right\}
\]
and is obtained by restricting the subgraph class to the slice
$X\times\{0\}$.  For a real-valued class, we use the harmless overload
\begin{equation}\label{eq:real-vcdim}
 \VCdim(\calF)
 :=\VCdim\left(\operatorname{Thr}_0(\calF)\right),
\end{equation}
where the right-hand side is the ordinary VC dimension of the displayed
binary class.  Consequently,
\begin{equation}\label{eq:vc-below-pdim}
 \VCdim(\calF)
 \le\Pdim(\calF).
\end{equation}
Indeed, if $\operatorname{Thr}_0(\calF)$ shatters
$x_1,\ldots,x_m$, then the lifted points
$(x_1,0),\ldots,(x_m,0)$ are shattered by
$\operatorname{Sub}(\calF)$.  Restricting the allowed lifted points therefore
cannot increase VC dimension.

The next definition combines two ingredients.  A \emph{polynomial atom} asks
one binary question of the form $p>0$ and produces one bit; a fixed Boolean
rule combines finitely many such bits.  We formalize the resulting description
in two stages.  First, a fixed formula tree is identified with its Boolean
map.  Definition~\ref{def:boolean-polynomial} then combines that map with
polynomial atoms whose parameter degree is controlled after $(\bmx,t)$ is
fixed.  The Boolean formula itself need not be a polynomial.

\subsubsection*{Boolean formulas and polynomial predicates}
Let $s_0\in\N^+$.  A Boolean formula tree
$\mathfrak{B}(Z_1,\ldots,Z_{s_0})$ is a finite rooted ordered syntax tree with
exactly $s_0$ leaf occurrences.  Choose once and for all a
bijection between those occurrences and $\{1,\ldots,s_0\}$, and label the
occurrence indexed by $j$ with $Z_j$.  Each internal node is one of the fixed operations
``not'' $(\neg)$, ``and'' $(\wedge)$, or ``or'' $(\vee)$.  If the same
polynomial question is used twice in the written formula, its two uses are
assigned to two different leaf occurrences and are counted twice.  Thus
$s_0$ counts positions in the syntax tree, not algebraically distinct
polynomials.
Whenever a finite $\bigwedge$ or $\bigvee$ is used below, we fix an ordering
of its index set and a binary bracketing.  This expands the displayed notation
into a formula tree of the preceding kind without changing its truth value or
its number of leaf occurrences.

We now define the map computed by this tree, rather than leaving the phrase
``evaluate the formula'' implicit.  Let
$\bmu=(u_1,\ldots,u_{s_0})\in\{0,1\}^{s_0}$.  For every subtree $\psi$, define
its value $\mathsf{B}_\psi(\bmu)$ recursively by
\[
 \mathsf{B}_{Z_j}(\bmu):=u_j,
 \qquad
 \mathsf{B}_{\neg\psi}(\bmu):=1-\mathsf{B}_\psi(\bmu).
\]
For two subtrees $\psi_1$ and $\psi_2$, define
\[
 \mathsf{B}_{\psi_1\wedge\psi_2}(\bmu)
 :=\mathsf{B}_{\psi_1}(\bmu)\mathsf{B}_{\psi_2}(\bmu).
\]
Also define
\[
 \mathsf{B}_{\psi_1\vee\psi_2}(\bmu)
 :=\max\{\mathsf{B}_{\psi_1}(\bmu),\mathsf{B}_{\psi_2}(\bmu)\}.
\]
Because all subtree values are bits, these equations are exactly the usual
truth tables.  Applying the recursion at the root defines one map
\[
 \mathsf{B}:=\mathsf{B}_{\mathfrak{B}}:\{0,1\}^{s_0}\to\{0,1\}.
\]

For example, consider the fixed four-leaf formula
\begin{equation}\label{eq:boolean-example}
 \mathfrak{B}_{\mathrm{ex}}
 :=(Z_1\wedge Z_2)\vee(Z_3\wedge Z_4).
\end{equation}
Its associated map is
\[
 \mathsf{B}_{\mathrm{ex}}(u_1,u_2,u_3,u_4)
 :=(u_1\wedge u_2)\vee(u_3\wedge u_4).
\]
At $\bmu=(1,0,1,1)$, the recursion gives
\[
 \mathsf{B}_{\mathrm{ex}}(1,0,1,1)
 =(1\wedge0)\vee(1\wedge1)=1.
\]
Figure~\ref{fig:boolean-tree} shows the same calculation.  Only the four leaf
values vary.  The connectives, their wiring, and the assignment of the
$j$th input bit to the leaf $Z_j$ remain fixed.

\begin{figure}[ht]
\centering
\includegraphics[width=0.68\textwidth]{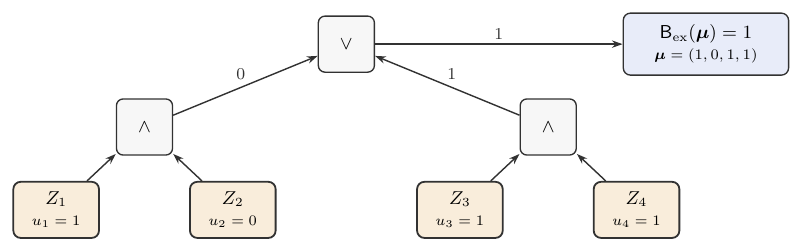}
\caption{Evaluation of the fixed formula in
\eqref{eq:boolean-example}.  The orange boxes supply the changing
input bits.  The two $\wedge$ gates, the $\vee$ gate, and every edge of the
tree are fixed before the bits are known.}
\label{fig:boolean-tree}
\end{figure}

Thus, saying that the Boolean map is fixed means that the formula tree, its
connectives, wiring, and leaf labels are all fixed in advance.  Once these
choices have been made, the recursion above determines $\mathsf{B}$ uniquely,
and only the $s_0$ leaf bits may vary with $(\bmx,t,\bmmu)$.

\begin{definition}[Boolean-polynomial description]
\label{def:boolean-polynomial}
Let $d,r,s,d_{\mathrm{pol}}\in\N^+$, let $X\subseteq\R^{d}$, and let
\[
 \calF:=\{g_{\bmmu}:X\to\R:\bmmu\in\R^r\}.
\]
Its joint truth set is
\begin{equation}\label{eq:truth-set}
 \calS
 :=\left\{(\bmx,t,\bmmu)\in X\times\R\times\R^r:
 g_{\bmmu}(\bmx)>t\right\}.
\end{equation}

Choose a number of leaf occurrences $s_0$ with $1\le s_0\le s$.
For each occurrence $j=1,\ldots,s_0$, choose a polynomial
\[
 p_j\in\R[x_1,\ldots,x_{d},t,\mu_1,\ldots,\mu_r]
\].
We use the standard convention that the zero polynomial has degree
$-\infty$; thus identically zero atoms are covered by every nonnegative
degree bound below.
The $j$th atom produces the bit
\begin{equation}\label{eq:atom-truth-value}
 A_j(\bmx,t,\bmmu)
 :=\one_{\{p_j(\bmx,t,\bmmu)>0\}}.
\end{equation}
Finally, choose one Boolean formula tree
$\mathfrak{B}(Z_1,\ldots,Z_{s_0})$ as described above, and let
$\mathsf{B}:\{0,1\}^{s_0}\to\{0,1\}$ be the map obtained from its recursive
evaluation.  We say that $\calS$
has an $(s,d_{\mathrm{pol}})$ Boolean-polynomial description in the parameter
$\bmmu$ when, for every $(\bmx,t,\bmmu)$,
\begin{equation}\label{eq:boolean-description}
 \one_{\calS}(\bmx,t,\bmmu)
 =\mathsf{B}\bigl(
 A_1(\bmx,t,\bmmu),\ldots,
 A_{s_0}(\bmx,t,\bmmu)\bigr),
\end{equation}
and, for each fixed $(\bmx,t)$,
\begin{equation*}
 \deg_{\bmmu}p_j(\bmx,t,\cdot)\le d_{\mathrm{pol}},
 \qquad j=1,\ldots,s_0.
\end{equation*}
The integer $s$ is only an upper bound; $s_0$ is the number of leaf
occurrences actually used after all comparisons have been written in the
strict-positivity form of \eqref{eq:atom-truth-value}.
\end{definition}

Every object defining the rule ($s_0$, the polynomials $p_j$, the
formula tree $\mathfrak{B}$, its map $\mathsf{B}$, and the assignment of atoms to
leaves) is chosen before $(\bmx,t,\bmmu)$ is given.  Only the atom bits in
\eqref{eq:atom-truth-value}, and hence the final output bit, vary
with the arguments.  If the same polynomial comparison is written twice,
its two appearances occupy two different leaf slots and count as two atom
occurrences.  They may produce identical bits, but the formula still has two
syntactic leaves.

Restricting atoms to strict positivity loses no Boolean expressive power.
For any real polynomial $p$, the other standard comparisons are represented
by
\[
 \one_{\{p<0\}}=\one_{\{-p>0\}},
 \qquad
 \one_{\{p\le0\}}=1-\one_{\{p>0\}}.
\]
Similarly,
\[
 \one_{\{p\ge0\}}=1-\one_{\{-p>0\}},
 \qquad
 \one_{\{p=0\}}
 =\bigl(1-\one_{\{p>0\}}\bigr)
  \bigl(1-\one_{\{-p>0\}}\bigr).
\]
These are identities for every real value of $p$, including $p=0$.
Accordingly, an equality may create two strict-positivity leaf occurrences;
the atom count is always taken after this normalization.

For example, the predicate
\[
 \bigl[\neg(-p>0)\wedge(q>0)\bigr]\vee(-p>0)
\]
has three leaf occurrences after normalization, even though only the two
polynomials $-p$ and $q$ occur.  The strict-positivity question for $-p$ is
used twice and is therefore counted twice.  This distinction is why the
definition counts syntax-tree occurrences rather than distinct polynomial
expressions.

Here is a concrete instance that also previews the network application.
Let
\[
 g_{\bmmu}(x)
 :=a(\bmmu)\varrho\bigl(w(\bmmu)x+b(\bmmu)\bigr)+c(\bmmu),
 \qquad
 z(x,\bmmu):=w(\bmmu)x+b(\bmmu),
\]
where $a,w,b,c$ are affine functions of $\bmmu$.  The inequality
$g_{\bmmu}(x)>t$ is equivalent to
\begin{equation}\label{eq:one-unit-boolean-preview}
 \bigl[\neg(-z>0)\wedge(az+c-t>0)\bigr]
 \vee
 \bigl[(-z>0)\wedge(c-t>0)\bigr],
\end{equation}
where the arguments have been suppressed inside the formula.  Thus its four
leaf occurrences may be substituted for $Z_1,\ldots,Z_4$: the question
$-z>0$ occupies both $Z_1$ and $Z_3$, while the two output comparisons
occupy $Z_2$ and $Z_4$.  The Boolean map is
\[
 \mathsf{B}(u_1,u_2,u_3,u_4)
 :=\bigl[(\neg u_1)\wedge u_2\bigr]\vee(u_3\wedge u_4).
\]
It is fixed for every $(x,t,\bmmu)$; only the four supplied bits change.
Because $\neg(-z>0)$ is exactly $z\ge0$, the formula assigns the boundary
$z=0$ to the active branch, where $\varrho(0)=0$.  For fixed $(x,t)$, each of
the four polynomial-atom occurrences in
\eqref{eq:one-unit-boolean-preview} has degree at most two in
$\bmmu$, so this
is a $(4,2)$ Boolean-polynomial description.  The network proof below will
use exactly this active-branch/inactive-branch idea for every hidden unit.

\subsubsection*{Polynomial sign-pattern bounds}

We now state the only external algebraic result used in the capacity part of
the lower-bound proof.  We use a convenient weakened form of Warren's standard
strict-sign theorem \cite[Theorem~3]{warren1968signpatterns}; see also
\cite[Theorem~2.1]{goldberg1995bounding}.  The classical estimate
$2(2ed_{\mathrm{pol}}n/r)^r$ implies the slightly looser bound below because
$r\ge1$.

\begin{theorem}[Warren's strict-sign theorem]\label{thm:warren}
Let $r,n,d_{\mathrm{pol}}\in\N^+$ with $n\ge r$, and let
$q_1,\ldots,q_n$ be $n$ distinct nonzero real polynomials in $r$ variables,
each of total degree at most $d_{\mathrm{pol}}$.  A vector
$\bmsigma=(\sigma_1,\ldots,\sigma_n)\in\{-1,1\}^n$ is a consistent nonzero
sign assignment if there exists $\bmmu\in\R^r$ such that
$\sigma_j q_j(\bmmu)>0$ for every $j$.  The number of such assignments
satisfies
\begin{equation}\label{eq:warren}
 \Bigl\lvert
 \bigl\{
  \bmsigma\in\{-1,1\}^n:
  \text{there exists }\bmmu\in\R^r
  \text{ such that }\sigma_j q_j(\bmmu)>0
  \text{ for every }j
 \bigr\}
 \Bigr\rvert
 \le
 \left(\frac{4ed_{\mathrm{pol}}n}{r}\right)^r.
\end{equation}
\end{theorem}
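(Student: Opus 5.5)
This statement is the external algebraic input to the capacity argument, so I would not reprove Warren's theorem from scratch. Instead I would reduce it to the classical component-counting statement in \cite[Theorem~3]{warren1968signpatterns} (equivalently \cite[Theorem~2.1]{goldberg1995bounding}) and then relax the constant. The only work is to connect consistent nonzero sign assignments with connected components, and to check that the hypotheses $n\ge r$, nonzero polynomials, and degree at most $d_{\mathrm{pol}}$ match the cited form.

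\emph{Step 1: reduce to components.} Let $Z:=\bigcup_{j=1}^n\{\bmmu\in\R^r:q_j(\bmmu)=0\}$, and let $\bmsigma$ be a consistent nonzero sign assignment. The set
\[
 O_{\bmsigma}:=\{\bmmu\in\R^r:\sigma_jq_j(\bmmu)>0\text{ for every }j\}
\]
is then nonempty, open, and contained in $\R^r\setminus Z$. On each connected component $C$ of $\R^r\setminus Z$, every $q_j$ is continuous and never vanishes, so it has constant sign on $C$. Hence every component lies in exactly one set $O_{\bmsigma}$. Since distinct assignments give disjoint sets, the map sending a component to its sign vector is onto the set of consistent assignments. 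Therefore the left side of \eqref{eq:warren} is at most the number of connected components of $\R^r\setminus Z$.

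\emph{Step 2: apply Warren's bound.} Warren's theorem bounds this number of components by $(4ed_{\mathrm{pol}}n/r)^r$ whenever $n\ge r$ and the polynomials have degree at most $d_{\mathrm{pol}}$. That is exactly \eqref{eq:warren}. If one prefers the sharper form $2(2ed_{\mathrm{pol}}n/r)^r$ quoted in the text, I would instead use the elementary inequality $2\le 2^r$, valid for $r\ge1$, which gives
\[
 2(2ed_{\mathrm{pol}}n/r)^r\le 2^r(2ed_{\mathrm{pol}}n/r)^r=(4ed_{\mathrm{pol}}n/r)^r.
\]
Distinctness and nonvanishing of the $q_j$ are harmless extra hypotheses. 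Duplicates or zero polynomials would only shrink the set of consistent strict assignments, so the cited result covers them.

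\emph{Main obstacle.} The only delicate point is bookkeeping: making sure the cited version counts strict sign patterns, or components of the complement, rather than patterns that allow zeros. The latter can be larger and would require the Goldberg--Jerrum or Milnor-type variant with a different constant. I would check this first against the precise statement in \cite{warren1968signpatterns}. If that source is stated only for components, Step 1 supplies the translation.
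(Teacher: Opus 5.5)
Your proposal is correct and matches the paper. The paper also treats this as an external citation of Warren's Theorem~3 (equivalently Goldberg--Jerrum's Theorem~2.1), relaxing $2(2ed_{\mathrm{pol}}n/r)^r$ to $(4ed_{\mathrm{pol}}n/r)^r$ via $2\le 2^r$; your Step~1 reduction from strict sign assignments to connected components of $\R^r\setminus\bigcup_j\{q_j=0\}$ is a valid detail the paper leaves implicit in that citation.
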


The strict inequalities in \eqref{eq:warren} exclude all points at
which one of the polynomial values vanishes.  The form needed later is the
following indexed consequence, which also permits repeated polynomials.

\begin{corollary}[Indexed Warren bound for strict-positivity patterns]
\label{cor:indexed-warren}
Let $r,n,d_{\mathrm{pol}}\in\N^+$ with $n\ge r$, and let
$q_1,\ldots,q_n$ be nonzero real polynomials in $r$ variables, each of total
degree at most $d_{\mathrm{pol}}$.  The number of binary patterns realized at
points where none of the evaluated polynomials vanishes satisfies
\begin{equation}\label{eq:indexed-warren}
 \Bigl\lvert
 \bigl\{
  \left(
   \one_{\{q_1(\bmmu)>0\}},\ldots,
   \one_{\{q_n(\bmmu)>0\}}
  \right):
  \bmmu\in\R^r,\
  q_j(\bmmu)\ne0\text{ for every }j
 \bigr\}
 \Bigr\rvert
 \le
 \left(\frac{4ed_{\mathrm{pol}}n}{r}\right)^r.
\end{equation}
\end{corollary}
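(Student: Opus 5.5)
The plan is to reduce Corollary~\ref{cor:indexed-warren} to Theorem~\ref{thm:warren} by removing repetitions among the $q_j$ and translating strict-sign vectors into positivity bits. Let $\{\tilde q_1,\ldots,\tilde q_{n'}\}$ be the set of distinct polynomials among $q_1,\ldots,q_n$, and let $\iota:\{1,\ldots,n\}\to\{1,\ldots,n'\}$ be the fixed index map with $q_j=\tilde q_{\iota(j)}$. Every point $\bmmu$ at which no $q_j$ vanishes gives a sign vector $\bmsigma(\bmmu)\in\{-1,1\}^{n'}$ with $\sigma_k\tilde q_k(\bmmu)>0$. The binary pattern in \eqref{eq:indexed-warren} is then $\bigl(\one_{\{\sigma_{\iota(j)}=1\}}\bigr)_{j=1}^n$, so it is a fixed function of $\bmsigma(\bmmu)$. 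The number of binary patterns is therefore at most the number of consistent nonzero sign assignments for $\tilde q_1,\ldots,\tilde q_{n'}$.

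Two cases remain, depending on how $n'$ compares with $r$. If $n'\ge r$, Theorem~\ref{thm:warren} applies directly to the distinct nonzero polynomials $\tilde q_k$. It gives the bound $(4ed_{\mathrm{pol}}n'/r)^r$, and this is at most $(4ed_{\mathrm{pol}}n/r)^r$ because $n'\le n$.

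If $n'<r$, Warren's hypothesis fails. In this case I would use the trivial count instead: there are at most $2^{n'}\le 2^r$ sign vectors. Since $n\ge r$ and $d_{\mathrm{pol}}\ge1$, we have $4ed_{\mathrm{pol}}n/r\ge 4e>2$, so $2^r\le(4ed_{\mathrm{pol}}n/r)^r$.

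The only point that needs care is the second case, where the hypothesis $n\ge r$ of Theorem~\ref{thm:warren} is not available for the deduplicated family. The trivial bound handles it, so I do not expect a genuine obstacle. I would also note that the nonvanishing condition in \eqref{eq:indexed-warren} is exactly what makes each realized point correspond to a strict sign assignment, so the two counts are compared on the same set of points.
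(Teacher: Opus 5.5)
Your proposal is correct and follows the paper's proof essentially step for step. Like the paper, you deduplicate to the distinct polynomials and note that at nonvanishing points the indexed positivity vector is determined by the sign vector of that distinct family. You then apply Theorem~\ref{thm:warren} when $n'\ge r$ and use the trivial bound $2^{n'}\le 2^r\le(4ed_{\mathrm{pol}}n/r)^r$ when $n'<r$.
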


\begin{proof}
At the parameter points counted in \eqref{eq:indexed-warren}, every
polynomial value is either positive or negative.  The indicator records the
positive case by $1$ and the negative case by $0$.  Thus these are precisely
the positive/negative sign patterns counted by Warren's theorem.  A polynomial
may vanish at other points, and the indexed list
need not consist of algebraically distinct polynomials.

The displayed statement is formulated for an indexed family and therefore
allows algebraic repetitions.  To recover it from a version stated for a set
of distinct polynomials, retain one representative of each distinct
polynomial and let $N\le n$ be the number retained.  The full indexed vector
is determined by the resulting $N$-vector.  If $N\ge r$,
Theorem~\ref{thm:warren} for the distinct family gives
\[
 \left(\frac{4ed_{\mathrm{pol}}N}{r}\right)^r
 \le
 \left(\frac{4ed_{\mathrm{pol}}n}{r}\right)^r.
\]
If $N<r$, the trivial bound gives
\[
 2^N\le2^r
 \le\left(\frac{4ed_{\mathrm{pol}}n}{r}\right)^r,
\]
because $d_{\mathrm{pol}}\ge1$ and $n\ge r$.  Thus the indexed version
follows from Theorem~\ref{thm:warren}; the argument also covers repeated
polynomials and the endpoint $n=r$.
\end{proof}

Closely related binary-threshold formulations appear in
\cite[Lemma~1]{Bartlett98almostlinear},
\cite[Theorem~8.3]{anthony_bartlett_1999}, and
\cite[Lemma~17]{JMLR:v20:17-612}.  The version needed here permits zero values
and groups them with the nonpositive outcome.

For geometric intuition, each equation $p_j(\bmmu)=0$ describes an
algebraic zero set in parameter space.  On every connected component of the
complement of their union, the binary positivity vector is constant, but
different components may carry the same vector.
\eqref{eq:indexed-warren} bounds the number of distinct realized
positivity vectors, not the number of connected regions.  This geometric
picture is not used in the proof.  The argument below uses
Corollary~\ref{cor:indexed-warren}, which follows from
Theorem~\ref{thm:warren}, and does not rely on a count of connected regions.

The following lemma establishes this version by choosing a single perturbation
size that works simultaneously for every realized binary pattern.

\begin{lemma}[Binary polynomial positivity patterns]
\label{lem:binary-patterns}
Let $r,n,d_{\mathrm{pol}}\in\N^+$ with $n\ge r$.  Let
$p_1,\ldots,p_{n}$ be arbitrary real polynomials in $r$ variables,
each of total degree at most $d_{\mathrm{pol}}$.  Then
\begin{equation}\label{eq:binary-pattern-bound}
 \Bigl\lvert
 \bigl\{
 \left(
  \one_{\{p_1(\bmmu)>0\}},\ldots,
  \one_{\{p_n(\bmmu)>0\}}
 \right):
 \bmmu\in\R^r
 \bigr\}
 \Bigr\rvert
 \le
 \left(\frac{4ed_{\mathrm{pol}}n}{r}\right)^r.
\end{equation}
The list may contain repeated polynomials or the zero polynomial.
\end{lemma}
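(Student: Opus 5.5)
The plan is to reduce Lemma~\ref{lem:binary-patterns} to Corollary~\ref{cor:indexed-warren} by a uniform perturbation. Points where some $p_j$ vanishes are not covered by the corollary. The fix is to replace each $p_j$ by $p_j-\epsilon$ for one small $\epsilon>0$ that works for every realized pattern at once.

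First dispose of trivial atoms. A zero polynomial, or more generally any constant polynomial, gives the same bit at every $\bmmu$. So I delete these coordinates; they do not change the number of realized patterns. Let $n'\le n$ be the number of nonconstant polynomials that remain. If $n'<r$, the count is at most
\[
2^{n'}\le2^r\le(4ed_{\mathrm{pol}}n/r)^r,
\]
exactly as in the proof of Corollary~\ref{cor:indexed-warren}. Otherwise I apply the corollary with $n'$ in place of $n$, which only decreases the bound.

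Next handle the perturbation. The set of realized patterns is finite, at most $2^{n'}$ of them. For each realized pattern $\bmv$, fix one witness $\bmmu_{\bmv}$. Choose
\[
\epsilon:=\tfrac12\min\bigl\{p_j(\bmmu_{\bmv}) : \bmv\text{ realized},\ p_j(\bmmu_{\bmv})>0\bigr\},
\]
and set $\epsilon:=1$ if this set is empty. This is a minimum over finitely many positive numbers, so $\epsilon>0$.

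Now define $q_j:=p_j-\epsilon$. Each $q_j$ is nonzero because $p_j$ is nonconstant, and each still has total degree at most $d_{\mathrm{pol}}$. At the witness $\bmmu_{\bmv}$, every $j$ satisfies one of two cases.
\begin{itemize}
\item If $p_j(\bmmu_{\bmv})>0$, then $p_j(\bmmu_{\bmv})\ge2\epsilon$, so $q_j(\bmmu_{\bmv})\ge\epsilon>0$.
\item If $p_j(\bmmu_{\bmv})\le0$, then $q_j(\bmmu_{\bmv})\le-\epsilon<0$.
\end{itemize}
So no $q_j$ vanishes at $\bmmu_{\bmv}$, and
\[
\one_{\{q_j(\bmmu_{\bmv})>0\}}=\one_{\{p_j(\bmmu_{\bmv})>0\}}=v_j \quad\text{for every } j.
\]
Hence the map $\bmv\mapsto\bmv$ injects the realized $p$-patterns into the nonvanishing $q$-patterns counted by Corollary~\ref{cor:indexed-warren}. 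That corollary then gives the bound $(4ed_{\mathrm{pol}}n'/r)^r\le(4ed_{\mathrm{pol}}n/r)^r$.

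The main point needing care is that $\epsilon$ must be chosen once, for all patterns and all indices simultaneously. Finiteness of the pattern set makes this legitimate. The other thing to check is that the shifted polynomials meet the corollary's hypotheses: they are nonzero with degree at most $d_{\mathrm{pol}}$, and the condition $n'\ge r$ is handled by the case split above. Repeated polynomials cause no trouble, because the corollary already permits indexed repetitions.
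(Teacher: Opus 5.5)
Your proposal is correct and takes essentially the same route as the paper: fix one witness per realized pattern, shift every polynomial down by half the smallest positive witness value, and inject the realized patterns into the nonvanishing patterns counted by Corollary~\ref{cor:indexed-warren}. The differences are cosmetic. You delete constant polynomials up front to guarantee $p_j-\epsilon\not\equiv0$, and then split again on $n'<r$. The paper instead keeps every index and notes that $p_j\equiv\delta$ would force $\delta\in\calV_+$, hence $a_*\le a_*/2$, a contradiction.
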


\begin{proof}
The idea is to move every polynomial downward by the same small number.
At one carefully selected witness for each realized bit vector, positive
values remain positive, while zero and negative values become strictly
negative.  Corollary~\ref{cor:indexed-warren}, and hence
Theorem~\ref{thm:warren}, can then count these strict binary patterns.

\proofstep{1}{Choose one witness for each realized binary pattern.}

Let $\calB$ denote the set on the left side of
\eqref{eq:binary-pattern-bound}.  It is finite because
$\calB\subseteq\{0,1\}^{n}$.  For every
$\bmv=(v_1,\ldots,v_n)\in\calB$, choose one representative
$\bmmu^{\bmv}\in\R^r$ such that
\[
 \one_{\{p_j(\bmmu^{\bmv})>0\}}=v_j,
 \qquad j=1,\ldots,n.
\]

\proofstep{2}{Choose one perturbation size that works for every witness.}

Consider the finite collection of positive witness values
\[
 \calV_+
 :=\left\{p_j(\bmmu^{\bmv}):
 \bmv\in\calB,\ 1\le j\le n,\
 p_j(\bmmu^{\bmv})>0\right\}.
\]
If $\calV_+$ is empty, every realized binary pattern is the all-zero vector.
Hence $\lvert\calB\rvert=1$, which is no larger than the right side of
\eqref{eq:binary-pattern-bound}; the proof is complete in this
case.

Assume that $\calV_+$ is nonempty.  Being a finite set of positive real
numbers, it has a positive minimum.  Set
\[
 a_*:=\min\calV_+>0,
 \qquad
 \delta:=\frac{a_*}{2}.
\]

\proofstep{3}{Shift every polynomial downward and preserve all witness bits.}

For $j=1,\ldots,n$, define
\[
 q_j:=p_j-\delta.
\]
Each $q_j$ has degree at most $d_{\mathrm{pol}}$.  We first check that it is
not the zero polynomial.  If $q_j$ were identically zero, then
$p_j\equiv\delta>0$.  The value $\delta$ would then belong to $\calV_+$, so
$a_*\le\delta=a_*/2$, a contradiction.

Fix $\bmv\in\calB$ and $j\in\{1,\ldots,n\}$.  If $v_j=1$, then
$p_j(\bmmu^{\bmv})\in\calV_+$ and hence
\[
 p_j(\bmmu^{\bmv})\ge a_*=2\delta.
\]
Therefore $q_j(\bmmu^{\bmv})\ge\delta>0$.  If $v_j=0$, then
$p_j(\bmmu^{\bmv})\le0$, so
$q_j(\bmmu^{\bmv})\le-\delta<0$.  In both cases,
\[
 \one_{\{q_j(\bmmu^{\bmv})>0\}}=v_j,
 \qquad
 q_j(\bmmu^{\bmv})\ne0.
\]
Thus the original bit vector $\bmv$ is reproduced exactly by the shifted
polynomials at its selected witness.

\proofstep{4}{Apply the indexed consequence of Warren's theorem.}

The preceding identity shows that distinct vectors in $\calB$ give distinct
strict-positivity patterns of $q_1,\ldots,q_n$ at parameter points where all
$q_j$ are nonzero.  Consequently, $\calB$ injects into the set counted in
\eqref{eq:indexed-warren}.  The hypotheses of
Corollary~\ref{cor:indexed-warren} hold: $n\ge r$, every $q_j$ is nonzero by
Step~3, and its degree is at most $d_{\mathrm{pol}}$.  Therefore,
\[
 \lvert\calB\rvert
 \le
 \left(\frac{4ed_{\mathrm{pol}}n}{r}\right)^r.
\]
This is \eqref{eq:binary-pattern-bound}.  Repetitions in the
original list and identically zero original polynomials cause no problem:
the argument treats the list by indexed occurrences, and the common
downward shift makes every $q_j$ a nonzero polynomial.
\end{proof}

\subsubsection*{From polynomial predicates to pseudo-dimension}

The following proposition adapts the counting argument underlying
\cite[Theorem~2.2]{goldberg1995bounding} to subgraph classes instead of
invoking that theorem directly.  Goldberg and Jerrum formulate an
ordinary VC-dimension result, count distinct atomic predicates, and use
three-way sign assignments, leading to the constant $8e$.  Here atoms are
normalized to binary strict-positivity tests, syntactic leaf occurrences are
counted, and Lemma~\ref{lem:binary-patterns} groups zero with the nonpositive
outcome.  This gives the constant $4e$ below.  The network result in
\cite[Theorem~7 and Lemma~17]{JMLR:v20:17-612} is not invoked.

\begin{proposition}[Polynomial-predicate pseudo-dimension bound]\label{prop:GJ}
Let $\calF=\{g_{\bmmu}:X\to\R:\bmmu\in\R^r\}$, where
$X\subseteq\R^{d}$ and $d,r\in\N^+$.  If its joint truth set admits an
$(s,d_{\mathrm{pol}})$ Boolean-polynomial description in the sense of
Definition~\ref{def:boolean-polynomial}, then
\begin{equation}\label{eq:GJ-bound}
 \Pdim(\calF)
 \le2r\log_2\left(4ed_{\mathrm{pol}} s\right).
\end{equation}
\end{proposition}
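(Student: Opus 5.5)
The argument follows the standard Goldberg--Jerrum counting scheme, with Lemma~\ref{lem:binary-patterns} supplying the pattern bound. Suppose that $m\in\N^+$ satisfies $m\le\Pdim(\calF)$. Fix pairwise distinct lifted points $(\bmx_1,t_1),\ldots,(\bmx_m,t_m)$ that are shattered by $\operatorname{Sub}(\calF)$. For each instance $i$ and each leaf occurrence $j\le s_0$, define the polynomial in $\bmmu$
\[
 p_{i,j}(\bmmu):=p_j(\bmx_i,t_i,\bmmu).
\]
By Definition~\ref{def:boolean-polynomial}, each $p_{i,j}$ has degree at most $d_{\mathrm{pol}}$ in $\bmmu$, and it may be the zero polynomial. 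By \eqref{eq:boolean-description}, the label vector $(\one_{\{g_{\bmmu}(\bmx_i)>t_i\}})_{i=1}^m$ equals $\bigl(\mathsf{B}(\one_{\{p_{i,1}(\bmmu)>0\}},\ldots,\one_{\{p_{i,s_0}(\bmmu)>0\}})\bigr)_{i=1}^m$. The Boolean map $\mathsf{B}$ is fixed, so this label vector is a function of the joint positivity vector of the $n:=ms_0$ polynomials $(p_{i,j})$. Shattering requires all $2^m$ labelings, so the number of realized positivity vectors is at least $2^m$.

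Next I count the positivity vectors. Assume first that $n=ms_0\ge r$. Lemma~\ref{lem:binary-patterns} applies to arbitrary indexed lists, including repeated and zero polynomials, and gives
\[
 2^m\le\left(\frac{4ed_{\mathrm{pol}}ms_0}{r}\right)^r\le\left(\frac{4ed_{\mathrm{pol}}ms}{r}\right)^r .
\]
If instead $ms_0<r$, then $m<r$, and \eqref{eq:GJ-bound} holds trivially, because its right side is at least $2r\log_2(4e)>r$. So it remains to extract \eqref{eq:GJ-bound} from the displayed inequality, which reads $m\le r\log_2(4ed_{\mathrm{pol}}s\cdot m/r)$.

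This elementary inversion is the only real obstacle. Set $a:=4ed_{\mathrm{pol}}s\ge4e$ and $u:=m/r$. The inequality becomes $u\le\log_2(au)$, and I must show it forces $u\le2\log_2 a$. Suppose instead that $u>2\log_2 a$. The function $u-\log_2(au)$ is increasing for $u\ge1/\ln2$, and $2\log_2 a>1/\ln2$ because $a\ge4e$. Hence it suffices to check the endpoint, that is, to show $2\log_2 a>\log_2(2a\log_2 a)$. This is equivalent to $a>2\log_2 a$, which holds for all $a\ge4e$. The contradiction gives $m\le2r\log_2 a$. Since $m\le\Pdim(\calF)$ was arbitrary, $\Pdim(\calF)$ is finite and satisfies \eqref{eq:GJ-bound}.

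Finally, I will note that the thresholds $t_i$ were fixed before the labelings, as the definition of pseudo-dimension requires. This ordering is what makes each $p_{i,j}$ a fixed polynomial in $\bmmu$ alone, so that Lemma~\ref{lem:binary-patterns} can be applied.
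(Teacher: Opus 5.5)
Your proposal is correct and follows the paper's proof essentially step for step. It uses the same fixed sections $p_{i,j}(\bmmu)=p_j(\bmx_i,t_i,\bmmu)$, the same factorization of the label vector through the positivity bits giving $2^m\le\lvert\calB\rvert$, the same treatment of the small case $ms_0<r$, and the same application of Lemma~\ref{lem:binary-patterns}. The only difference is in the final scalar inversion, where you argue by monotonicity of $u-\log_2(au)$ at the point $u=2\log_2 a$. The paper instead splits into the cases $u\le a$ and $u>a$ and uses $2\log_2u\le u$ for $u\ge4$; both arguments are valid.
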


\begin{proof}
Fix one $(s,d_{\mathrm{pol}})$ Boolean-polynomial description of the joint truth set.
Retain the notation $s_0$, $p_j$, $\mathfrak{B}$, and $\mathsf{B}$
from Definition~\ref{def:boolean-polynomial}.  In
particular, $s_0$ is the actual number of leaf occurrences and $s_0\le s$.

Fix $m\in\N^+$ for which the explicit labeling property in
\eqref{eq:pdim-labeling} holds.  We prove that $m$ satisfies
\eqref{eq:GJ-bound}.  The functions $g_{\bmmu}$ themselves
need not be global polynomials in $\bmmu$.  Only the binary predicate
$g_{\bmmu}(\bmx)>t$ must have the stated Boolean-polynomial description.
Once the lifted instances $(\bmx_i,t_i)$ have been fixed, every relevant
polynomial depends only on the parameter variable $\bmmu$.

When $ms_0\ge r$, the counting argument reduces to the chain
\begin{equation}\label{eq:GJ-proof-chain}
 \underbrace{2^m}_{\text{all requested labels}}
 \le
 \underbrace{\left\lvert\text{realized polynomial bit patterns}\right\rvert}_
             {\text{information available from the atoms}}
 \le
 \underbrace{\left(\frac{4ed_{\mathrm{pol}}ms}{r}\right)^r}_
             {\text{binary Warren bound}}.
\end{equation}
Steps~1 through 4 justify this chain, and Step~5 solves the remaining scalar
inequality for $m$.  The easier case $ms_0<r$ is handled at the beginning of
Step~4.

\proofstep{1}{Fix the lifted instances and take parameter-space sections.}

The purpose of this step is to turn
\eqref{eq:pdim-labeling} into a numerical requirement: varying the
parameter must produce all $2^m$ binary membership vectors.  The joint set
$\calS$ records every triple for which $g_{\bmmu}(\bmx)>t$ is true, while a
section fixes $(\bmx,t)$ and keeps only the parameters giving that answer.

By the defining property of the selected $m$, there are fixed lifted
instances
\[
 (\bmx_1,t_1),\ldots,(\bmx_m,t_m)\in X\times\R
\]
on which all $2^m$ label vectors are realized.  For an arbitrary lifted
instance $(\bmx,t)$, define the corresponding parameter section of the joint
truth set by
\[
 \calS_{\bmx,t}
 :=\{\bmmu\in\R^r:(\bmx,t,\bmmu)\in\calS\}.
\]
By \eqref{eq:truth-set}, this section is
\[
 \calS_{\bmx,t}
 =\{\bmmu\in\R^r:g_{\bmmu}(\bmx)>t\}.
\]
It is a semialgebraic subset of $\R^r$, because
\eqref{eq:boolean-description} describes it using finitely many
polynomial comparisons in $\bmmu$.  Here ``semialgebraic'' means a finite
Boolean combination of sets defined by polynomial equalities and
inequalities.  For the fixed lifted instances, write
\[
 \calS_i:=\calS_{\bmx_i,t_i},
 \qquad i=1,\ldots,m.
\]

Define the label map
\[
 \operatorname{Lab}:\R^r\to\{0,1\}^m
\]
by
\[
 \operatorname{Lab}(\bmmu)
 :=\left(\one_{\calS_1}(\bmmu),\ldots,
 \one_{\calS_m}(\bmmu)\right).
\]
Equivalently, its $i$th coordinate is
\[
 \operatorname{Lab}_i(\bmmu)
 =\one_{\{g_{\bmmu}(\bmx_i)>t_i\}}.
\]
For every $\bmv\in\{0,1\}^m$,
\eqref{eq:pdim-labeling} supplies a
parameter $\bmmu_{\bmv}\in\R^r$ for which
\[
 \operatorname{Lab}(\bmmu_{\bmv})=\bmv.
\]
Consequently, the set of realized label vectors is
\begin{equation}\label{eq:all-labels}
 \calY:=\operatorname{Lab}(\R^r)=\{0,1\}^m,
 \qquad \lvert\calY\rvert=2^m.
\end{equation}

\proofstep{2}{Collect the parameter polynomials.}

We now freeze the lifted instances.  This removes the input and threshold
from the list of variables, leaving a finite family of ordinary polynomials
in $\bmmu$ to which a binary pattern theorem can later be applied.

For each lifted instance and each atom occurrence, define
\[
 p_{i,j}(\bmmu)
 :=p_j(\bmx_i,t_i,\bmmu),
 \qquad i=1,\ldots,m,
 \quad j=1,\ldots,s_0.
\]
Every $p_{i,j}$ is a polynomial in the $r$ coordinates of $\bmmu$, and
\[
 \deg_{\bmmu}p_{i,j}\le d_{\mathrm{pol}}.
\]
The natural occurrence index set is
\[
 \calI:=\{1,\ldots,m\}\times\{1,\ldots,s_0\}.
\]
Its cardinality satisfies
\[
 \lvert\calI\rvert=ms_0\le ms.
\]
There are $s_0$ atom occurrences for each of the $m$ fixed lifted instances;
hence the indexed list contains $ms_0$ polynomials.  Two members of this
list may coincide, and some may be identically zero.  This causes no problem:
Lemma~\ref{lem:binary-patterns} was stated for an indexed list and explicitly
allows both possibilities.

\proofstep{3}{Show that polynomial bits determine every label.}

This step proves that the labels contain no more combinatorial information
than the simultaneous strict-positivity bits of the collected polynomials.

Define the simultaneous bit map
\[
 \operatorname{Bit}_{\calI}:\R^r\to\{0,1\}^{\calI}
\]
by
\[
 \operatorname{Bit}_{\calI}(\bmmu)
 :=\left(
 \one_{\{p_{i,j}(\bmmu)>0\}}
 \right)_{(i,j)\in\calI}.
\]
Let
\[
 \calB:=\operatorname{Bit}_{\calI}(\R^r)
\]
be the set of realizable bit patterns.  It contains only patterns attained
by some parameter and need not equal the full cube $\{0,1\}^{\calI}$.
A fiber of $\operatorname{Bit}_{\calI}$ may be disconnected; the proof counts
bit patterns, not connected components.

We now define, rather than merely assert, the map from atom bits to labels.
For
$\bmchi=(\chi_{i,j})_{(i,j)\in\calI}\in\{0,1\}^{\calI}$, set
\[
 \mathsf{H}_i(\bmchi)
 :=\mathsf{B}\left(
 \chi_{i,1},\ldots,\chi_{i,s_0}
 \right),
 \qquad i=1,\ldots,m,
\]
and then set
\[
 \mathsf{H}(\bmchi)
 :=\left(\mathsf{H}_1(\bmchi),\ldots,\mathsf{H}_m(\bmchi)\right).
\]
For every parameter $\bmmu$,
\eqref{eq:boolean-description} gives the exact factorization
\[
 \operatorname{Lab}
 =\mathsf{H}\circ\operatorname{Bit}_{\calI}.
\]
In words, the information flows as
\[
 \bmmu
 \longmapsto
 \left(\one_{\{p_{i,j}(\bmmu)>0\}}\right)_{(i,j)\in\calI}
 \longmapsto
 \operatorname{Lab}(\bmmu).
\]
Thus each realized bit pattern determines exactly one label vector, although
different bit patterns may determine the same label vector.  Therefore
\begin{equation}\label{eq:labels-below-bits}
 \lvert\calY\rvert\le\lvert\calB\rvert.
\end{equation}
Combining \eqref{eq:all-labels} and
\eqref{eq:labels-below-bits} gives
\begin{equation}\label{eq:two-m-below-bits}
 2^m\le\lvert\calB\rvert.
\end{equation}

\proofstep{4}{Apply the binary polynomial-pattern estimate.}

We first remove the only small-count case.  If $ms_0<r$, then $s_0\ge1$
implies $m<r$.  Since $d_{\mathrm{pol}},s\ge1$, we have
\[
 m<r<2r\log_2(4ed_{\mathrm{pol}} s),
\]
so the desired estimate already holds.

Assume from now on that $ms_0\ge r$.  Apply
Lemma~\ref{lem:binary-patterns} directly to the indexed list
$(p_{i,j})_{(i,j)\in\calI}$.  The list has $ms_0$ members, each of degree at
most $d_{\mathrm{pol}}$, and the lemma allows repeated polynomials as well as
polynomials that are identically zero.  We obtain
\begin{equation}\label{eq:bit-pattern-count}
 \lvert\calB\rvert
 \le\left(\frac{4ed_{\mathrm{pol}}ms_0}{r}\right)^r
 \le\left(\frac{4ed_{\mathrm{pol}}ms}{r}\right)^r.
\end{equation}
The first inequality is precisely where polynomial geometry enters.
Lemma~\ref{lem:binary-patterns} preserves Warren's constant $4$ because a
single downward shift groups zero and negative values into the same binary
outcome.  \eqref{eq:two-m-below-bits} and
\eqref{eq:bit-pattern-count} imply
\begin{equation}\label{eq:GJ-counting-inequality}
 2^m
 \le\left(\frac{4ed_{\mathrm{pol}}ms}{r}\right)^r.
\end{equation}
Thus \eqref{eq:GJ-proof-chain} has been justified.  The sole
external algebraic input was Theorem~\ref{thm:warren}; the all-points binary
lemma used here was derived from it above.

\proofstep{5}{Solve the resulting scalar inequality.}

The unknown $m$ still appears on the right side of the counting inequality.
We now remove it by an elementary one-variable estimate.

Set
\[
 u:=\frac{m}{r}.
\]
Also put
\[
 a:=4ed_{\mathrm{pol}} s.
\]
Taking base-two logarithms on both sides of
\eqref{eq:GJ-counting-inequality} and dividing by $r$ gives
\[
 m\le r\log_2\left(\frac{4ed_{\mathrm{pol}}ms}{r}\right).
\]
Substituting $m=ru$ and $a=4ed_{\mathrm{pol}} s$ turns this into
\begin{equation}\label{eq:GJ-algebra}
 u\le\log_2a+\log_2u.
\end{equation}
Notice that $a\ge4e>4$.

If $u\le a$, then $\log_2u\le\log_2a$, and
\eqref{eq:GJ-algebra} implies
\[
 u\le2\log_2a.
\]
If $u>a$, then $\log_2a<\log_2u$, so
\eqref{eq:GJ-algebra} would imply
\[
 u<2\log_2u.
\]
However, $u>a>4$ and the elementary inequality
$2\log_2u\le u$ holds for every $u\ge4$.  For completeness, define
\[
 \psi(u):=u-2\log_2u.
\]
Then $\psi(4)=0$, and for $u\ge4$,
\[
 \psi'(u)=1-\frac{2}{u\ln 2}
 \ge1-\frac{1}{2\ln 2}>0.
\]
Hence $\psi(u)\ge0$ on $[4,\infty)$, proving the claimed elementary
inequality and contradicting $u<2\log_2u$.  Thus in all cases
\[
 m\le2r\log_2\left(4ed_{\mathrm{pol}} s\right).
\]

Together with the small-count case handled at the start of Step~4, this
bounds every integer $m$ satisfying the explicit labeling property in
\eqref{eq:pdim-labeling}.  Taking the supremum in
\eqref{eq:pdim-definition} proves
\eqref{eq:GJ-bound}, without assuming in advance that the
pseudo-dimension is finite.
\end{proof}

\subsection{Effective latent coordinates}
\label{sec:effective-coordinates}

Proposition~\ref{prop:GJ} measures complexity in terms of the variables that
genuinely affect the predicate.  In the present model, using the nominal
latent dimension $M$ would fail to capture the saturation at $M=P$, since
directions in the kernel of $\bmA$ do not affect the generated parameters.
Before analyzing the network computation, we therefore parameterize
$\calA(\R^M)=\operatorname{Range}(\bmA)+\bma$ using
$r_{\calA}=\rank(\bmA)$ effective coordinates.

\begin{lemma}[Rank reduction for an affine latent parameterization]
\label{lem:rank-reduction}
Let $\Phi$ be an architecture and let
$\calA\in\aff(M,P_\Phi)$ be given by
\[
 \calA(\bmxi)=\bmA\bmxi+\bma.
\]
Put $r_{\calA}:=\rank(\bmA)$.  If $r_{\calA}\ge1$, there is a
full-column-rank matrix
$\bmR\in\R^{P_\Phi\times r_{\calA}}$ such that
\begin{equation}\label{eq:affine-image-coordinates}
 \calA(\R^M)
 =\left\{\bmR\bmmu+\bma:
          \bmmu\in\R^{r_{\calA}}\right\}.
\end{equation}
Define the complete parameter vector in effective coordinates by
\[
 \bmtheta(\bmmu):=\bmR\bmmu+\bma.
\]
Consequently,
\[
 \calF_{\Phi,\calA}
 =\left\{\Phi_{\bmtheta(\bmmu)}:
          \bmmu\in\R^{r_{\calA}}\right\}.
\]
If $r_{\calA}=0$, the family contains exactly one realized function.
\end{lemma}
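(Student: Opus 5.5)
The proof is elementary linear algebra: describe $\operatorname{Range}(\bmA)$ by a basis and reparameterize. Since $\calA(\R^M)=\{\bmA\bmxi+\bma:\bmxi\in\R^M\}=\operatorname{Range}(\bmA)+\bma$, everything reduces to choosing coordinates on the column space of $\bmA$.

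\textbf{Case $r_{\calA}\ge1$.} Let $\bmR\in\R^{P_\Phi\times r_{\calA}}$ be the matrix whose columns form a basis of $\operatorname{Range}(\bmA)$. One concrete choice is $r_{\calA}$ linearly independent columns of $\bmA$; another is $\bmR:=\bmA\bmV_{\calA}$ with $\bmV_{\calA}$ an orthonormal basis of $\ker(\bmA)^\perp$, as in Section~\ref{sec:optimization-geometry}. Linear independence of the columns gives full column rank. Because the columns span $\operatorname{Range}(\bmA)$, the map $\bmmu\mapsto\bmR\bmmu$ is a bijection from $\R^{r_{\calA}}$ onto $\operatorname{Range}(\bmA)$.

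I will verify \eqref{eq:affine-image-coordinates} by double inclusion.
\begin{itemize}
\item Given $\bmxi$, the vector $\bmA\bmxi$ lies in the column space, so $\bmA\bmxi=\bmR\bmmu$ for some $\bmmu$.
\item Conversely, $\bmR\bmmu$ lies in $\operatorname{Range}(\bmA)$, so it equals $\bmA\bmxi$ for some $\bmxi$.
\end{itemize}
Adding $\bma$ on both sides gives the identity. For the second display, substitute this set identity into the definition \eqref{eq:realized-family}. That definition already expresses $\calF_{\Phi,\calA}$ as $\{\Phi_{\bmtheta}:\bmtheta\in\calA(\R^M)\}$, so only the parameter set matters, not the particular latent vector used to reach each $\bmtheta$.

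\textbf{Case $r_{\calA}=0$.} Here $\bmA=\bmzero$, so $\calA(\R^M)=\{\bma\}$. Since $M\ge1$, $\R^M$ is nonempty, and the family is exactly $\{\Phi_{\bma}\}$, a single function.

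No step is a real obstacle. The only points needing care are stating that $\bmR$ is chosen once, depending only on $\bmA$ and not on any target, and noting that the function $\bmtheta(\bmmu)$ is affine in $\bmmu$. That affineness is what the later pseudo-dimension argument uses.
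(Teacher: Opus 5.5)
Your proof is correct and follows essentially the same route as the paper. Both take a basis of $\operatorname{Range}(\bmA)$ as the columns of $\bmR$ and verify the image identity by double inclusion. The only difference is in the reverse inclusion: the paper writes $\bmA=\bmR\bmC$ and checks that $\bmC$ is surjective, whereas you use directly that the columns of $\bmR$ lie in and span $\operatorname{Range}(\bmA)$, which is equivalent and slightly shorter.
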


\begin{proof}
The purpose of the reduction is to replace the nominal latent vector by
coordinates on the range of $\bmA$.  We use only an elementary rank
factorization and verify
surjectivity explicitly, so no matrix-decomposition theorem is needed.

Assume first that $r_{\calA}\ge1$.  Choose
$r_{\calA}$ linearly independent columns of $\bmA$ that form a basis of
$\operatorname{Range}(\bmA)$, and place them in
\[
 \bmR\in\R^{P_\Phi\times r_{\calA}}.
\]
This matrix has full column rank.  Every column of $\bmA$ has a unique
coordinate vector in the chosen basis.  Placing these coordinate vectors side
by side produces a matrix
\[
 \bmC\in\R^{r_{\calA}\times M}
\]
such that
\[
 \bmA=\bmR\bmC.
\]

The matrix $\bmC$ has rank $r_{\calA}$.  Indeed,
$r_{\calA}=\rank(\bmA)\le\rank(\bmC)\le r_{\calA}$.
Consequently, $\bmC$ contains an invertible
$r_{\calA}\times r_{\calA}$ submatrix: Gaussian elimination produces
$r_{\calA}$ pivot columns because $\bmC$ has full row rank.  Given any
$\bmmu\in\R^{r_{\calA}}$, solve the corresponding square system for those
$r_{\calA}$ coordinates of $\bmxi$ and set all remaining coordinates to zero.
This gives $\bmC\bmxi=\bmmu$.  Hence the map
\[
 \bmxi\mapsto\bmC\bmxi
\]
is onto $\R^{r_{\calA}}$.

To verify the image identity, we prove the two inclusions separately.  If
$\bmxi\in\R^M$, then
$\bmA\bmxi+\bma=\bmR(\bmC\bmxi)+\bma$, so the left-hand image is contained
in the right-hand image.  Conversely, if $\bmmu\in\R^{r_{\calA}}$,
surjectivity supplies $\bmxi\in\R^M$ with $\bmC\bmxi=\bmmu$, and hence
$\bmR\bmmu+\bma=\bmA\bmxi+\bma$.  Therefore
\[
 \left\{\bmA\bmxi+\bma:\bmxi\in\R^M\right\}
 =
 \left\{\bmR\bmmu+\bma:
          \bmmu\in\R^{r_{\calA}}\right\},
\]
which is \eqref{eq:affine-image-coordinates}.  Substitution into
\eqref{eq:realized-family} gives the identity of realized function
families.

If $r_{\calA}=0$, then every column of $\bmA$ is zero, so
$\bmA=\bmzero$ and
$\calA(\bmxi)=\bma$ for every $\bmxi\in\R^M$.  Hence
$\calF_{\Phi,\calA}=\{\Phi_{\bma}\}$ is a singleton.
\end{proof}

\subsection{Pseudo-dimension under affine parameter tying}
\label{sec:affine-pdim}

The rank reduction identifies the correct parameter space.  We next bound the
binary complexity generated by the fixed ReLU architecture as the effective
coordinates vary.  For each activation pattern, the network output is
polynomial in those coordinates; we then combine all patterns into a single
Boolean formula for the subgraph predicate.  Throughout the next lemma, the
architecture $\Phi$ and the affine map $\calA$ remain fixed.  In
the positive-rank case, we also choose once and for all one basis matrix
$\bmR$ supplied by Lemma~\ref{lem:rank-reduction}.  This choice is made before
the input, threshold, effective parameter vector, or desired labeling is
specified.  Consequently, the entries of $\bmR$ and $\bma$ are fixed
coefficients; only $\bmmu\in\R^{r_{\calA}}$ varies.

\begin{lemma}[Pseudo-dimension after affine parameter tying]\label{lem:affine-pdim}
Let $\Phi$ be a fully connected $d$-input ReLU architecture with at least
one hidden unit, let $P_\Phi$ be its number of scalar weight and bias entries,
and let
$\calA\in\aff(M,P_\Phi)$.  If $U_\Phi$ is the total number of hidden
ReLU units, then
\begin{equation}\label{eq:affine-pdim}
 \VCdim(\calF_{\Phi,\calA})
 \le\Pdim(\calF_{\Phi,\calA})
 \le8r_{\calA}(U_\Phi+1)
 \le8r_{\calA}P_\Phi.
\end{equation}
In particular, if $P_\Phi\le P$, then
\begin{equation}\label{eq:affine-pdim-budget}
 \Pdim(\calF_{\Phi,\calA})
 \le8P\min\{M,P\}.
\end{equation}
\end{lemma}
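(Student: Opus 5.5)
The plan is to invoke Proposition~\ref{prop:GJ} after the rank reduction of Lemma~\ref{lem:rank-reduction}. The first inequality in \eqref{eq:affine-pdim} is \eqref{eq:vc-below-pdim}. If $r_{\calA}=0$, the family is a singleton. A singleton subgraph class cannot realize both labels at one lifted point, so $\Pdim=0$ and the bound is trivial. Assume therefore $r:=r_{\calA}\ge1$, fix $\bmR$, and write $\bmtheta(\bmmu)=\bmR\bmmu+\bma$. Every weight and bias entry is then an affine (degree $\le1$) polynomial in $\bmmu$ with fixed coefficients.

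The central step is a Boolean-polynomial description of the predicate $\Phi_{\bmtheta(\bmmu)}(\bmx)>t$, built by enumerating activation patterns. Order the hidden units layer by layer, $(\ell,j)$. For a pattern $\bmeps\in\{0,1\}^{\calU_\Phi}$, define recursively the pattern-conditioned preactivations $z^{\bmeps}_{\ell,j}(\bmx,\bmmu)$. These are obtained by replacing each earlier $\varrho$ with multiplication by the prescribed bit. For fixed $\bmx$, $z^{\bmeps}_{\ell,j}$ is a polynomial in $\bmmu$ of degree at most $\ell$, since each layer multiplies by one more affine weight. The output $o^{\bmeps}-t$ has degree at most $L_\Phi+1$.

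The true pattern is the unique $\bmeps$ with $\eps_{\ell,j}=\one_{\{z^{\bmeps}_{\ell,j}\ge0\}}$ for all units, using the boundary convention of \eqref{eq:one-unit-boolean-preview}. Enumerating all $2^{U_\Phi}$ patterns, however, would give $s$ exponential in $U_\Phi$. That is still acceptable, because $s$ enters only through $\log_2 s$, so $\log_2 s\lesssim U_\Phi$. The resulting bound is then of order $r\,(U_\Phi+\log(L_\Phi+1))$.

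Concretely, the formula is $\bigvee_{\bmeps}\bigl[\bigwedge_{(\ell,j)}(\text{sign test of }z^{\bmeps}_{\ell,j}\text{ matching }\eps_{\ell,j})\wedge(o^{\bmeps}-t>0)\bigr]$. It has $s\le2^{U_\Phi}(U_\Phi+1)$ leaf occurrences, counting each $\neg(-z>0)$ as one leaf, and degree $d_{\mathrm{pol}}\le L_\Phi+1\le U_\Phi+1$. Proposition~\ref{prop:GJ} then gives
\[
 \Pdim\le2r\log_2\bigl(4e(U_\Phi+1)^2 2^{U_\Phi}\bigr)
 =2r\bigl(U_\Phi+\log_2(4e)+2\log_2(U_\Phi+1)\bigr).
\]
Using $\log_2(4e)<4$ and $2\log_2(u+1)\le2u$, this is at most $2r(3U_\Phi+4)\le8r(U_\Phi+1)$.

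The main obstacle is correctness of the pattern disjunction: exactly one disjunct's guard holds at each $(\bmx,\bmmu)$, and on it $o^{\bmeps}$ equals the true output. I would prove this by induction over layers. The true pattern satisfies its guard. Conversely, if $\bmeps$ satisfies its guard, then by induction its conditioned preactivations coincide with the true ones layer by layer, so $\bmeps$ is the true pattern. Hence the disjunction equals $\one_{\{\Phi>t\}}$. Degree bookkeeping must also be checked: products of affine weights with earlier polynomials raise the degree by one per layer, and the input $\bmx$ is fixed.

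Finally, $U_\Phi+1\le P_\Phi$ because each hidden unit owns at least its bias slot and the output bias is one more slot. Combined with $r\le\min\{M,P_\Phi\}$ from \eqref{eq:effective-rank}, this gives $8r(U_\Phi+1)\le8rP_\Phi\le8P\min\{M,P\}$ when $P_\Phi\le P$, which yields \eqref{eq:affine-pdim-budget}.
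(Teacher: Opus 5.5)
Your proposal is correct and follows the paper's proof essentially step for step: rank reduction to $r_{\calA}$ effective coordinates, a pattern-conditioned polynomial forward pass of parameter degree at most $L_\Phi+1$, the boundary-exact disjunction over all $2^{U_\Phi}$ activation patterns with $(U_\Phi+1)2^{U_\Phi}$ leaf occurrences, Proposition~\ref{prop:GJ}, and the bias-slot count $U_\Phi+1\le P_\Phi$. The arithmetic $2r(3U_\Phi+4)\le 8r(U_\Phi+1)$ also matches the paper's.
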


\begin{proof}
The proof has five components.  We first pass from the nominal latent
vector to
$r_{\calA}$ effective variables.  For each fixed global activation pattern,
we then define a formal forward pass consisting entirely of polynomials.  A
layerwise induction controls their degree in the effective parameters after
the lifted instance $(\bmx,t)$ is fixed.  A second induction verifies an
exact, boundary-aware Boolean formula for the subgraph predicate
$\Phi_{\bmtheta}(\bmx)>t$.  Finally,
Proposition~\ref{prop:GJ} converts the number and degree of the polynomial
predicates into the claimed pseudo-dimension bound.
Figure~\ref{fig:pdim-roadmap} records this logical chain.

\begin{figure}[ht]
\centering
\includegraphics[width=0.86\textwidth]{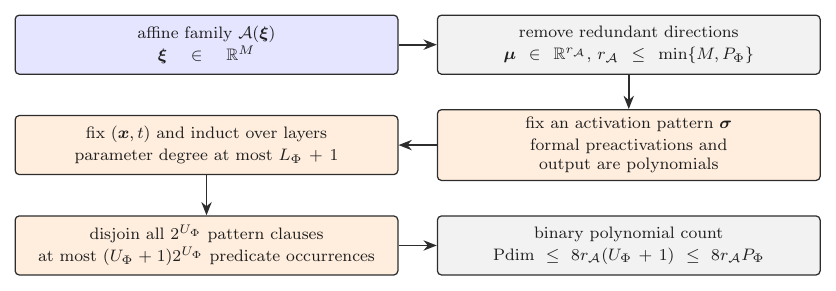}
\caption{From affine parameter tying to the pseudo-dimension bound.  Redundant
latent-coordinate directions are removed before the network is
represented, pattern by pattern,
by polynomial predicates in only $r_{\calA}$ effective variables.  The
exponential number of activation patterns appears inside a logarithm in
Proposition~\ref{prop:GJ}.}
\label{fig:pdim-roadmap}
\end{figure}

\proofstep{1}{Reduce to effective affine coordinates.}

The purpose of this step is to ensure that the number of variables entering
the polynomial-pattern theorem is the true affine rank, rather than the
possibly redundant latent-vector length $M$.

If $r_{\calA}=0$, Lemma~\ref{lem:rank-reduction} shows that
$\calF_{\Phi,\calA}$ is a singleton.  For any fixed pair $(\bmx,t)$, a
singleton realizes only one of the two labels, so the one-point condition in
\eqref{eq:pdim-labeling} fails.  Its pseudo-dimension is zero, and
the inequalities are immediate.

Assume henceforth that $r_{\calA}\ge1$.  By
Lemma~\ref{lem:rank-reduction}, every parameter vector in the affine image has
the form
\begin{equation*}
 \bmtheta(\bmmu)
 =\bmR\bmmu+\bma,
 \qquad
 \bmmu\in\R^{r_{\calA}}.
\end{equation*}
From this point onward, $\bmR$ denotes the fixed choice made before the
proof.  Although a basis of $\operatorname{Range}(\bmA)$ need not be unique,
no later choice of $\bmR$ may depend on $(\bmx,t,\bmmu)$ or on a labeling to
be realized.
Every scalar layer weight and bias is consequently an affine polynomial in
$\bmmu$.  Using the fixed ordering of the $P_\Phi$ dense parameter entries,
denote these coordinate
functions by
\[
 w_{\ell,j,k}(\bmmu)
 \quad\text{and}\quad
 b_{\ell,j}(\bmmu),
 \qquad
 \ell=1,\ldots,L_\Phi+1.
\]
Each has total degree at most one in $\bmmu$.

The one-unit calculation in
\eqref{eq:one-unit-boolean-preview} already exhibits the two ideas
needed below: fix a branch, on which the ReLU becomes polynomial, and then
join the branch clauses by one Boolean formula.  We now carry out those two
operations simultaneously for every hidden unit of the network.

\proofstep{2}{Define a formal polynomial forward pass for every activation pattern.}

The purpose of fixing a pattern is to replace every nonlinear ReLU branch by
one polynomial expression.  We will later use Boolean logic to combine the
finitely many patterns.

Let $n_1,\ldots,n_{L_\Phi}$ be the hidden widths, recall that $n_0=d$, and
recall $U_\Phi=\lvert\calU_\Phi\rvert$ from
\eqref{eq:hidden-unit-count}.
A global activation pattern is a binary array
\[
 \bmsigma=(\sigma_{\ell,j})
 \in\{0,1\}^{U_\Phi},
 \qquad
 1\le\ell\le L_\Phi,
 \quad
 1\le j\le n_\ell.
\]
The entry $\sigma_{\ell,j}=1$ will represent an active unit, and
$\sigma_{\ell,j}=0$ an inactive unit.

Fix one pattern $\bmsigma$.  We define candidate preactivations and activations
recursively without applying the nonlinear ReLU function.  At the input,
set
\begin{equation*}
 h_{0,k}^{\bmsigma}(\bmx,\bmmu):=x_k,
 \qquad k=1,\ldots,d.
\end{equation*}
For a hidden unit $(\ell,j)$, define
\[
 z_{\ell,j}^{\bmsigma}(\bmx,\bmmu)
 :=
 \sum_{k=1}^{n_{\ell-1}}
 w_{\ell,j,k}(\bmmu)
 h_{\ell-1,k}^{\bmsigma}(\bmx,\bmmu)
 +b_{\ell,j}(\bmmu).
\]
Its formal postactivation is
\[
 h_{\ell,j}^{\bmsigma}(\bmx,\bmmu)
 :=\sigma_{\ell,j}z_{\ell,j}^{\bmsigma}(\bmx,\bmmu).
\]
Thus an active formal gate transmits its candidate preactivation, while an
inactive formal gate returns the zero polynomial.  After the last hidden
layer, define the formal scalar output
\begin{equation*}
 o_{\bmsigma}(\bmx,\bmmu)
 :=
 \sum_{k=1}^{n_{L_\Phi}}
 w_{L_\Phi+1,1,k}(\bmmu)
 h_{L_\Phi,k}^{\bmsigma}(\bmx,\bmmu)
 +b_{L_\Phi+1,1}(\bmmu).
\end{equation*}
For a fixed pattern, every displayed object is now an ordinary polynomial in
$(\bmx,\bmmu)$.

\proofstep{3}{Fix the lifted instance and prove the parameter-degree bound.}

We now verify the degree condition in
Definition~\ref{def:boolean-polynomial}.  Once the lifted instance
$(\bmx,t)$ is fixed, $x_1,\ldots,x_d,t$ are constants, and only the effective
parameter vector $\bmmu$ remains variable.  Therefore
\[
 \deg_{\bmmu}h_{0,k}^{\bmsigma}\le0,
 \qquad
 \deg_{\bmmu}w_{\ell,j,k},
 \deg_{\bmmu}b_{\ell,j}\le1.
\]
It follows immediately that
\[
 \deg_{\bmmu}z_{1,j}^{\bmsigma},
 \deg_{\bmmu}h_{1,j}^{\bmsigma}\le1.
\]
Suppose for some $\ell\ge2$ that
\[
 \deg_{\bmmu}h_{\ell-1,k}^{\bmsigma}\le\ell-1
 \qquad(k=1,\ldots,n_{\ell-1}).
\]
Then every product in the layer satisfies
\[
 \deg_{\bmmu}\bigl(
 w_{\ell,j,k}(\bmmu)
 h_{\ell-1,k}^{\bmsigma}(\bmx,\bmmu)
 \bigr)
 \le1+(\ell-1)=\ell.
\]
Taking finite sums and adding the affine bias gives
\[
 \deg_{\bmmu}z_{\ell,j}^{\bmsigma},
 \deg_{\bmmu}h_{\ell,j}^{\bmsigma}
 \le\ell.
\]
The last inequality uses only that
$h_{\ell,j}^{\bmsigma}=\sigma_{\ell,j}z_{\ell,j}^{\bmsigma}$ and
$\sigma_{\ell,j}\in\{0,1\}$ is fixed.  At the output layer, one further
multiplication by an affine weight gives
\[
 \deg_{\bmmu}o_{\bmsigma}\le L_\Phi+1.
\]
Since $t$ is fixed, subtracting it does not increase the parameter degree:
\[
 \deg_{\bmmu}(o_{\bmsigma}-t)\le L_\Phi+1.
\]
Collecting the preceding bounds, for every fixed lifted instance $(\bmx,t)$
and activation pattern $\bmsigma$, the relevant polynomials satisfy
\begingroup
\renewcommand{\arraystretch}{1.14}
\setlength{\arraycolsep}{9pt}
\begin{equation*}
\begin{array}{@{}l c@{}}
\toprule
\text{polynomial in }\bmmu
&\text{degree bound}\\
\midrule
h_{0,k}^{\bmsigma}=x_k
&\le0\\
w_{\ell,j,k}(\bmmu),\ b_{\ell,j}(\bmmu)
&\le1\\
z_{\ell,j}^{\bmsigma},\ h_{\ell,j}^{\bmsigma}
&\le\ell\\
o_{\bmsigma}-t
&\le L_\Phi+1\\
\bottomrule
\end{array}
\end{equation*}
\endgroup

\proofstep{4}{Encode the exact forward-pass pattern, including zero preactivations.}

This step combines the separate polynomial pieces into one exact Boolean
description and checks the boundary case of a zero preactivation.

We use the deterministic convention
\[
 \text{active if the preactivation is nonnegative},
 \qquad
 \text{inactive if it is negative}.
\]
The nonnegative/strict-negative convention assigns every preactivation to
a unique branch.  Since $\varrho(0)=0$, treating a zero
preactivation as active does not alter the numerical network output.

For a bit $a\in\{0,1\}$ and a Boolean statement $E$, define the literal
\[
 \Lambda_a(E)
 :=
 \begin{cases}
 E,&a=0,\\
 \neg E,&a=1.
 \end{cases}
\]
For a fixed pattern $\bmsigma$, use the strict-positivity statement
\[
 -z_{\ell,j}^{\bmsigma}(\bmx,\bmmu)>0
\]
at each hidden unit.  This statement is true exactly when the formal
preactivation is negative.  Define the consistency formula
\begin{equation}\label{eq:pattern-consistency}
 \mathsf{C}_{\bmsigma}(\bmx,\bmmu)
 :=
 \bigwedge_{(\ell,j)\in\calU_\Phi}
 \Lambda_{\sigma_{\ell,j}}\bigl(
 -z_{\ell,j}^{\bmsigma}(\bmx,\bmmu)>0
 \bigr).
\end{equation}
Indeed, when $\sigma_{\ell,j}=0$, the corresponding literal says
$z_{\ell,j}^{\bmsigma}<0$.  When $\sigma_{\ell,j}=1$, its negation says
$z_{\ell,j}^{\bmsigma}\ge0$.  Thus every consistency leaf is of the required
strict-positivity form; weak nonnegativity is supplied by Boolean negation.

We verify carefully that this formula is equivalent to $\bmsigma$ being the
actual activation pattern.

First suppose that $\mathsf{C}_{\bmsigma}(\bmx,\bmmu)$ holds.  At the first hidden
layer, $z_{1,j}^{\bmsigma}$ is exactly the actual preactivation because both are
computed from the same input and generated first-layer parameters.  If
$\sigma_{1,j}=1$, consistency gives $z_{1,j}^{\bmsigma}\ge0$, and hence
\[
 \varrho(z_{1,j}^{\bmsigma})
 =z_{1,j}^{\bmsigma}
 =h_{1,j}^{\bmsigma}.
\]
If $\sigma_{1,j}=0$, consistency gives $z_{1,j}^{\bmsigma}<0$, and hence
\[
 \varrho(z_{1,j}^{\bmsigma})
 =0
 =h_{1,j}^{\bmsigma}.
\]
Thus the formal and actual first-layer states agree.  If they agree through
layer $\ell-1$, their layer-$\ell$ affine preactivations also agree.  The same
two-case argument then shows that their postactivations agree in layer
$\ell$.  Induction proves agreement through the last hidden layer, and the
formal output $o_{\bmsigma}$ is therefore the actual scalar network output.

Conversely, start from the actual forward pass for a fixed
$(\bmx,\bmmu)$ and define $\sigma_{\ell,j}=1$ exactly when its actual
preactivation is nonnegative.  The same induction shows that the associated
formal preactivations equal the actual ones.  They consequently satisfy the
appropriate weak or strict inequalities in
\eqref{eq:pattern-consistency}.  Hence the actual forward pass
produces exactly one consistent pattern.

For every pattern, combine consistency and the output comparison into the
clause
\[
 \mathsf{D}_{\bmsigma}(\bmx,t,\bmmu)
 :=\mathsf{C}_{\bmsigma}(\bmx,\bmmu)
 \wedge\bigl(o_{\bmsigma}(\bmx,\bmmu)-t>0\bigr).
\]
For an arbitrary threshold $t\in\R$, the exact subgraph predicate is
\[
 \Phi_{\bmtheta(\bmmu)}(\bmx)> t
 \quad\Longleftrightarrow\quad
 \bigvee_{\bmsigma\in\{0,1\}^{U_\Phi}}
 \mathsf{D}_{\bmsigma}(\bmx,t,\bmmu).
\]
To connect this predicate explicitly to
Definition~\ref{def:boolean-polynomial}, introduce one propositional variable
$Z_{\bmsigma,\ell,j}$ for the consistency atom associated with
$(\bmsigma,\ell,j)$, and one variable $Z_{\bmsigma,\mathrm{out}}$ for its
output atom.  For a propositional variable $Z$, use the same literal notation
\[
 \Lambda_a(Z)
 :=
 \begin{cases}
 Z,&a=0,\\
 \neg Z,&a=1.
 \end{cases}
\]
First define the formula for one pattern by
\[
 \mathfrak{c}_{\bmsigma}
 :=\biggl(
 \bigwedge_{(\ell,j)\in\calU_\Phi}
 \Lambda_{\sigma_{\ell,j}}(Z_{\bmsigma,\ell,j})
 \biggr)\wedge Z_{\bmsigma,\mathrm{out}}.
\]
The single fixed formula for the whole network is then
\begin{equation*}
 \mathfrak{B}_{\Phi}
 :=\bigvee_{\bmsigma\in\{0,1\}^{U_\Phi}}
 \mathfrak{c}_{\bmsigma}.
\end{equation*}
Fix lexicographic orderings of the pattern set and of $\calU_\Phi$, together
with the binary bracketings stipulated above.  Enumerate the leaf occurrences
of $\mathfrak{B}_\Phi$ in that fixed order as
$1,\ldots,s_\Phi$, where
\[
 s_\Phi:=(U_\Phi+1)2^{U_\Phi}.
\]
Associate with each consistency-leaf occurrence its joint polynomial
$-z_{\ell,j}^{\bmsigma}(\bmx,\bmmu)$ and with each output-leaf occurrence its
joint polynomial $o_{\bmsigma}(\bmx,\bmmu)-t$.  Because
$(\Phi,\calA,\bmR)$ and all orderings have already been fixed, this produces
one indexed list
\[
 p_1,\ldots,p_{s_\Phi}
\]
before $(\bmx,t,\bmmu)$ is given.  Algebraically identical polynomials arising
at different leaf occurrences remain separately indexed.
At a triple $(\bmx,t,\bmmu)$, substitute the truth value
\[
 \one_{\{-z_{\ell,j}^{\bmsigma}(\bmx,\bmmu)>0\}}
\]
for $Z_{\bmsigma,\ell,j}$.  Substitute
\[
 \one_{\{o_{\bmsigma}(\bmx,\bmmu)-t>0\}}
\]
for $Z_{\bmsigma,\mathrm{out}}$.  The preceding equivalence now says exactly
that the indicator of the subgraph predicate equals the evaluation of
$\mathfrak{B}_{\Phi}$ at these atom truth values.

At every fixed triple $(\bmx,t,\bmmu)$, exactly one consistency formula is
true: its pattern declares a zero preactivation active and every negative
preactivation inactive.  Thus exactly one pattern clause can survive the
consistency tests, and the disjunction compares the output polynomial from
that unique formal branch with $t$.  This also verifies exactness on
activation boundaries, not merely at points where all preactivations are
nonzero.

The Boolean map associated with this fixed formula is
\[
 \mathsf{B}_\Phi:=\mathsf{B}_{\mathfrak{B}_\Phi}:\{0,1\}^{s_\Phi}\to\{0,1\},
\]
where $s_\Phi$ is the number of leaf occurrences introduced above and counted
explicitly in Step~5.  After
the truth values of all the $Z$-variables are supplied, $\mathsf{B}_\Phi$ returns $1$
exactly when at least one complete pattern clause is true.  The formula and
its map depend only on the fixed finite index set of the architecture.  The
atom polynomials also depend on the fixed affine map, but neither the formula
nor the polynomial list is selected after seeing $(\bmx,t,\bmmu)$.

Equivalently, define the network truth set
\[
 \calS_{\Phi,\calA}
 :=\left\{(\bmx,t,\bmmu)\in\R^d\times\R\times\R^{r_{\calA}}:
 \Phi_{\bmtheta(\bmmu)}(\bmx)>t\right\}.
\]
For each fixed lifted instance $(\bmx,t)$, its parameter section is
\[
 (\calS_{\Phi,\calA})_{\bmx,t}
 =\left\{\bmmu\in\R^{r_{\calA}}:
 \Phi_{\bmtheta(\bmmu)}(\bmx)>t\right\}.
\]
The displayed disjunction gives a finite semialgebraic description of this
section.  The network output need not be one global polynomial in $\bmmu$:
each activation pattern contributes a polynomial piece, and the fixed
disjunction combines all pieces into one exact description of membership in
$\calS_{\Phi,\calA}$.  The formula is fixed once $(\Phi,\calA)$ is fixed;
it is not chosen after seeing $(\bmx,t)$ or $\bmmu$.

\proofstep{5}{Count predicates and apply Proposition~\ref{prop:GJ}.}

It remains to identify the two syntactic quantities required by
Definition~\ref{def:boolean-polynomial}: the number of atom occurrences and
their maximum degree in $\bmmu$.

The pattern set has cardinality
\[
 \left\lvert\{0,1\}^{U_\Phi}\right\rvert=2^{U_\Phi}.
\]
Each pattern contributes $U_\Phi$ consistency-atom occurrences and one output
comparison.  Hence the complete formula has exactly
\[
 s_\Phi=(U_\Phi+1)2^{U_\Phi}
\]
leaf occurrences, as introduced in Step~4.  Set
\[
 d_{\mathrm{net}}:=L_\Phi+1.
\]
By Step~3, every atom has degree at most $d_{\mathrm{net}}$ in $\bmmu$.
Since every hidden
layer is nonempty,
\[
 1\le L_\Phi\le U_\Phi.
\]
It follows that
\[
 d_{\mathrm{net}}\le U_\Phi+1.
\]
The occurrence count is syntactic: if the same preactivation polynomial
appears in two different pattern clauses, both appearances are counted, as
required by Definition~\ref{def:boolean-polynomial}.
Applying Proposition~\ref{prop:GJ} with $r=r_{\calA}$ yields
\[
 \Pdim(\calF_{\Phi,\calA})
 \le2r_{\calA}\log_2\left(4ed_{\mathrm{net}}s_\Phi\right).
\]
Substituting the two preceding bounds gives
\[
 \Pdim(\calF_{\Phi,\calA})
 \le2r_{\calA}\log_2\left(4e(U_\Phi+1)^2 2^{U_\Phi}\right).
\]
Expanding this logarithm yields
\[
 \Pdim(\calF_{\Phi,\calA})
 \le2r_{\calA}\left[U_\Phi+\log_2(4e)
 +2\log_2(U_\Phi+1)\right].
\]
Because the architecture has at least one hidden unit, $U_\Phi\ge1$.  For
$U_\Phi\ge1$,
\[
 \log_2(4e)<4,
 \qquad
 \log_2(U_\Phi+1)\le U_\Phi.
\]
The bracket is therefore bounded by
\[
 3U_\Phi+4\le4(U_\Phi+1).
\]
Consequently,
\[
 \Pdim(\calF_{\Phi,\calA})
 \le8r_{\calA}(U_\Phi+1).
\]
Together with \eqref{eq:vc-below-pdim}, this proves the first two
inequalities in \eqref{eq:affine-pdim}.

It remains to express the result using the parameter-slot budget.  By the
fully connected architecture convention, every hidden unit has its own bias
slot.  These account for $U_\Phi$ distinct slots, and the scalar output layer
has one further bias slot.  All of these slots are counted in
$P_\Phi$, even when their assigned values are fixed or zero.  Therefore
\[
 U_\Phi+1\le P_\Phi,
\]
and the last inequality in \eqref{eq:affine-pdim} follows.

Finally, \eqref{eq:effective-rank} and $P_\Phi\le P$ give
\[
 r_{\calA}\le\min\{M,P_\Phi\}
 \le\min\{M,P\},
\]
and hence
\[
 8r_{\calA}P_\Phi
 \le8P\min\{M,P\}.
\]
This is \eqref{eq:affine-pdim-budget}.
\end{proof}

The preceding proof also explains the form of the result.  The factor
$r_{\calA}$ is the number of independent real directions in which the
generated parameter vector can move.  The factor $U_\Phi+1$, and hence the
coarser factor $P_\Phi$, measures the amount of piecewise-polynomial
computation performed by the fixed deployed network.  Although there are
$2^{U_\Phi}$ possible activation patterns, this quantity occurs inside the
logarithm in Proposition~\ref{prop:GJ}, since
$\log_2(2^{U_\Phi})=U_\Phi$.  Thus the enumeration of activation patterns
does not produce an exponential pseudo-dimension bound.

\begin{remark}[Piecewise-polynomial activations]
For context, although this paper uses the ReLU activation $\varrho$
throughout, the same binary-predicate argument extends to any fixed activation
$\rho$ with finitely many polynomial pieces.  More precisely, suppose that
$J_\rho\in\N^+$ intervals partition
$\R$, with every breakpoint assigned to exactly one adjacent interval, and
that on each interval $\rho$ agrees with a polynomial of degree at most an
integer $q_\rho\ge0$.

Consider an architecture with $L$ hidden layers, $U$ hidden units, and
positive effective affine parameter dimension $r\ge1$.  Define
\[
 D_{\rho,0}:=0,
 \qquad
 D_{\rho,\ell}
 :=q_\rho\bigl(D_{\rho,\ell-1}+1\bigr),
 \quad \ell=1,\ldots,L,
 \qquad
 D_{\rho,*}:=D_{\rho,L}+1.
\]
For a fixed global piece pattern, induction over the layers shows that a
formal postactivation in layer $\ell$ has parameter degree at most
$D_{\rho,\ell}$: forming its affine preactivation raises the degree bound
from $D_{\rho,\ell-1}$ to at most $D_{\rho,\ell-1}+1$, and substitution into
a polynomial piece of degree at most $q_\rho$ gives the displayed recursion.
The interval-consistency atoms use the preactivations, while the final output
atom has degree at most $D_{\rho,L}+1$.  Hence all atom degrees are bounded
by $D_{\rho,*}$.  When $q_\rho\ge1$, the recursion gives
$D_{\rho,\ell}\ge D_{\rho,\ell-1}+1$, so $D_{\rho,*}$ dominates every
preactivation degree from every layer.  When $q_\rho=0$, every formal
postactivation is constant
on its selected piece and every preactivation atom has degree at most one,
which is again covered by $D_{\rho,*}=1$.

There are at most $J_\rho^U$ global piece patterns.  For each unit, membership
of its preactivation in the selected interval requires at most two
strict-positivity atom occurrences, and the output comparison requires one
additional occurrence.  Consequently,
the same argument gives
\[
 \Pdim(\calF_{\Phi,\calA})
 \le
 2r\log_2\bigl(
 4eD_{\rho,*}(2U+1)J_\rho^U
 \bigr).
\]
The fixed breakpoint convention keeps the Boolean branch description exact
even when a preactivation equals a breakpoint.  A formal proof would repeat
Steps~2 through 5 above and is omitted because no theorem in this paper uses a
non-ReLU activation.  For ReLU, the specialized argument above gives the
simpler and sharper estimate.  If the effective affine dimension is zero,
the realized family is a singleton and has pseudo-dimension zero, as in
Step~1.
\end{remark}

\subsection{From pseudo-dimension to a uniform lower bound}
\label{sec:bump-packing}

This completes the capacity part of the lower bound.  We now show that any
real-valued function class with this capacity bound must fail to approximate
at least one member of the H\"older ball.  The mechanism is a family of
disjoint tents whose signs can be chosen independently: uniformly accurate
approximation of every such choice would force the zero-threshold class to
realize too many binary labelings.  The construction below yields an explicit
constant.  A related VC-dimension argument appears in
\cite[Theorem~2.4]{shijun:optimal:rate:in:width:and:depth}; the proof given here
is self-contained and does not invoke that result.

\begin{lemma}[H\"older bump packing]\label{lem:holder-bump}
Let $d\in\N^+$ and $0<\alpha\le1$, and let
$\calF\subseteq C(\R^d)$ be a real-valued function class.  Let
$V\in[1,\infty)$ satisfy
\[
 \VCdim(\calF)\le V.
\]
Then
\begin{equation}\label{eq:bump-lower}
 \sup_{f\in\calH_d^\alpha}\inf_{g\in\calF}
 \lVert f-g\rVert_{L^\infty([0,1]^d)}
 \ge \widetilde{c}_\alpha V^{-\alpha/d}.
\end{equation}
Here, as above, $\widetilde{c}_\alpha:=4^{-(\alpha+1)}2^{-\alpha}$.
\end{lemma}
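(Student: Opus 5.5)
We argue by contradiction with a shattering construction.  Set $\varepsilon:=\widetilde{c}_\alpha V^{-\alpha/d}$ and suppose that every $f\in\calH_d^\alpha$ admits some $g\in\calF$ with $\lVert f-g\rVert_{L^\infty([0,1]^d)}<\varepsilon$.  Choose the integer $n:=\lfloor V^{1/d}\rfloor+1$, so that $n^d>V\ge\VCdim(\calF)$ and $n\le 2V^{1/d}$ (using $V\ge1$).  Partition $[0,1]^d$ into the $n^d$ closed subcubes of side $1/n$, with centers $\bmx_{\bmbeta}$, $\bmbeta\in\{0,\ldots,n-1\}^d$.  The goal is to show that $\operatorname{Thr}_0(\calF)$ shatters the $n^d$ centers, which contradicts $n^d>V$ through \eqref{eq:real-vcdim}.

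For the bumps, use the tent
\[
 \phi(\bmx):=c\,\bigl(\tfrac{1}{2n}-\lVert\bmx\rVert_\infty\bigr)_+^{\alpha},
\]
so that $\phi(\bmx-\bmx_{\bmbeta})$ is supported in the $\bmbeta$th subcube and has peak value $c(2n)^{-\alpha}$ at the center.  For signs $\bmv\in\{0,1\}^{n^d}$, set $f_{\bmv}:=\sum_{\bmbeta}(2v_{\bmbeta}-1)\phi(\cdot-\bmx_{\bmbeta})$.

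We must check that $f_{\bmv}\in\calH_d^\alpha$ for a suitable $c$ of the form $4^{-1}$ or so.  The sup bound is immediate, since the supports are disjoint and $c(2n)^{-\alpha}\le1$.

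The Hölder bound is the main bookkeeping step.  Within one cube, $t\mapsto t_+^\alpha$ is $\alpha$-Hölder with constant $1$, and $\lvert\lVert\bmx\rVert_\infty-\lVert\bmy\rVert_\infty\rvert\le\lVert\bmx-\bmy\rVert_2$, so the constant is $c$.  For points $\bmx,\bmy$ in different cubes with opposite signs, pick the point $\bmz$ on the segment from $\bmx$ to $\bmy$ where the segment crosses the common zero boundary of the two bumps.  Then
\[
 \lvert f(\bmx)-f(\bmy)\rvert\le\lvert f(\bmx)\rvert+\lvert f(\bmy)\rvert\le c\lVert\bmx-\bmz\rVert^\alpha+c\lVert\bmz-\bmy\rVert^\alpha\le 2^{1-\alpha}c\lVert\bmx-\bmy\rVert^\alpha,
\]
by concavity.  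Choosing $c\le1/2$ therefore suffices.

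With $c=1/2$, each bump has height $h=2^{-1}(2n)^{-\alpha}\ge2^{-1}4^{-\alpha}V^{-\alpha/d}$.  This exceeds $\widetilde{c}_\alpha V^{-\alpha/d}=4^{-(\alpha+1)}2^{-\alpha}V^{-\alpha/d}$ comfortably.  I would adjust $c$ and $n$ so that the final constant matches $\widetilde{c}_\alpha$ exactly, but any $h>\varepsilon$ suffices.

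For the shattering step, given a label vector $\bmv$, take $g_{\bmv}\in\calF$ with $\lVert f_{\bmv}-g_{\bmv}\rVert_\infty<\varepsilon<h$.  At each center, $f_{\bmv}(\bmx_{\bmbeta})=\pm h$, so $g_{\bmv}(\bmx_{\bmbeta})>0$ exactly when $v_{\bmbeta}=1$.  Hence the thresholded class realizes every labeling of the $n^d$ distinct centers, giving $\VCdim(\calF)\ge n^d>V$, a contradiction.

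Since the supremum in \eqref{eq:bump-lower} is over a fixed class, the contradiction yields an $f$ whose approximation error is at least $\varepsilon$.  This proves \eqref{eq:bump-lower}.  The main obstacle I expect is the cross-cube Hölder estimate and the constant tracking that produces exactly $\widetilde{c}_\alpha$.
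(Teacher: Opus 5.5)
Your proof is correct and follows the same overall strategy as the paper: a grid of $n^d>V$ signed H\"older tents with $n=\lfloor V^{1/d}\rfloor+1\le2V^{1/d}$, thresholding the approximants at zero, and a shattering contradiction. The difference is the tent geometry.

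The paper shrinks each tent to sup-norm radius $h=1/(4R)$ around centers spaced $1/R$ apart. Distinct supports are then at distance at least $2h$, and the cross-tent H\"older check is the one-line bound $\lvert f_{\bmv}(\bmx)-f_{\bmv}(\bmy)\rvert\le h^\alpha\le(2h)^\alpha\le\lVert\bmx-\bmy\rVert_2^\alpha$. The paper also uses only half the bump height as its margin, which is where $\widetilde{c}_\alpha$ comes from.

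Letting each tent fill its cell, as you do, costs a segment argument plus $a^\alpha+b^\alpha\le2^{1-\alpha}(a+b)^\alpha$. In return you get taller bumps and, with the full-height margin, the stronger bound $2^{-1}4^{-\alpha}V^{-\alpha/d}=2^{1+\alpha}\widetilde{c}_\alpha V^{-\alpha/d}$. So there is no need to readjust $c$ or $n$: a larger lower bound already implies the stated one.

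One step needs more careful phrasing. Two cells that are not face-adjacent have no common zero boundary for the segment to cross. Instead:
\begin{itemize}
\item Let $\bmz_1$ be the last point of the segment lying in the cell of $\bmx$, and $\bmz_2$ the first point lying in the cell of $\bmy$. Both bumps vanish at these points.
\item If $\bmx$ is interior to its cell, the points occur in the order $\bmx,\bmz_1,\bmz_2,\bmy$. If $\bmx$ is not interior, then $f_{\bmv}(\bmx)=0$ and the estimate is trivial.
\item Hence $\lVert\bmx-\bmz_1\rVert_2+\lVert\bmz_2-\bmy\rVert_2\le\lVert\bmx-\bmy\rVert_2$, and your concavity estimate applies unchanged.
\end{itemize}
Finally, the bound through $\lvert f_{\bmv}(\bmx)\rvert+\lvert f_{\bmv}(\bmy)\rvert$ also covers cross-cell pairs of equal sign, a case you restricted away by saying ``opposite signs'' but which needs no separate treatment.
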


\begin{proof}
We place disjoint H\"older tents on a regular grid and choose one binary label
at each center.  The label determines whether the corresponding tent is
positive or negative.  Every resulting sum remains in the unit H\"older
ball.  If every such target admitted an approximation with error less than
half its absolute value at the centers, the zero-threshold class would
realize every binary labeling of those centers.  Choosing more centers than
$V$ then gives a contradiction.

If $\calF$ is empty, the inner infimum in \eqref{eq:bump-lower} is
$+\infty$, so the result is immediate.  Assume henceforth that $\calF$ is
nonempty.  As specified in Section~\ref{sec:notation}, the displayed
$L^\infty([0,1]^d)$ norm is the pointwise uniform norm, and the error bound may
therefore be evaluated at every grid center.

\proofstep{1}{Choose more centers than the VC dimension at threshold zero.}

Set
\[
 R:=\lfloor V^{1/d}\rfloor+1,
 \qquad
 h:=\frac{1}{4R}.
\]
Since $V\ge1$, we have $R\ge2$.  Moreover,
$R=\lfloor V^{1/d}\rfloor+1>V^{1/d}$, so
\[
 R^d>V\ge\VCdim(\calF).
\]
Introduce the center index set
\[
 \calJ:=\{1,\ldots,R\}^d,
 \qquad \lvert\calJ\rvert=R^d.
\]
For every multi-index $\bmi=(i_1,\ldots,i_d)\in\calJ$, define
\begin{equation*}
 \bmx_{\bmi}
 :=\biggl(
  \frac{2i_1-1}{2R},\ldots,
  \frac{2i_d-1}{2R}
 \biggr).
\end{equation*}
There are exactly $\lvert\calJ\rvert=R^d$ centers.  Each center has distance at
least $1/(2R)=2h$ from the boundary of the cube in the maximum norm.  Two distinct
centers have distance at least $1/R=4h$ in that norm.  It follows that the
closed balls of radius $h$ in the maximum norm around the centers lie inside
$[0,1]^d$ and
that any two such balls are separated by distance at least $2h$ in the
maximum norm.  Since $\lVert\bmz\rVert_2\ge\lVert\bmz\rVert_\infty$, their Euclidean separation is
also at least $2h$.

\proofstep{2}{Define one tent at each center and verify its H\"older estimate.}

For each $\bmi\in\calJ$, define $\psi_{\bmi}:[0,1]^d\to\R$ by
\begin{equation*}
 \psi_{\bmi}(\bmx)
 :=\frac{1}{2}
 \left(h-\lVert\bmx-\bmx_{\bmi}\rVert_\infty\right)_+^\alpha.
\end{equation*}
The set on which $\psi_{\bmi}$ is nonzero lies in the open ball in the maximum norm
of radius $h$ centered at $\bmx_{\bmi}$, and its closed support lies in the
corresponding closed ball.  The supports are therefore contained in the cube
and are pairwise separated.

For $0<\alpha\le1$ and $u,v\in\R$, one has
\begin{equation}\label{eq:positive-part-holder}
 \lvert u_+^\alpha-v_+^\alpha\rvert\le\lvert u-v\rvert^\alpha.
\end{equation}
To see this, first suppose that $u\ge v\ge0$.  The subadditivity of
$t\mapsto t^\alpha$ gives
\[
 u^\alpha
 =[v+(u-v)]^\alpha
 \le v^\alpha+(u-v)^\alpha.
\]
For completeness, this subadditivity is elementary.  If $a,b\ge0$ and
$a+b>0$, put $\lambda:=a/(a+b)$.  Since $0<\alpha\le1$ and
$0\le\lambda\le1$, one has
$\lambda^\alpha\ge\lambda$ and
$(1-\lambda)^\alpha\ge1-\lambda$.  Multiplication by
$(a+b)^\alpha$ yields
\[
 a^\alpha+b^\alpha
 =(a+b)^\alpha\left(\lambda^\alpha+(1-\lambda)^\alpha\right)
 \ge(a+b)^\alpha.
\]
The case $a=b=0$ is immediate.
The case $v\ge u\ge0$ is symmetric.  If both numbers are nonpositive, both
positive parts vanish.  If exactly one is nonpositive, the positive part of
the other is at most $\lvert u-v\rvert$, which proves the remaining cases.

The reverse triangle inequality for the maximum norm follows by applying the
ordinary triangle inequality in both directions, and gives
\begin{equation*}
 \left\lvert
  \lVert\bmx-\bmx_{\bmi}\rVert_\infty
  -\lVert\bmy-\bmx_{\bmi}\rVert_\infty
 \right\rvert
 \le\lVert\bmx-\bmy\rVert_\infty
 \le\lVert\bmx-\bmy\rVert_2.
\end{equation*}
Combining this estimate with
\eqref{eq:positive-part-holder} yields
\begin{equation*}
 \lvert\psi_{\bmi}(\bmx)-\psi_{\bmi}(\bmy)\rvert
 \le\frac{1}{2}\lVert\bmx-\bmy\rVert_2^\alpha.
\end{equation*}

\proofstep{3}{Verify that every binary choice produces a target in the H\"older ball.}

For each binary vector indexed by the centers,
\[
 \bmv=(v_{\bmi})_{\bmi\in\calJ}
 \in\{0,1\}^{\calJ},
\]
define $f_{\bmv}:[0,1]^d\to\R$ by
\begin{equation*}
 f_{\bmv}(\bmx)
 :=\sum_{\bmi\in\calJ}
 (2v_{\bmi}-1)\psi_{\bmi}(\bmx).
\end{equation*}
The set of binary choices has cardinality
\[
 \left\lvert\{0,1\}^{\calJ}\right\rvert
 =2^{\lvert\calJ\rvert}=2^{R^d}.
\]
At any point of the cube, at most one tent is nonzero.  We verify the global
H\"older estimate by considering the following mutually exclusive
possibilities for $\bmx$ and $\bmy$.
\begin{enumerate}[label={\textup{(\roman*)}}]
 \item If no tent is nonzero at either point, then
 $f_{\bmv}(\bmx)=f_{\bmv}(\bmy)=0$.

 \item If the same tent is nonzero at both points, or if exactly one of the two
 points has any nonzero tent value, then the signed sum at both points reduces
 to the same signed tent, with value zero at the inactive point in the latter
 case.  The estimate from Step~2 therefore gives
 \[
  \lvert f_{\bmv}(\bmx)-f_{\bmv}(\bmy)\rvert
  \le\frac{1}{2}\lVert\bmx-\bmy\rVert_2^\alpha.
 \]

 \item If two different tents are nonzero at the two points, then
 \[
  \lvert f_{\bmv}(\bmx)-f_{\bmv}(\bmy)\rvert
  \le\frac{1}{2}h^\alpha+\frac{1}{2}h^\alpha
  =h^\alpha.
 \]
 Their Euclidean distance is at least $2h$, and hence
 \[
 h^\alpha
 \le(2h)^\alpha
 \le\lVert\bmx-\bmy\rVert_2^\alpha.
 \]
 This estimate from the triangle inequality also covers the case in which the two
 active tents carry opposite signs.
\end{enumerate}
Thus every $f_{\bmv}$ is a function on $[0,1]^d$ with H\"older constant at
most one.  Moreover,
\[
 \lVert f_{\bmv}\rVert_{L^\infty([0,1]^d)}
 \le\frac{1}{2}h^\alpha\le1.
\]
Therefore
\[
 f_{\bmv}\in\calH_d^\alpha
 \qquad\text{for every }\bmv\in\{0,1\}^{\calJ}.
\]

\proofstep{4}{Show that accurate approximation realizes every binary labeling.}

At every center, all tents except one vanish and the remaining tent attains
its maximum.  Hence
\begin{equation*}
 f_{\bmv}(\bmx_{\bmi})
 =\frac{1}{2}h^\alpha(2v_{\bmi}-1).
\end{equation*}

Write
\begin{equation*}
 E
 :=\sup_{f\in\calH_d^\alpha}
   \inf_{g\in\calF}
   \lVert f-g\rVert_{L^\infty([0,1]^d)}.
\end{equation*}
Suppose, for contradiction, that $E<h^\alpha/4$.  For every binary vector
$\bmv$, the definition of the infimum gives a function
$g_{\bmv}\in\calF$ satisfying
\[
 \lVert f_{\bmv}-g_{\bmv}\rVert_{L^\infty([0,1]^d)}
 <\frac{1}{4}h^\alpha.
\]
No attainment assumption is used: the strict inequality
$\inf_{g\in\calF}\lVert f_{\bmv}-g\rVert_{L^\infty([0,1]^d)}<h^\alpha/4$
is enough to select such a $g_{\bmv}$.

If $v_{\bmi}=1$, then
\[
 g_{\bmv}(\bmx_{\bmi})
 >\frac{1}{2}h^\alpha-\frac{1}{4}h^\alpha
 =\frac{1}{4}h^\alpha>0.
\]
If $v_{\bmi}=0$, then
\[
 g_{\bmv}(\bmx_{\bmi})
 <-\frac{1}{2}h^\alpha+\frac{1}{4}h^\alpha
 =-\frac{1}{4}h^\alpha<0.
\]
Thus the approximation margin is strict on both sides of the fixed threshold
$0$, and
\begin{equation*}
 \one_{\{g_{\bmv}(\bmx_{\bmi})>0\}}
 =v_{\bmi}
\end{equation*}
for every $\bmi\in\calJ$.  As $\bmv$ ranges over
$\{0,1\}^{\calJ}$, every binary labeling of the $R^d$ centers is realized by
$\operatorname{Thr}_0(\calF)$.  Hence those centers are shattered by the
zero-threshold class.  By \eqref{eq:real-vcdim}, this would give
$R^d\le\VCdim(\calF)$, contradicting
$R^d>V\ge\VCdim(\calF)$.

We conclude that
\[
 E\ge\frac{1}{4}h^\alpha.
\]

\proofstep{5}{Express the tent height in terms of $V$.}

By the definition of $R$,
\[
 R
 \le V^{1/d}+1
 \le2V^{1/d},
\]
where the last inequality uses $V\ge1$.
Since $h=1/(4R)$,
\begin{align*}
 E
 &\ge\frac{1}{4}h^\alpha
 =4^{-(\alpha+1)}R^{-\alpha}\\
 &\ge
 4^{-(\alpha+1)}2^{-\alpha}V^{-\alpha/d}.
\end{align*}
This is \eqref{eq:bump-lower} with the stated value of
$\widetilde{c}_\alpha$.
\end{proof}

\Needspace{8\baselineskip}
\subsection{Completion of the minimax lower bound}
\label{sec:lower-completion}

\begin{proof}[Proof of Theorem~\ref{thm:lower}]
We now combine the two independent parts of the argument.  The
pseudo-dimension estimate controls the binary capacity of each fixed pair
$(\Phi,\calA)$, while the bump lemma turns that capacity restriction into a
quantitative approximation obstruction.  The resulting
constant is uniform over all admissible pairs, so the estimate survives the
two outer infima in \eqref{eq:minimax}.

The outer infimum in \eqref{eq:minimax} is taken only over
architectures whose input dimension is $d$.  Under the convention of
Section~\ref{sec:notation}, such an architecture has at least one hidden unit
and hence at least $(d+1)+2=d+3$ parameter slots.  Thus the admissible set is
empty when $P<d+3$.  More generally, if
\[
 \left\{\Phi\in\Arch_{\mathrm{par}}(P):d_\Phi=d\right\}
 =\varnothing,
\]
then the convention $\inf\varnothing=+\infty$ makes the theorem immediate.
We may therefore assume that this $d$-input admissible set is nonempty.

Fix an arbitrary architecture
\[
 \Phi\in\Arch_{\mathrm{par}}(P)
 \qquad\text{with}\qquad d_\Phi=d,
\]
and an arbitrary map $\calA\in\aff(M,P_\Phi)$.  The image $\calA(\R^M)$, and
hence the realized family $\calF_{\Phi,\calA}$, is nonempty.  Under the
architectural convention fixed in
Section~\ref{sec:notation}, $L_\Phi\in\N^+$ and every hidden width is positive.
Thus $\Phi$ has at least one hidden unit, and
Lemma~\ref{lem:affine-pdim} applies.  The family
\[
 \calF_{\Phi,\calA}
 =\left\{\Phi_{\bmtheta}:
 \bmtheta\in\calA(\R^M)\right\}
\]
consists of continuous functions on $\R^d$.  Set
\[
 V_0:=8P\min\{M,P\}.
\]
Because $M,P\in\N^+$, we have $V_0\ge1$.  Combining
\eqref{eq:vc-below-pdim} and \eqref{eq:affine-pdim-budget} gives
\[
 \VCdim(\calF_{\Phi,\calA})
 \le\Pdim(\calF_{\Phi,\calA})
 \le V_0.
\]
Thus Lemma~\ref{lem:holder-bump} applies with $V=V_0$; the
actual VC or pseudo-dimension need not be positive.

By \eqref{eq:realized-family}, minimizing over functions in
$\calF_{\Phi,\calA}$ is equivalent to minimizing over complete parameters
$\bmtheta\in\calA(\R^M)$ or, equivalently, over latent vectors $\bmxi\in\R^M$.
Thus the pairwise error in
\eqref{eq:pair-risk} satisfies
\[
 \calR_{\alpha,d}(\Phi,\calA)
 =
 \sup_{f\in\calH_d^\alpha}
 \inf_{g\in\calF_{\Phi,\calA}}
 \lVert f-g\rVert_{L^\infty([0,1]^d)}.
\]
Lemma~\ref{lem:holder-bump} therefore gives
\[
 \calR_{\alpha,d}(\Phi,\calA)
 \ge
 \widetilde{c}_\alpha
 V_0^{-\alpha/d}
 =
 \widetilde{c}_\alpha 8^{-\alpha/d}
 [P\min\{M,P\}]^{-\alpha/d}.
\]
Thus the arbitrary pair $(\Phi,\calA)$ satisfies
\begin{equation*}
 \calR_{\alpha,d}(\Phi,\calA)
 \ge
 \underline{c}_{\alpha,d}
 [P\min\{M,P\}]^{-\alpha/d},
\end{equation*}
where one may take
\begin{equation*}
 \underline{c}_{\alpha,d}
 :=4^{-(\alpha+1)}2^{-\alpha}8^{-\alpha/d}.
\end{equation*}

Since $(\Phi,\calA)$ was arbitrary, the preceding estimate holds for every
$\Phi\in\Arch_{\mathrm{par}}(P)$ with $d_\Phi=d$ and every
$\calA\in\aff(M,P_\Phi)$.  Hence, for each such $\Phi$,
\[
 \inf_{\calA\in\aff(M,P_\Phi)}
 \calR_{\alpha,d}(\Phi,\calA)
 \ge \underline{c}_{\alpha,d}
 [P\min\{M,P\}]^{-\alpha/d}.
\]
Taking the infimum over
$\{\Phi\in\Arch_{\mathrm{par}}(P):d_\Phi=d\}$ and using
\eqref{eq:minimax} proves \eqref{eq:main-lower}.
\end{proof}

\section{Conclusion}\label{sec:conclusion}

The matching upper and lower bounds determine the sharp approximation rate of
fully connected ReLU networks with affine latent parameterizations.  Although
the minimax problem ranges over all architectures with input dimension $d$ in
$\Arch_{\mathrm{par}}(P)$, the upper bound is already attained by networks
whose width depends only on $d$.  More precisely, under the assumptions of
Theorem~\ref{thm:upper}, one fixed pair
$(\Phi,\calA)$ satisfies
\[
 \left\lVert f-\Phi_{\calA(\bmxi_f)}\right\rVert_{L^\infty([0,1]^d)}
 \le(d+2)\omega_f\bigl(
 C_d[P\min\{M,P\}]^{-1/d}\bigr),
\]
for every $f\in C([0,1]^d)$, with a suitable target-dependent latent vector
$\bmxi_f$.  For the unit H\"older ball, combining this construction with the
matching lower bound gives
\[
 \calE_{\alpha,d}(M,P)
 \asymp_{\alpha,d}
 [P\min\{M,P\}]^{-\alpha/d}.
\]
In particular, for fixed $M_0\ge4$, the sharp rate is $P^{-\alpha/d}$ once
$P\ge\max\{M_0,P_d^\star\}$.  Thus a latent space of constant dimension can
already yield a vanishing worst-case error, while the fully saturated regime
$M\ge P$ yields the faster rate $P^{-2\alpha/d}$.

This rate captures a tradeoff between latent information and decoding
capacity.  The $M$ coordinates of $\bmxi_f$ carry target-dependent
information, while the fixed architecture uses its $P_\Phi\le P$ parameter
slots and budget-dependent depth to decode and route that information.  The
construction relies on exact real arithmetic for the latent coordinates, a
discontinuous target-to-latent encoding, and a decoder whose size grows with
the budget but remains independent of the target.  Finite-precision coding,
stability, and efficient optimization lie outside the scope of this
expressivity result.  Establishing sharp rates under finite-precision or
bounded-weight restrictions, or under explicit stability and optimization
requirements, is a natural direction for future work.  A second direction is
to extend the joint analysis to nonlinear generators under explicit
complexity and regularity budgets.

\begingroup
\small
\setlength{\bibsep}{5pt}
\bibliographystyle{plainnat}
\bibliography{references}
\endgroup

\end{document}